\definecolor{chaptercolor}{HTML}{1A254B}
\definecolor{darkblue}{HTML}{1A254B}
\definecolor{linkcolor}{HTML}{2B50AA}
\definecolor{citecolor}{HTML}{2B50AA}
\definecolor{lightlinkcolor}{HTML}{9A8F97}
\definecolor{darklinkcolor}{HTML}{1A254B}
\definecolor{pink}{HTML}{E05F60}
\definecolor{lightblue}{HTML}{A7BED3}
\definecolor{red}{HTML}{F2545B}
\definecolor{blue}{HTML}{2b50aa}
\theoremstyle{plain}
\newtheorem{theorem}{Theorem}[section]
\newtheorem{lemma}[theorem]{Lemma}
\newtheorem{corollary}[theorem]{Corollary}
\theoremstyle{definition}
\newtheorem{definition}[theorem]{Definition}
\newtheorem{assumption}[theorem]{Assumption}
\theoremstyle{remark}
\Crefname{assumption}{Assumption}{Assumptions}
\crefname{assumption}{Assumption}{Assumptions}
\renewcommand{\paragraph}{%
  \@startsection{paragraph}{4}%
  {\z@}{0ex \@plus 0ex \@minus 0ex}{-1em}%
  {\normalfont\normalsize\bfseries}
}
\newcommand{\inner}[2]{\left\langle#1, #2\right\rangle}
\DeclarePairedDelimiter{\ceil}{\lceil}{\rceil}
\newlist{lemenum}{enumerate}{1} 
\setlist[lemenum]{label=(\roman*),ref=\thelemma\,(\roman*),topsep=0pt}
\Crefname{lemenumi}{Lemma}{Lemmas}
\newlist{corenum}{enumerate}{1} 
\setlist[corenum]{label=(\roman*),ref=\thecorollary\,(\roman*),topsep=0pt}
\Crefname{corenumi}{Corollary}{Corollaries}
\newlist{thmenum}{enumerate}{1} 
\setlist[thmenum]{label=(\roman*),ref=\thetheorem\,(\roman*),topsep=0pt}
\Crefname{thmenumi}{Theorem}{Theorems}
\newlist{propenum}{enumerate}{1} 
\setlist[propenum]{label=(\roman*),ref=\thedefinition\,(\roman*),topsep=0pt}
\Crefname{propenumi}{Property}{Properties}
\newlist{assenum}{enumerate}{1} 
\setlist[assenum]{label=(\roman*),ref=\theassumption\,(\roman*),topsep=0pt}
\Crefname{assenumi}{Assumption}{Assumptions}
\pgfplotsset{compat=newest}
\NewDocumentCommand{\incfig}{mo}{
  \begin{center}
    \IfValueT{#2}{\def\svgwidth{#2}}{\def\svgwidth{\columnwidth}}
    \import{./figures/}{#1.pdf_tex}
  \end{center}
}
\NewDocumentCommand{\incplt}{O{\columnwidth}m}{%
  \begin{center}
    \adjustbox{width=#1}{\import{./plots/output/}{#2.pgf}}
  \end{center}
}
\newcommand{\neorl}{\textcolor{black}{\textsc{NeoRL}}\xspace}
\DeclareFontFamily{U}{mathb}{\hyphenchar\font45}
\DeclareFontShape{U}{mathb}{m}{n}{
      <5> <6> <7> <8> <9> <10> gen * mathb
      <10.95> mathb10 <12> <14.4> <17.28> <20.74> <24.88> mathb12
      }{}
\DeclareSymbolFont{mathb}{U}{mathb}{m}{n}
\DeclareMathSymbol{\Asterisk}      {2}{mathb}{"06}
\newcommand*{\abs}[1]{| #1 |}
\NewDocumentCommand{\norm}{sm}{\IfBooleanTF{#1}{\|#2\|}{\left\| #2 \right\|}}
\DeclareMathOperator*{\defeq}{\smash{\overset{\mathrm{def}}{=}}}
\DeclareMathOperator*{\argmax}{arg\,max}
\DeclareMathOperator*{\argmin}{arg\,min}
\DeclarePairedDelimiter\parentheses{(}{)}
\DeclarePairedDelimiter\brackets{[}{]}
\newcommand{\R}{\mathbb{R}}
\newcommand{\E}{\mathbb{E}}
\newcommand{\Rzero}{\mathbb{R}_{\geq 0}}
\renewcommand{\vec}[1]{{\bm{#1}}}
\newcommand{\mat}[1]{\bm{#1}}
\newcommand{\set}[1]{#1}
\NewDocumentCommand{\fnPr}{}{\mathbb{P}}
\RenewDocumentCommand{\Pr}{om}{\fnPr\IfValueT{#1}{_{#1}}\parentheses*{#2}}
\NewDocumentCommand{\Prsm}{om}{\fnPr\IfValueT{#1}{_{#1}}\parentheses{#2}}
\RenewDocumentCommand{\H}{mo}{\mathrm{H}\IfValueTF{#2}{\!\left[#1\ \middle|\ #2\right]}{\brackets*{#1}}}
\NewDocumentCommand{\Hsm}{mo}{\mathrm{H}\IfValueTF{#2}{[#1 \mid #2]}{\brackets{#1}}}
\NewDocumentCommand{\I}{mmo}{\mathrm{I}\IfValueTF{#3}{\!\left(#1;#2\ \middle|\ #3\right)}{\parentheses*{#1; #2}}}
\NewDocumentCommand{\Ism}{mmo}{\mathrm{I}\IfValueTF{#3}{(#1;#2 \mid #3)}{\parentheses{#1; #2}}}
\NewDocumentCommand{\ExpVal}{somo}{\ensuremath{\mathbb{E}\IfValueT{#2}{_{#2}}{} \IfBooleanTF{#1}{#3}{\IfValueTF{#4}{\!\left[#3\ \middle|\ #4\right]}{\brackets*{#3}}}}}
\NewDocumentCommand{\Esm}{somo}{\ensuremath{\mathbb{E}\IfValueT{#2}{_{#2}}{} \IfBooleanTF{#1}{#3}{\IfValueTF{#4}{\!\left[#3\ \middle|\ #4\right]}{\brackets{#3}}}}}
\NewDocumentCommand{\Var}{somo}{\mathrm{Var}\IfValueT{#2}{_{#2}}{} \IfBooleanTF{#1}{#3}{\IfValueTF{#4}{\!\left[#3\ \middle|\ #4\right]}{\brackets*{#3}}}}
\NewDocumentCommand{\Varsm}{somo}{\mathrm{Var}\IfValueT{#2}{_{#2}}{} \IfBooleanTF{#1}{#3}{\IfValueTF{#4}{\left[#3\ \middle|\ #4\right]}{\brackets{#3}}}}
\NewDocumentCommand{\Cov}{som}{\mathrm{Cov}\IfValueT{#2}{_{#2}}{} \IfBooleanTF{#1}{#3}{\brackets*{#3}}}
\NewDocumentCommand{\Cor}{som}{\mathrm{Cor}\IfValueT{#2}{_{#2}}{} \IfBooleanTF{#1}{#3}{\brackets*{#3}}}
\NewDocumentCommand{\grad}{e_}{\bm{\nabla}\IfValueT{#1}{_{\!\!#1}\,}}
\RenewDocumentCommand{\det}{m}{\left| #1 \right|}
\NewDocumentCommand{\tr}{m}{\mathrm{tr}\;#1}
\NewDocumentCommand{\diag}{som}{\mathrm{diag}\IfValueT{#2}{_{#2}}{}\,#3}
\NewDocumentCommand{\N}{somm}{\mathcal{N}\IfBooleanTF{#1}{\left(}{(}\IfValueT{#2}{#2;}{} #3, #4\IfBooleanTF{#1}{\right)}{)}}
\NewDocumentCommand{\GP}{omm}{\mathcal{GP}(\IfValueT{#1}{#1;}{} #2, #3)}
\DeclareMathOperator{\determinant}{det}
\newcommand{\wrt}{w.r.t.~}
\newcommand{\vzero}{\vec{0}}
\newcommand{\vf}{\vec{f}}
\newcommand{\vk}{\vec{k}}
\newcommand{\vm}{\vec{m}}
\newcommand{\vu}{\vec{u}}
\newcommand{\vw}{\vec{w}}
\newcommand{\vx}{\vec{x}}
\newcommand{\vhx}{\hat{\vec{x}}}
\newcommand{\vy}{\vec{y}}
\newcommand{\vz}{\vec{z}}
\newcommand{\veta}{\bm{\eta}}
\newcommand{\vmu}{\bm{\mu}}
\newcommand{\vphi}{\bm{\phi}}
\newcommand{\vpi}{\bm{\pi}}
\newcommand{\vsigma}{\bm{\sigma}}
\newcommand{\mI}{\mat{I}}
\newcommand{\mK}{\mat{K}}
\newcommand{\mM}{\mat{M}}
\newcommand{\mV}{\mat{V}}
\newcommand{\mX}{\mat{X}}
\def\setA{{\mathcal{A}}}
\def\setB{{\mathcal{B}}}
\def\setC{{\mathcal{C}}}
\def\setD{{\mathcal{D}}}
\def\setH{{\mathcal{H}}}
\def\setK{{\mathcal{K}}}
\def\setM{{\mathcal{M}}}
\def\setN{{\mathcal{N}}}
\def\setO{{\mathcal{O}}}
\def\setR{{\mathcal{R}}}
\def\setU{{\mathcal{U}}}
\def\setW{{\mathcal{W}}}
\def\setX{{\mathcal{X}}}
\def\setZ{{\mathcal{Z}}}
\title{\neorl: Efficient Exploration for Nonepisodic RL}
\author{Bhavya Sukhija\thanks{Correspondence to \texttt{sukhijab@ethz.ch}}\ , Lenart Treven, Florian Dörfler, Stelian Coros, Andreas Krause \\
ETH Zurich, Switzerland
}
\begin{document}

\maketitle

\begin{abstract}
\looseness=-1
We study the problem of nonepisodic reinforcement learning (RL) for nonlinear dynamical systems, where the system dynamics are unknown and the RL agent has to learn from a single trajectory, i.e., adapt online and without resets. 
This setting is ubiquitous in the real world, where resetting is impossible or requires human intervention.
We propose {\em \textbf{N}on\textbf{e}pisodic \textbf{O}ptimistic \textbf{RL} (\neorl)}, an approach based on the principle of optimism in the face of uncertainty. \neorl uses well-calibrated probabilistic models and plans optimistically \wrt the epistemic uncertainty about the unknown dynamics. Under continuity and bounded energy assumptions on the system, we
provide a first-of-its-kind regret bound of $\setO(\Gamma_T \sqrt{T})$ for general nonlinear systems with Gaussian process dynamics. We compare \neorl to other baselines on several deep RL environments and empirically demonstrate that \neorl achieves the optimal average cost while incurring the least regret.
\end{abstract}

\section{Introduction}\label{sec: introduction}
In recent years, data-driven control approaches, such as reinforcement learning (RL), have demonstrated remarkable achievements. However, most RL algorithms are devised for an episodic setting, where during each episode, 
the agent interacts in the environment for a predetermined episode length or until a termination condition is met. After the episode, the agent is reset back to an initial state from where the next episode commences. 
Episodes prevent the system from blowing up, i.e., maintain stability, while also restricting exploration to states that are relevant to the task at hand. Moreover, resets ensure that the agent explores close to the initial states and does not end up at undesirable parts of the state space that exhibit low reward.  In simulation, resetting is typically straightforward.  However, if we wish to enable agents to learn and adapt by interacting online with the real world, 
resets are often prohibitive since they typically involve manual intervention. Instead, agents should be able to 
learn autonomously~\citep{sharma2021autonomous} 
i.e., from a single trajectory. 
This problem is extensively studied in adaptive control~\citep{adaptivecontrol}, where classical works focus on controller design~\citep{lai1982least, lai1987asymptotically, krstic1992adaptive, krstic1995nonlinear, annaswamy2023adaptive} and not on the exploration/learning aspect of the problem. Only a few works consider these two aspects jointly~\citep{abbasi2011regret, cohen2019learning, dean2020sample, simchowitz2020naive,zhao2024data}. However, these works study linear systems with quadratic costs, i.e., the LQR setting. 
While several works in the deep RL community have also studied this problem,
(c.f.,~\cref{sec:related_work}), the theoretical results for this setting are fairly limited. In particular, theoretical results mostly exist for the finite state and action spaces~\citep{kearns2002near,
brafman2002r, jaksch10a} and  
the extension to nonlinear systems with continuous spaces is much less understood. In our work, 
we address this gap and propose a practical RL algorithm that is grounded in theory. In particular, we make the following contributions.
\paragraph{Contributions}
\begin{enumerate}[leftmargin=0.5cm]
 \item We propose, \neorl, a novel model-based RL algorithm based on the principle of optimism in the face of uncertainty. \neorl operates in a nonepisodic setting and picks
 average cost optimal policies optimistically w.r.t.~to the model's epistemic uncertainty. 
    \item We show that when the dynamics lies in a reproducing kernel Hilbert space (RKHS) of kernel $k$, \neorl exhibits a regret of $\setO(\Gamma_T \sqrt{T})$, where the regret, akin to prior work, is measured w.r.t~to the optimal average cost under known dynamics, $T$ is the number of environment steps, and $\Gamma_T$ the maximum information gain of kernel $k$~\citep{srinivas}. Our regret bound is similar to the ones obtained in the episodic setting~\citep{kakade2020information, curi2020efficient, sukhija2024optimistic, treven2024ocorl} and Gaussian process (GP) bandit optimization~\citep{srinivas, chowdhury2017kernelized, scarlett17a} and is sublinear for common kernel such as the exponential kernel. To the best of our knowledge, we are the first to obtain regret bounds for the setting.
    \item \looseness=-1
    We evaluate \neorl on several RL benchmarks against common model-based RL baselines. 
        Our experimental results demonstrate that \neorl consistently achieves sublinear regret, also when neural networks are employed instead of GPs for modeling dynamics. Moreover, in all our experiments, \neorl converges to the optimal average cost.
\end{enumerate}

\section{Problem Setting}\label{sec:problem setting}
We consider a discrete-time dynamical system with running costs $c$.
\begin{align}
    \vx_{t+1} = \vf^*(\vx_t, \vu_t) + \vw_t \label{eq:dynamics}, \ &(\vx_t, \vu_t) \in \setX \times \setU, \ \vx(0) = \vx_0 \\
    &c(\vx, \vu) \in \R_{\geq 0} \tag{Running cost}
\end{align}
 Here $\vx_t \in \setX \subseteq \R^{d_\vx}$ is the state, $\vu_t \in \setU \subseteq \R^{d_\vu}$ the control input, and $\vw_t \in \setW \subseteq \R^{\vw}$ the process noise. The dynamics $\vf^*$ are unknown and the cost $c$ is assumed to be known.

\paragraph{Task} In this work, we study the average cost RL problem~\citep{puterman2014markov}, i.e., we want to learn the
solution to the following minimization problem
\begin{equation}
A(\vpi^{*}, \vx_0) = \min_{\vpi \in \Pi} A(\vpi, \vx_0) = \min_{\vpi \in \Pi} \limsup_{T \to \infty} \frac{1}{T}\E_{\vpi} \left[ \sum^{T-1}_{t=0} c(\vx_t, \vu_t) \right].
  \label{eq:average cost formulation}
\end{equation}
Moreover, we consider the nonepisodic RL setting where the system starts at an initial state $\vx_0 \in \setX$ but never resets back during learning, that is, we seek to learn online from a single trajectory. After each step $t$ in the environment, the RL system receives a transition tuple $(\vx_t, \vu_t, \vx_{t+1})$ and updates its policy based on the data $\setD_t$ collected thus far during learning. 
The average cost formulation is common for the nonepisodic setting~\citep{jaksch10a, 
abbasi2011regret, cohen2019learning, dean2020sample,
simchowitz2020naive}, and the cumulative regret for the learning algorithm in this case is defined as
\begin{equation}
    R_T = \sum^{T-1}_{t=0} \E_{\vx_t, \vu_t|\vx_0} [c(\vx_t, \vu_t) - A(\vpi^{*}, \vx_0)].
    \label{eq: Regret Definition}
\end{equation}

Studying the average cost criterion for general continuous state-action spaces is challenging even when the dynamics are known, since the average cost exists only for special classes of nonlinear systems~\citep{average_cost_survey}. In the following, we impose assumptions on the dynamics and policy class $\Pi$ that enable our theoretical analysis. 
\subsection{Assumptions}
Imposing continuity on $\vf^*$ is quite common in the control theory~\citep{khalil2015nonlinear} and reinforcement learning literature~\citep{curi2020efficient,sussex2022model, sukhija2024optimistic}. To this end, for our analysis, we make the following assumption.
\begin{assumption}[Continuity of $\vf^*$ and $\vpi$]
\label{ass:lipschitz_continuity}
The dynamics model $\vf^*$ and all $\vpi \in \Pi$ are continuous. 

\end{assumption}
 Next, we make an assumption on the system's stochastic disturbances.
\begin{assumption}[Process noise distribution]
\looseness=-1
The process noise is i.i.d. Gaussian with variance $\sigma^2$, i.e., $\vw_t \stackrel{\mathclap{i.i.d}}{\sim} \setN(\vzero, \sigma^2\mI)$.
\label{ass:noise_properties}
\end{assumption}
Our analysis can be extended for the more general heteroscedastic case, where $\sigma$ depends on $\vx$. However, for simplicity, we focus on the homoscedastic setting. 
In the following, we make assumptions on our policy class. To this end, we first introduce the class of $\setK_{\infty}$ functions.
\begin{definition}[$\setK_{\infty}$-functions]
The function $\xi: \R_{\geq 0} \to \R_{\geq 0}$ is of class $\setK_{\infty}$, if it is continuous, strictly increasing, $\xi(0) = 0$ and $\xi(s) \to \infty$ for $s \to \infty$.
\end{definition}

\begin{assumption}[Policies with bounded energy]
We assume there exists $\kappa, \xi \in \setK_{\infty}$,
    positive constants $K, C_u, C_l$ with $C_u > C_l$,  and $\gamma \in (0, 1)$ such that for each $\vpi \in \Pi$ we have,
\begin{itemize}[leftmargin=*]
    \item[] \label{assumption: Stability} {\em Bounded energy:}
    There exists a Lyapunov function $V^{\vpi}: \setX \to [0, \infty)$ for which  $\forall \vx, \vx' \in \setX$,
    \begin{align*} |V^{\vpi}(\vx) - V^{\vpi}(\vx')| &\leq \kappa(\norm{\vx-\vx'}) \tag{uniform continuity}\\
C_l \xi(\norm{\vx}) &\leq V^{\vpi}(\vx) \leq     C_u \xi(\norm{\vx}) \tag{positive definiteness}\\
        \E_{\vx_+|\vx, \vpi}[V^{\vpi}(\vx_+)] &\leq \gamma V^{\vpi}(\vx) + K \tag{drift condition}
    \end{align*}
    where $\vx_+ = \vf^*(\vx, \vpi(\vx)) + \vw$.
    \item[] {\em Bounded norm of cost:}
    \begin{equation*}
        \sup_{\vx \in \setX} \frac{c(\vx, \vpi(\vx))}{1 + V^{\vpi}(\vx)} < \infty 
    \end{equation*}
     \item[] {\em Boundedness of the noise with respect to $\kappa$:}
     \begin{equation*}
\E_{\vw}\left[\kappa(\norm{\vw})\right] < \infty, \
\E_{\vw}\left[\kappa^2(\norm{\vw})\right] < \infty
    \end{equation*}
\end{itemize}
\label{ass:Policy class}
\end{assumption}
The drift condition states that the energy between two timesteps can increase at most by $K$. 
In particular, the Lyapunov function $V^{\vpi}$ can be viewed as an energy function for the dynamical system, and the bounded energy condition above ensures that the system is not ``blowing up''. We do not perceive this as restrictive for real-world engineered systems.
Other works that study learning nonlinear dynamics~\citep{foster2020learning, sattar2022non, lale2021model}  in the nonepisodic setting also make stability assumptions such as global exponential stability for their analysis. In similar spirit,
we make the bounded energy assumption for our policy class. 
The drift condition on the Lyapunov function is also used to study the ergodicity of Markov chains for continuous state spaces~\citep{meyn2012markov, hairer2011yet}, which is crucial for our analysis of the infinite horizon behavior of the system. 
Moreover, for a very rich class of problems, the drift condition is satisfied. We highlight this in the corollary below.
\begin{lemma}
Assume $\vf^*$ is uniformly continuous and for all $\vpi \in \Pi$, $\vx \in \setX$, $\norm{\vpi(\vx)} \leq u_{\max}$. Further assume, there exists $\vpi_s \in \Pi$ such that we have constants $K, C_u, C_l$ with $C_u > C_l$, $\gamma \in (0, 1)$, $\kappa, \alpha \in \setK_{\infty}$ and a Lyapunov function $V : \setX \to [0, \infty)$ for which $\forall \vx, \vx' \in \setX$,
\begin{align*}
    |V(\vx) - V(\vx')| &\leq \kappa(\norm{\vx-\vx'}) \\
    C_l \xi(\norm{\vx}) &\leq V(\vx) \leq     C_u \xi(\norm{\vx}) \\
        \E_{\vx_+|\vx, \vpi_s}[V(\vx_+)] &\leq \gamma V(\vx) + K,
    \end{align*}
    where $\vx_+ = \vf^*(\vx, \vpi(\vx)) + \vw$.
    Then, $V$ also satisfies the drift condition for all $\vpi \in \Pi$, i.e., is a Lyapunov function for all policies. 
    \label{cor: bounded energy for bounded actions}
\end{lemma}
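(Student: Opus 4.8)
The plan is to transfer the drift condition from the single policy $\vpi_s$ to an arbitrary $\vpi\in\Pi$ by bounding how much the one-step dynamics can change when we swap $\vpi_s$ for $\vpi$, and then absorbing this bounded perturbation into the additive constant of the drift condition using the uniform continuity of $V$. The contraction factor $\gamma$ will be left unchanged; only $K$ will grow.

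First I would bound the dynamics mismatch. Since every policy in $\Pi$ takes values in the ball of radius $u_{\max}$, for each $\vx\in\setX$ the inputs $(\vx,\vpi(\vx))$ and $(\vx,\vpi_s(\vx))$ agree in the state coordinate and differ by at most $2u_{\max}$ in the control coordinate. Uniform continuity of $\vf^*$ then yields, via a standard subdivision argument along the segment between the two controls, a finite constant $M$ — depending only on $u_{\max}$ and the modulus of continuity of $\vf^*$, and \emph{not} on $\vx$ or on the choice of $\vpi$ — such that
\[
  \sup_{\vx\in\setX}\norm{\vf^*(\vx,\vpi(\vx)) - \vf^*(\vx,\vpi_s(\vx))} \le M .
\]

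Next I would couple the two successor states. Fix $\vx$, draw $\vw$, and set $\vx_+^{\vpi} = \vf^*(\vx,\vpi(\vx))+\vw$ and $\vx_+^{s} = \vf^*(\vx,\vpi_s(\vx))+\vw$ with the same realization of $\vw$; then $\norm{\vx_+^{\vpi} - \vx_+^{s}}\le M$ deterministically. Uniform continuity of $V$ gives $V(\vx_+^{\vpi}) \le V(\vx_+^{s}) + \kappa(M)$, where $\kappa(M)<\infty$ since $\kappa$ is continuous. Taking expectations over $\vw$ and invoking the drift condition assumed for $\vpi_s$,
\[
  \E_{\vw}\!\left[V(\vx_+^{\vpi})\right] \le \E_{\vw}\!\left[V(\vx_+^{s})\right] + \kappa(M) \le \gamma V(\vx) + \bigl(K + \kappa(M)\bigr),
\]
which is exactly the drift condition for $\vpi$ with the same $\gamma\in(0,1)$ and the enlarged constant $K' = K + \kappa(M)$. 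Since the uniform-continuity and positive-definiteness conditions on $V$ do not reference the policy and hold by hypothesis, $V$ is a Lyapunov function for every $\vpi\in\Pi$.

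The main obstacle is the first step: one genuinely needs \emph{uniform} continuity of $\vf^*$ (plain continuity would not do, since on a noncompact $\setX$ the local modulus could degrade without bound) together with the uniform action bound $u_{\max}$ to obtain a mismatch bound $M$ that is independent of $\vx$. Once that uniform bound is in hand, the rest is a one-line application of the continuity of $V$ and linearity of expectation, with the coupling of the noise making the argument pathwise rather than distributional.
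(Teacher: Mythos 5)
Your proposal is correct and follows essentially the same route as the paper's proof: bound the one-step dynamics mismatch by $\kappa_{\vf^*}(2u_{\max})$ using uniform continuity of $\vf^*$ and the action bound, couple the two successor states through the same noise realization so the perturbation is pathwise, and absorb $\kappa$ of that bound into the additive drift constant while keeping $\gamma$ fixed. The only cosmetic difference is the order of operations (you bound the dynamics gap before applying the modulus $\kappa$ of $V$, whereas the paper applies $\kappa$ first and then bounds its argument), which yields the same enlarged constant $K + \kappa(\kappa_{\vf^*}(2u_{\max}))$.
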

We prove this lemma in \cref{sec:proofs}. Intuitively, if the inputs are bounded, the energy inserted into the system by another policy is also bounded. Nearly all real-world systems have bounded inputs due to the physical limitations of actuators. For these systems, it suffices if only one policy in $\Pi$ satisfies the drift condition.

The boundedness assumptions for the cost and the noise in \cref{ass:Policy class} are satisfied for a rich class of cost and $\setK_{\infty}$ functions.

Under these assumptions, we can show the existence of the average cost solution.
\begin{theorem}[Existence of Average Cost Solution]
    Let \cref{ass:lipschitz_continuity} -- \ref{assumption: Stability} hold. Consider any $\vpi \in \Pi$ and let $P^{\vpi}$ denote its transition kernel, i.e., 
$P^{\vpi}(\vx, \setA) = \Pr{\vx_+ \in \setA |\vx, \vpi(\vx)}$ for $\setA \subseteq \setX$.
    Then $P^{\vpi}$ admits a unique invariant measure $\bar{P}^{\vpi}$, and there exists
    $C_2, C_3 \in (0, \infty)$, $\lambda \in (0, 1)$ such that
    \begin{itemize}[leftmargin=*]
        \item[] \emph{Average Cost}:
        \begin{equation*}
A(\vpi) = \lim_{T \to \infty} \frac{1}{T}\E_{\vpi} \left[ \sum^{T-1}_{t=0} c(\vx_t, \vu_t) \right] = \E_{\vx\sim \bar{P}^{\vpi}} \left[c(\vx, \vpi(x))\right]
        \end{equation*}
        \item[] \emph{Bias Cost}: Letting $B(\vpi, \vx_0) = \lim_{T \to \infty} \E_{\vpi} \left[ \sum^{T-1}_{t=0} c(\vx_t, \vu_t) -  A(\vpi) \right]$ denote the bias, we have
        \begin{equation*}
 |B(\vpi, \vx_0)|  \leq C_2 (1 + V^{\vpi}(\vx_0))   \frac{1}{1 - \lambda}
        \end{equation*}
        for all $\vx_0 \in \setX$.
    \end{itemize}
    \label{thm: existence of average cost problem}
\end{theorem}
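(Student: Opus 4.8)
The plan is to read the statement as a $V$-uniform (geometric) ergodicity result for the Markov chain with kernel $P^{\vpi}$, and to derive it from the drift condition in \cref{ass:Policy class} together with a minorization condition that comes for free from the Gaussian process noise. Fix $\vpi \in \Pi$ and abbreviate $V = V^{\vpi}$, $P = P^{\vpi}$, $Pg(\vx) = \E_{\vx_+|\vx,\vpi}[g(\vx_+)]$; the constants $C_2, C_3, \lambda$ produced below may depend on $\vpi$ (uniformity over $\Pi$, when needed elsewhere, would follow from the fact that $\gamma, K, \kappa, \xi$ in \cref{ass:Policy class} are common to all $\vpi \in \Pi$). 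Positive-definiteness $C_l \xi(\norm{\vx}) \le V(\vx)$ together with $\xi \in \setK_\infty$ shows that every sublevel set $C_r = \{\vx \in \setX : V(\vx) \le r\}$ is bounded, hence has compact closure in $\R^{d_\vx}$.

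Next I would establish that the sets $C_r$ are small sets. By \cref{ass:lipschitz_continuity} the map $\vx \mapsto \vf^*(\vx, \vpi(\vx))$ is continuous, so by \cref{ass:noise_properties} the kernel $P(\vx, \cdot) = \N{\vf^*(\vx,\vpi(\vx))}{\sigma^2\mI}$ has a jointly continuous, everywhere strictly positive Lebesgue density $p(\vx, \vy)$. Fixing a bounded set $B$ of positive Lebesgue measure, $\delta := \inf_{\vx \in C_r,\, \vy \in B} p(\vx,\vy) > 0$ by compactness, whence $P(\vx, A) \ge \delta\,\mathrm{Leb}(A \cap B)$ for all $\vx \in C_r$; thus $C_r$ is a small set, and the chain is Lebesgue-irreducible and aperiodic since its one-step kernel has an everywhere-positive density. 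Choosing $r > K/(1-\gamma)$, the drift condition $PV \le \gamma V + K$ becomes a geometric drift toward the small set $C_r$.

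By the $V$-uniform ergodic theorem for Markov chains (Meyn--Tweedie; the version of Harris' theorem in \citep{hairer2011yet}), $P$ then admits a unique invariant probability measure $\bar P^{\vpi}$, and there are $\widetilde C < \infty$, $\lambda \in (0,1)$ with $\sup_{|g| \le 1+V}|P^t g(\vx) - \bar P^{\vpi}(g)| \le \widetilde C \lambda^t (1 + V(\vx))$ for all $t, \vx$. Integrating the drift against $\bar P^{\vpi}$ (with a truncation to handle the unbounded $V$) gives $\bar P^{\vpi}(V) \le K/(1-\gamma) < \infty$, and iterating it along the trajectory gives $\E_{\vx_t|\vx_0}[V(\vx_t)] \le \gamma^t V(\vx_0) + K/(1-\gamma)$. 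By ``bounded norm of cost'' there is $M < \infty$ with $c(\vx,\vpi(\vx)) \le M(1+V(\vx))$, so $A(\vpi) := \bar P^{\vpi}(c(\cdot,\vpi(\cdot)))$ is finite and, applying the ergodic bound to $g = c(\cdot,\vpi(\cdot))/M$, $|\E_{\vx_t|\vx_0}[c(\vx_t,\vpi(\vx_t))] - A(\vpi)| \le M\widetilde C \lambda^t (1 + V(\vx_0))$. Averaging over $t = 0,\dots,T-1$ and letting $T \to \infty$ (the Ces\`aro mean of a geometrically decaying sequence vanishes) shows the $\limsup$ defining $A(\vpi)$ in \cref{eq:average cost formulation} is in fact a limit equal to $\E_{\vx \sim \bar P^{\vpi}}[c]$. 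For the bias, $\E_{\vpi}[\sum_{t=0}^{T-1}(c(\vx_t,\vu_t) - A(\vpi))] = \sum_{t=0}^{T-1}(\E_{\vx_t|\vx_0}[c(\vx_t,\vpi(\vx_t))] - A(\vpi))$ is an absolutely convergent series as $T \to \infty$ with tail dominated by $\sum_{t \ge 0} M\widetilde C \lambda^t (1+V(\vx_0))$, so $B(\vpi,\vx_0)$ is well defined and $|B(\vpi,\vx_0)| \le \tfrac{M\widetilde C}{1-\lambda}(1+V^{\vpi}(\vx_0))$; set $C_2 = M\widetilde C$, with $C_3$ controlling the analogous geometric rate for the average-cost partial sums.

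The main obstacle is the ergodicity step: invoking (or, to be self-contained, reproving via a coupling/Lyapunov argument) the Harris--Meyn--Tweedie theorem and rigorously checking its hypotheses under these weak assumptions --- in particular that the sublevel sets of $V$ are genuinely petite (which uses compactness of $\overline{C_r}$ and uniform positivity of the Gaussian density on compacts, and needs a little care if $\setX$ is not closed) --- together with the truncation/monotone-convergence arguments needed to integrate the unbounded $V$ against $\bar P^{\vpi}$ and to interchange limit and summation in the bias term. The finite-moment conditions $\E_{\vw}[\kappa(\norm{\vw})], \E_{\vw}[\kappa^2(\norm{\vw})] < \infty$ in \cref{ass:Policy class} are what keep the relevant $V$- and $\kappa$-weighted integrals finite, and should be used wherever those appear.
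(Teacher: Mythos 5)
Your proposal is correct and follows essentially the same route as the paper: verify the drift condition from \cref{ass:Policy class} and a minorisation/small-set condition on a compact sublevel set of $V^{\vpi}$ (using continuity of $\vf^*,\vpi$ and the Gaussian noise), invoke a Harris-type $V$-uniform ergodicity theorem (the paper uses the Hairer--Mattingly version, which requires the level threshold $R > 2K/(1-\gamma)$ rather than your $K/(1-\gamma)$, a minor bookkeeping point when checking hypotheses), and then obtain the average cost and bias bounds from the geometric decay of $|\E_{(P^{\vpi})^t}[c] - \bar{P}^{\vpi}(c)|$ by summing the resulting geometric series. The only cosmetic difference is that you minorise against Lebesgue measure on a bounded set while the paper minorises against a Gaussian reference measure; both are standard and equivalent here.
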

\cref{thm: existence of average cost problem} is a crucial result for our analysis since it implies that the average cost is bounded and \textit{independent of the initial state} $\vx_0$. Furthermore, it also shows that the bias is bounded. 
The average cost criterion satisfies the following Bellman equation~\citep{puterman2014markov} below 
\begin{equation}
   B(\vpi, \vx) + A(\vpi) = c(\vx, \vpi(\vx)) + \E_{\vx_+}[B(\vpi, \vx_+)|\vx, \vpi]
   \label{eq: Bellman Equation Average Cost}
\end{equation}
Accordingly, the bias term plays an important role in the regret analysis (also notice its similarity to our regret term in \cref{eq: Regret Definition}).

Thus far, we have only made assumptions that make the average cost problem tractable. In the following, we make an assumption on the dynamics that allow us to learn it from data.  Moreover, we assume that at each step $n$ we learn a mean estimate $\vmu_n$ of $\vf^*$ and can quantify our uncertainty $\vsigma_n$ over the estimate.
More formally, we learn a well-calibrated statistical model of $\vf^*$ as defined below.
\begin{definition}[Well-calibrated statistical model of $\vf^*$, \cite{rothfuss2023hallucinated}]
\label{definition: well-calibrated model}
    Let $\setZ \defeq \setX \times \setU$.
    An all-time well-calibrated statistical model of the function $\vf^*$ is a sequence $\{\setM_{n}(\delta)\}_{n \ge 0}$, where
    \begin{align*}
        \setM_n(\delta) \defeq \left\{\vf: \setZ \to \R^{d_x} \mid \forall \vz \in \setZ, \forall j \in \set{1, \ldots, d_x}: \abs{\mu_{n, j}(\vz) - f_j(\vz)} \le \beta_n(\delta) \sigma_{n, j}(\vz)\right\},
    \end{align*}
    if, with probability at least $1-\delta$, we have $\vf^* \in \bigcap_{n \ge 0}\setM_n(\delta)$.
    Here, $f_{j}$, $\mu_{n, j}$ and $\sigma_{n, j}$ denote the $j$-th element in the vector-valued functions $\vf$, $\vmu_n$ and $\vsigma_n$ respectively, and $\beta_n(\delta) \in \Rzero$ is a scalar function that depends on the confidence level $\delta \in (0, 1]$ and which is monotonically increasing in $n$. 
\end{definition}
Next, we assume that $\vf^*$ resides in a Reproducing Kernel Hilbert Space (RKHS) of vector-valued functions and show that this is sufficient for us to obtain a well-calibrated model.
\begin{assumption}
We assume that the functions $f^*_j$, $j \in \set{1, \ldots, d_x}$ lie in a RKHS with kernel $k$ and have a bounded norm $B$, that is $\vf^* \in \setH^{d_x}_{k, B}$, with $\setH^{d_x}_{k, B} = \{\vf \mid \norm{f_j}_k \leq B, j=1, \dots, d_x\}$. Moreover, we assume that $k(\vz, \vz) \leq \sigma_{\max}$ for all $\vz \in \setZ$.
\label{ass:rkhs_func}
\end{assumption}
\cref{ass:rkhs_func} allows us to model $\vf^*$ with GPs for which the mean and epistemic uncertainty (${\bm \mu}_n(\vz) = [\mu_{n,j} (\vz)]_{j\leq d_x}$, and $\vsigma_n(\vz) = [\sigma_{n,j} (\vz)]_{j\leq d_x}$) have an analytical formula
\begin{equation}
\begin{aligned}
\label{eq:GPposteriors}
        \mu_{n,j} (\vz)& = {\bm{k}}_{n}^\top(\vz)({\bm K}_{n} + \sigma^2 \bm{I})^{-1}\vy_{1:n}^j 
        ,  \\
     \sigma^2_{n, j}(\vz) & =  k(\vx, \vx) - {\bm k}^\top_{n}(\vz)({\bm K}_{n}+\sigma^2 \bm{I})^{-1}{\bm k}_{n}(\vx),
\end{aligned}
\end{equation}
Here, $\vy_{1:n}^j$ corresponds to the noisy measurements of $f^*_j$, i.e., the observed next state from the transitions dataset $\setD_{1:n}$,
$\vk_n = [k(\vz, \vz_i)]_{i\leq nT}, \vz_i \in \setD_{1:n}$, and $\bm{K}_n = [k(\vz_i, \vz_l)]_{i, l\leq nT}, \vz_i, \vz_l \in \setD_{1:n}$ is the data kernel matrix. The restriction on the kernel $k(\vz, \vz) \leq \sigma_{\max}$ implies boundedness of $\vf^*$ and has also appeared in works studying the episodic setting for nonlinear systems~\citep{mania2020active, kakade2020information, curi2020efficient, sukhija2024optimistic, wagenmaker2023optimal}. We can also define $\vf^*$ such that $\vx_k = \vx_{k-1} + \vf^*(\vx_{k-1}, \vu_{k-1}) + \vw_{k-1}$ in which case the boundedness of $\vf^*$ captures many real-world systems.

\begin{lemma}[Well calibrated confidence intervals for RKHS, \citet{rothfuss2023hallucinated}]
    Let $\vf^* \in \setH_{k,B}^{d_x}$.
Suppose ${\vmu}_n$ and $\vsigma_n$ are the posterior mean and variance of a GP with kernel $k$, c.f., \Cref{eq:GPposteriors}.
There exists $\beta_n(\delta) \propto \sqrt{\Gamma_n}$, for which the tuple $(\vmu_n, \vsigma_n, \beta_n(\delta))$ is a well-calibrated statistical model of $\vf^*$.
\label{lem:rkhs_confidence_interval}
\end{lemma}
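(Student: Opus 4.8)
The plan is to reduce the statement to the standard self-normalized concentration bound for kernel ridge regression, applied one output coordinate at a time and then combined via a union bound over the $d_x$ output dimensions.

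First I would fix a coordinate $j \in \{1, \dots, d_x\}$ and view the resulting scalar regression problem with observations $y^j_i = f^*_j(\vz_i) + w_{i,j}$. By \cref{ass:noise_properties} the errors $w_{i,j}$ are i.i.d.\ $\setN(0,\sigma^2)$, hence $\sigma$-sub-Gaussian and a martingale-difference sequence with respect to the filtration generated by the transitions collected so far; note that the inputs $\vz_i = (\vx_i,\vu_i)$ are predictable \wrt this filtration, which is precisely what makes the argument valid even though the data is gathered adaptively. Since $\norm{f^*_j}_k \le B$ by \cref{ass:rkhs_func}, I would invoke the RKHS self-normalized bound of \citet{chowdhury2017kernelized}: with probability at least $1 - \delta/d_x$, simultaneously for all $n \ge 0$ and all $\vz \in \setZ$,
\begin{equation*}
  \abs{\mu_{n,j}(\vz) - f^*_j(\vz)} \le \Bigl( B + \sigma\sqrt{2\bigl(\Gamma_n + 1 + \log(d_x/\delta)\bigr)} \Bigr)\,\sigma_{n,j}(\vz),
\end{equation*}
where $\mu_{n,j}$ and $\sigma_{n,j}$ are exactly the GP posterior quantities of \cref{eq:GPposteriors} and $\Gamma_n$ is the maximum information gain of $k$ after $n$ steps; the assumption $k(\vx,\vx) \le \sigma_{\max}$ guarantees that $\sigma_{n,j}$ is well-defined and bounded and that $\Gamma_n < \infty$.

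Next I would union-bound over $j = 1, \dots, d_x$, so that with probability at least $1-\delta$ the displayed inequality holds for every coordinate $j$, every $n \ge 0$, and every $\vz \in \setZ$. Defining $\beta_n(\delta) \defeq B + \sigma\sqrt{2(\Gamma_n + 1 + \log(d_x/\delta))}$, this is exactly the statement $\vf^* \in \bigcap_{n \ge 0} \setM_n(\delta)$ from \cref{definition: well-calibrated model}, and monotonicity of $\beta_n(\delta)$ in $n$ is immediate since $\Gamma_n$ is nondecreasing. Hence $(\vmu_n, \vsigma_n, \beta_n(\delta))$ is an all-time well-calibrated statistical model of $\vf^*$.

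The one delicate point is bookkeeping of the ridge parameter: the textbook form of the self-normalized bound is usually stated with a regularizer that grows with the number of observations, whereas \cref{eq:GPposteriors} uses the fixed regularizer $\sigma^2$ tied to the noise level. I would handle this exactly as in \citet{rothfuss2023hallucinated}, i.e.\ rescale the kernel by $\sigma^{-2}$ and absorb the discrepancy into the constant (equivalently, appeal to the fixed-regularizer version of the inequality). This only affects the constants appearing in $\beta_n(\delta)$, not its growth rate, so it does not change the $\setO(\beta_T\sqrt{T\Gamma_T})$ regret downstream; everything else is a direct application of the method-of-mixtures martingale argument in the RKHS feature space, and I expect this regularizer matching to be the only step requiring real care.
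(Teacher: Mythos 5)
The paper does not prove this lemma; it imports it directly from \citet{rothfuss2023hallucinated} (which in turn rests on the self-normalized RKHS concentration bound of \citet{chowdhury2017kernelized}). Your reconstruction — coordinate-wise application of the anytime self-normalized bound with predictable inputs and conditionally sub-Gaussian noise, a union bound over the $d_x$ output dimensions with confidence $\delta/d_x$ each, and the resulting monotone $\beta_n(\delta)$ — is exactly the standard argument underlying the cited result, and your flagged caveat about matching the fixed regularizer $\sigma^2$ in \cref{eq:GPposteriors} to the regularizer in the textbook statement is the right (and only) point of real care.
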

\looseness=-1
In summary, in the RKHS setting, a GP is a well-calibrated model. 
For more general models like Bayesian neural networks (BNNs), methods such as \cite{kuleshov2018accurate} can be used for calibration. Our results can also be extended beyond the RKHS setting to other classes of well-calibrated models similar to~\cite{curi2020efficient}.
\clearpage
\section{\neorl}\label{sec:algorithm}
\begin{algorithm}[t]
    \caption{\textbf{\neorl:}  \textsc{Nonepisodic Optimistic RL}}
    \begin{algorithmic}[]
        \STATE {\textbf{Init:}}{ Aleatoric uncertainty $\sigma$, Probability $\delta$, Statistical model $(\vmu_0, \vsigma_0, \beta_0(\delta))$, $H_{0}$}
        \FOR{$n=1, \ldots, N$}{
            \vspace{-0.5cm}
            \STATE {
            \begin{align*}
                &\vpi_n = \argmin_{\vpi \in \Pi} \min_{\vf \in \setM_{n-1} \cap \setM_0} A(\vpi, \vf) \quad \hspace{5em} &&\text{\ding{228} Prepare policy} \\
                  &\mathcal{D}_n \leftarrow \textsc{Rollout}(\vpi_n) \text{ for } H_n \text{ steps (\cref{equation: definition of Hn})} \quad &&\text{\ding{228} Collect measurements for horizon $H_n$ } \\
                  &\text{Update } (\vmu_n, \vsigma_n, \beta_n) \leftarrow \setD_n \quad &&\text{\ding{228} Update statistical model $\setM_n$}
                \end{align*}
                }
              }
        \ENDFOR
    \end{algorithmic}
    \label{alg:neorl}
\end{algorithm}
In the following, we present our algorithm: \textbf{N}on\textbf{e}pisodic \textbf{O}ptimistic \textbf{RL} (\neorl) for efficient nonepisodic exploration in continuous state-action spaces. \neorl builds on recent advances in episodic RL~\citep{kakade2020information, curi2020efficient, sukhija2024optimistic, treven2024ocorl} and leverages the optimism in the face of uncertainty paradigm to pick policies that are optimistic w.r.t.~the dynamics within our calibrated statistical model as follows 
\begin{align}
    (\vpi_n, \vf_n) &\defeq \argmin_{\vpi \in \Pi,\; \vf \in \setM_{n-1} \cap \setM_0} A(\vpi, \vf).      \label{eq:exploration_op_optimistic}
\end{align}
\looseness=-1
Here, $\vf_n$ is a dynamical system such that the cost by controlling $\vf_n$ with its optimal policy $\vpi_n$ is the lowest among all the plausible systems from $\setM_{n-1} \cap \setM_0$. Note, from \cref{lem:rkhs_confidence_interval} we have that $\vf^* \in \setM_{n-1} \cap \setM_0$ (with high probability) and therefore the solution to \cref{eq:exploration_op_optimistic} gives an optimistic estimate for the average cost. We take the intersection of $\setM_{n-1}$ with $\setM_0$ to ensure that we maintain at least the same confidence about our model as at the beginning, i.e., $n=0$, during learning.

\neorl proceeds in the following manner. Similar to \cite{jaksch10a}, we bin the total time $T$ the agent spends interacting in the environment into $N$ ``artificial'' episodes. At each episode, we pick a policy according to \cref{eq:exploration_op_optimistic} and roll it out for $H_n$ steps on the system. Next, we use the data collected during the rollout to update our statistical model and continue to the next episode \textit{without resetting} the system back to the initial state $\vx_0$.

\paragraph{Picking the horizon $H_n$} 
The horizon $H_n$ regulates how long we roll out the policy or how often we update our statistical model. We propose the following selection criteria for $H_n$.
\begin{align}
\label{equation: definition of Hn}
    H_n &= \max\left(\widehat{H_n}, H_0\right), \notag \\
    \widehat{H_n} &= \argmax_{H \ge 1} H+1 \notag \\
    &\text{s.t.} \sum^{H}_{k=1} \sum^{d_x}_{j=1}\log\left(1 + \sigma^{-2} \sigma^2_{n-1, j}(\vz_{k, n})\right) \le \log(2),
\end{align}
where $H_0 > 0$ is a minimal horizon we want to maintain.
The last term in \cref{equation: definition of Hn} measures the information gain~\citep{sukhija2024optimistic} the agent obtains for a rollout of length $H$ at episode $n$. Crucially, we update our model once the agent has acquired more than one-bit of information, i.e., $\sum^{H}_{k=1} \sum^{d_x}_{j=1}\log\left(1 + \sigma^{-2} \sigma^2_{n-1, j}(\vz_{k, n})\right) > \log(2)$. Furthermore, We can keep track of the information gain online during rollouts and switch the policy once we have collected sufficient data/information.
The algorithm is summarized in \cref{alg:neorl}.

\subsection{Theoretical Results}
In the following, we study the theoretical properties for \neorl and provide a first-of-its-kind bound on the cumulative regret for the average cost criterion for general nonlinear dynamical systems. Our bound depends on the {\em maximum information gain} of kernel $k$~\citep{srinivas}, defined as
\begin{equation*}
    {\Gamma}_{T}(k) = \max_{\setA \subset \setX \times \setU; |\setA| \leq T}  \frac{1}{2}\log\det{\mI + \sigma^{-2} {\bm K}_T}.
\end{equation*}
$\Gamma_T$ represents the complexity of learning $\vf^*$ from $T$ data points and
is sublinear for a very rich class of kernels (e.g., $\setO(\log^{d_{x} + d_{u} +1}(T))$ for the exponential (RBF) kernel, $\setO((d_{x} + d_{u})\log(T))$ for the linear kernel). In \cref{sec:proofs}, we report the dependence of $\Gamma_T$ on $T$ in \cref{table: gamma magnitude bounds for different kernels}. 
\begin{theorem}[Cumulative Regret of \neorl]
Let \cref{ass:lipschitz_continuity} -- \ref{ass:rkhs_func} hold, and define $H_{0}$ as the smallest integer such that
\begin{equation*}
     H_{0} > \frac{\log\left(\sfrac{C_u}{C_l}\right)}{\log\left(\sfrac{1}{\gamma}\right)}.
\end{equation*}
Then with probability at least $1-\delta$, we have the following regret for \neorl
\begin{equation*}
        R_T \leq C(\vx_0, K, \gamma)\Gamma_T\sqrt{T}.
\end{equation*}
    
with $C(\vx_0, K, \gamma)$ being bounded constant for bounded $\norm{\vx_0}$, $K$, and $\gamma < 1$.
\label{thm: Regret bound neo rl}
\end{theorem}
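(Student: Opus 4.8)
The plan is to run the optimism-in-the-face-of-uncertainty argument episode by episode, using \cref{thm: existence of average cost problem} and the Bellman equation \cref{eq: Bellman Equation Average Cost} to turn per-episode regret into integrated model error, and then to close with an information-gain bound. Write $t_n$ for the first step of artificial episode $n$, so that episode $n$ is $[t_n,t_{n+1})$ with length $H_n=2^n H_0$, and $N=\Theta\!\left(\log_2(T/H_0+1)\right)$ since $\sum_{n\le N}H_n = H_0(2^{N+1}-2)$. Condition on the probability-$(1-\delta)$ event from \cref{lem:rkhs_confidence_interval} that $\vf^\ast\in\setM_{n-1}\cap\setM_0$ for all $n$, and split $R_T=\sum_{n=1}^{N}\bigl(\E[\sum_{t\in[t_n,t_{n+1})}c(\vx_t,\vu_t)]-H_nA(\vpi^\ast,\vx_0)\bigr)$. \emph{Optimism:} $(\vpi^\ast,\vf^\ast)$ is feasible in \cref{eq:exploration_op_optimistic}, so $A(\vpi_n,\vf_n)\le A(\vpi^\ast,\vf^\ast)=A(\vpi^\ast,\vx_0)$, where the last equality is the $\vx_0$-independence in \cref{thm: existence of average cost problem}. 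Hence it suffices to bound the on-policy excess $\E[\sum_{t\in[t_n,t_{n+1})}c(\vx_t,\vu_t)]-H_nA(\vpi_n,\vf_n)$ for each $n$.

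Next, a simulation-lemma step. Let $B_n(\cdot)=B(\vpi_n,\cdot\,;\vf_n)$ be the bias of $\vpi_n$ under the \emph{optimistic} model $\vf_n$ (here I invoke \cref{thm: existence of average cost problem} for $\vf_n$, which like $\vf^\ast$ is continuous and of bounded energy — this is one reason to intersect with $\setM_0$). The Bellman equation gives $A(\vpi_n,\vf_n)=c(\vx,\vpi_n(\vx))+\E_{\vx_+\sim\vf_n}[B_n(\vx_+)\mid\vx]-B_n(\vx)$ for all $\vx$; evaluating this along the \emph{true} trajectory generated by $\vpi_n$ on $\vf^\ast$, taking expectations and telescoping with the tower rule yields $\E[\sum_{t\in[t_n,t_{n+1})}c(\vx_t,\vu_t)]-H_nA(\vpi_n,\vf_n)=\E[B_n(\vx_{t_n})-B_n(\vx_{t_{n+1}})]+\sum_{t\in[t_n,t_{n+1})}\E\bigl[\E_{\vx_+\sim\vf^\ast}[B_n(\vx_+)\mid\vx_t]-\E_{\vx_+\sim\vf_n}[B_n(\vx_+)\mid\vx_t]\bigr]$. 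In the last (model-error) sum the two conditional expectations differ only through the mean of the next-state Gaussian, so a uniform-continuity/Lipschitz bound on $B_n$ — which I would prove as a supporting lemma from the series representation of the bias together with the modulus $\kappa$ and the cost bound in \cref{ass:Policy class} — reduces each summand to $\le L_B\norm{\vf^\ast(\vz_t)-\vf_n(\vz_t)}\le 2L_B\beta_{n-1}(\delta)\norm{\vsigma_{n-1}(\vz_t)}$, using $\vf^\ast,\vf_n\in\setM_{n-1}$.

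The crux is making the boundary terms and the constant $L_B$ uniformly $O(1)$ over the whole run, which is where the hypothesis $H_0>\log(C_u/C_l)/\log(1/\gamma)$ is used. By \cref{thm: existence of average cost problem} these quantities are controlled by $\sup_n\sup_{t\in[t_n,t_{n+1})}\E[V^{\vpi_n}(\vx_t)]$, so I need this finite. Within episode $n$ the drift condition iterates to $\E[V^{\vpi_n}(\vx_{t_{n+1}})]\le\gamma^{H_n}\E[V^{\vpi_n}(\vx_{t_n})]+K/(1-\gamma)$, and switching policies costs a factor $C_u/C_l$ because positive definiteness gives $V^{\vpi_{n+1}}\le(C_u/C_l)V^{\vpi_n}$ pointwise; so with $v_n:=\E[V^{\vpi_n}(\vx_{t_n})]$ we get $v_{n+1}\le(C_u/C_l)\gamma^{H_n}v_n+(C_u/C_l)K/(1-\gamma)$. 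The choice of $H_0$ makes $\rho:=(C_u/C_l)\gamma^{H_0}<1$, and since $H_n\ge H_0$ the recursion contracts, giving $v_n\le\max\!\bigl(C_u\xi(\norm{\vx_0}),\,(C_u/C_l)K/((1-\gamma)(1-\rho))\bigr)=:\bar V(\vx_0,K,\gamma)<\infty$; intra-episode, $\E[V^{\vpi_n}(\vx_t)]\le v_n+K/(1-\gamma)$. (This is essentially a global, episode-switching version of the intuition in \cref{cor: bounded energy for bounded actions}.) With $|B(\vpi,\vx)|\le C_2(1+V^{\vpi}(\vx))/(1-\lambda)$ and its $\vf_n$-analogue, every boundary term is then $O(1)$ and $L_B=O(1)$, uniformly in $n$.

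Finally, summation. The boundary terms contribute $N\cdot D(\vx_0,K,\gamma)=O(\log_2(T/H_0+1))$, the second term of the bound. The model-error terms contribute $\sum_n\sum_{t\in[t_n,t_{n+1})}2L_B\beta_{n-1}(\delta)\E\norm{\vsigma_{n-1}(\vz_t)}\le 2L_B\beta_T(\delta)\sum_{t=0}^{T-1}\E\norm{\vsigma_{n(t)-1}(\vz_t)}\le 2L_B\beta_T(\delta)\sqrt{T\,\E[\sum_t\norm{\vsigma_{n(t)-1}(\vz_t)}^2]}$ by monotonicity of $\beta_n$, Cauchy–Schwarz and Jensen, and then $\E[\sum_t\norm{\vsigma_{n(t)-1}(\vz_t)}^2]\le C_\Gamma\Gamma_T$ by an information-gain argument, which here requires the doubling horizon schedule to show that freezing the posterior within an episode (conditioning only on $\setD_{1:n-1}$) inflates the standard sequential bound by only a constant — this parallels the batched/lazy-update analyses used in episodic optimistic RL \citep{curi2020efficient, treven2024ocorl, kakade2020information}. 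Combining yields $R_T\le D_4(\vx_0,K,\gamma)\beta_T\sqrt{T\Gamma_T}+D_5(\vx_0,K,\gamma)\log_2(T/H_0+1)$, with finite constants exactly when $\norm{\vx_0}<\infty$, $K<\infty$, $\gamma<1$. I expect the main obstacle to be the uniform energy bound $\bar V$ across episode boundaries (the conceptual heart, and the reason for the lower bound on $H_0$), with the frozen-posterior information-gain bound and the bias-continuity lemma as the two supporting technical hurdles.
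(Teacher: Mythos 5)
Your overall architecture matches the paper's proof: optimism to replace $A(\vpi^\ast)$ by $A(\vpi_n,\vf_n)$, the Bellman equation for the bias of the \emph{optimistic} system to convert per-step excess cost into a telescoping boundary term plus a per-step model-error term, a uniform-in-$n$ energy bound obtained from the recursion $v_{n+1}\le (C_u/C_l)\gamma^{H_n}v_n+\mathrm{const}$ with $\nu=(C_u/C_l)\gamma^{H_0}<1$ (this is exactly the paper's \cref{lem:bounded first moment} and exactly where $H_0$ enters), and Cauchy--Schwarz plus an information-gain bound to sum the model errors to $O(\beta_T\sqrt{T\Gamma_T})$. The boundary terms giving $O(N)=O(\log_2(T/H_0+1))$ and the need to verify drift/minorisation for $\vf_n\in\setM_{n-1}\cap\setM_0$ are also handled the same way.

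The one genuine gap is your treatment of the model-error summand $\E_{\vx_+\sim\vf^\ast}[B_n(\vx_+)\mid\vx_t]-\E_{\vx_+\sim\vf_n}[B_n(\vx_+)\mid\vx_t]$ via a \emph{uniform} Lipschitz constant $L_B=O(1)$ for the (Gaussian-smoothed) bias. The bias only satisfies $|B_n(\vx)|\le C_2(1+V^{\vpi_n}(\vx))/(1-\hat\lambda)$, and the ergodicity machinery (Hairer--Mattingly) gives contraction in the weighted total-variation metric $\rho_\theta$, not a pointwise Lipschitz bound; the derivative of $\vmu\mapsto\E_{\vx\sim\setN(\vmu,\sigma^2\mI)}[B_n(\vx)]$ therefore grows like $1+V^{\vpi_n}$ at the evaluation point and is \emph{not} uniformly bounded over the state space. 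Controlling $\sup_n\sup_t\E[V^{\vpi_n}(\vx_t)]$ (first moments) is not enough: once $L_B$ is state-dependent you must keep it inside the sum, and the Cauchy--Schwarz step then requires $\E[(1+V^{\vpi_n}(\vx^n_k))^2]$ along the trajectory. The paper handles this by bounding the summand as $\sqrt{\E_{\zeta^n_{1,k}}[h^2]}\cdot\min\{\norm{\vf^\ast-\vf_n}/\sigma,1\}$ via the $\chi^2$-distance between the two Gaussian next-state laws (\citet[Lemma C.2]{kakade2020information}), and then proves a separate second-moment bound on $V^{\vpi_n}$ (\cref{lem:bounded_second_moment}), which uses $\E_{\vw}[\kappa^2(\norm{\vw})]<\infty$ from \cref{ass:Policy class} and the same $\nu<1$ recursion applied to $(V^{\vpi_n})^2$. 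Your plan is repairable along exactly these lines, but as written the claim ``$L_B=O(1)$ uniformly'' is the missing step, and the second-moment lemma it forces is absent from your outline.
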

\looseness=-1
\cref{thm: Regret bound neo rl} gives sublinear regret for a rich class of RKHS functions. Moreover, it also gives a minimal horizon $H_{0}$ that we need to maintain before switching to the next policy. Even for the linear case, fast switching between stable controllers can destabilize the closed-loop system. We ensure this does not happen in our case by having a minimal horizon of $H_{0}$. 
\cref{thm: Regret bound neo rl} can also be derived beyond the RKHS setting for a more general class of well-calibrated models. In this case, the maximum information gain is replaced by the model complexity from~\cite{curi2020efficient} (c.f., \cite{curi2020efficient, sukhija2024optimistic} for further detail).

In the following, we give an intuitive proof sketch for \cref{thm: Regret bound neo rl}. The detailed proof is provided in \cref{sec:proofs}.

\paragraph{Proof sketch} 
\looseness=-1
The proof can be split into three main steps. First, we show the ergodicity of the closed-loop system, a sufficient condition for showing the existence of the average cost and bias term, i.e.,~\cref{thm: existence of average cost problem}, for every policy $\vpi \in \Pi$ under \cref{ass:lipschitz_continuity} -- \ref{assumption: Stability}. For this, we use elementary results on Markov chains in measurable spaces from~\citet{meyn2012markov, hairer2011yet}. Second, we show that under \cref{ass:rkhs_func}, the optimistic system selected in \cref{eq:exploration_op_optimistic}, retains the same properties as the true system $\vf^*$, e.g., stability, and therefore also is ergodic. Crucial to show this is that the true system $\vf^*$ and the optimistic system $\vf_n$ are at most $\beta_n \vsigma_n$ apart. Finally, in the third step, we show that as we update our model and policy every $H_n$ steps, our selection criteria for the horizon (\cref{equation: definition of Hn}) retains the system properties from above, and our accumulated model uncertainties across $T$ environment steps grow with the rate $\Gamma_T$. For the latter, we use the analysis from \citet{kakade2020information} for the episodic case, to bound the deviation between the optimistic average cost and the true average cost.

\subsection{Practical Modifications}\label{sec:practical modification}
For testing \neorl, we make three modifications that simplify its deployment in practice in terms of implementation and computation time.
First, instead of adaptively selecting the horizon $H_n$ we pick a fixed horizon $H$ during the experiment. This makes the planning and training of the agent easier.
Next, we use a receding horizon controller, i.e., model predictive control (MPC)~\citep{mpc}, instead of directly optimizing for the average cost in \cref{eq:exploration_op_optimistic}. MPC is widely used to obtain a feedback controller for the infinite horizon setting. Moreover, while for linear systems, the Riccati equations~\citep{anderson2007optimal} provide an analytical solution to \cref{eq:average cost formulation}, no such solution exists for the nonlinear case and MPC is commonly used as an approximation. Further, under additional assumptions on the cost and dynamics, MPC also obtains a policy with bounded average cost, which is crucial for the nonepisodic case (c.f.,~\cref{assumption: Stability}). We use the iCEM optimizer for planning~\citep{iCem}. Finally, 
instead of optimizing over $\setM_n \cap \setM_0$, we optimize directly over $\setM_n$. This allows us to use the reparameterization trick from \cite{curi2020efficient} and obtain a simple and tractable optimization problem. In summary, for each step $t$ in the environment, we solve the following optimization problem
    \begin{align}
    &\min_{\vu_{0:H_{\text{MPC}}-1}, \veta_{0;H_{\text{MPC}}-1}} \E \left[\sum_{h=0}^{H_{\text{MPC}}-1} c(\vhx_h, \vu_h) \right],        \label{eq:MPC_practical} \\ \text{ s.t. }  \vhx_{h+1} &= \vmu_{n-1}(\vhx_h, \vu_h) + \beta_{n-1}(\delta)\vsigma_{n-1}(\vhx_h, \vu_h) \veta_h + \vw_h \notag \;  \text{ and } \vhx_0 = \vx_t. \notag
\end{align}
Here $H_{\text{MPC}}$ is the MPC horizon. We take the first input from the solution of the problem above, i.e., $\vu^*_0$, and execute this in the system. We then repeat this procedure for $H$ steps and then update our statistical model $\setM_n$.
The resulting optimization above considers a larger action space as it includes the hallucinated controls $\veta$ as additional input variables. The hallucinated controls are introduced through the reparameterization trick from~\citet{curi2020efficient} and are used to directly optimize over models in $\vf \in \setM_{n}$.
Moreover, the final algorithm can be seen as a natural extension to H-UCRL~\citep{curi2020efficient} for the nonepisodic setting. We 
summarize the algorithm in \cref{sec: Experimental details} \cref{alg:neorl_pratical}. 
Note while these modifications deviate from our theoretical analysis, empirically they work well for GP and BNN models, c.f.,~\cref{sec:experiments}.

\section{Experiments}
\label{sec:experiments}
\begin{figure}[ht]
    \centering
    \includegraphics[width=\textwidth]{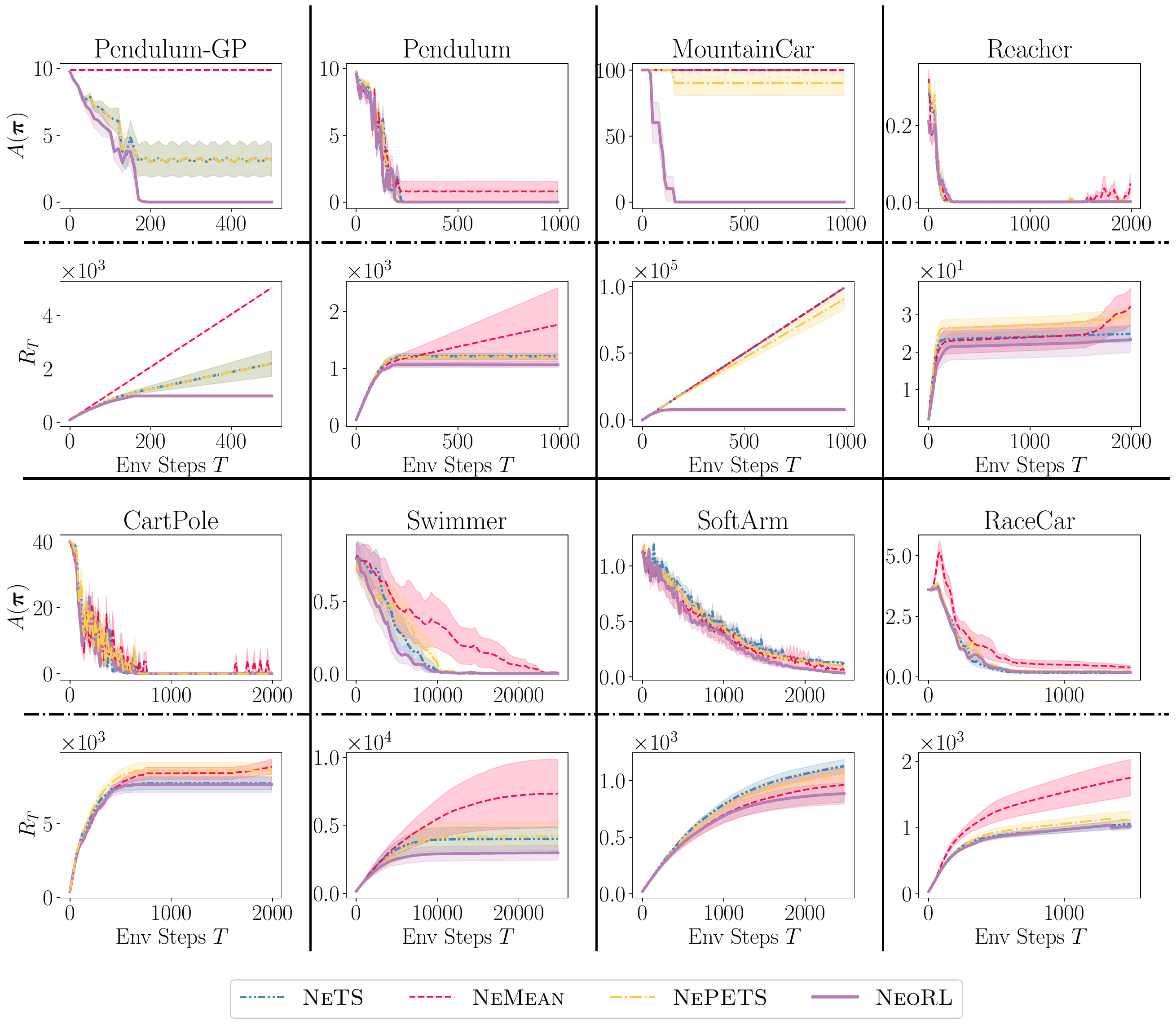}
    \caption{Average reward $A(\vpi)$ and cumulative regret $R_T$ over ten different seeds for all environments. We report the mean performance with one standard error as shaded regions. During all experiments, the environment is never reset. 
    For all baselines, we model the dynamics with probabilistic ensembles, except in the Pendulum-GP experiment, where GPs are used instead. \neorl significantly outperforms all baselines and converges to the optimal average reward, $A(\vpi^*) = 0$, showing sublinear cumulative regret $R_T$ for all environments.}
    \label{fig:average_performance_and_regret}
\end{figure}
We evaluate \neorl on the Pendulum-v1 and MountainCar environment from the OpenAI gym benchmark suite~\citep{brockman2016openai}, Cartpole, Reacher, and Swimmer from the DeepMind control suite~\citep{tassa2018deepmind}, the racecar simulator from~\cite{kabzan2020amz}, and a soft robotic arm from~\cite{arman_tekinalp_2024_10883271}. The 
swimmer and the soft robotic arm are fairly high-dimensional systems -- the
swimmer has a 28-dimensional state and 5-dimensional action space, and the
soft arm is represented by a 58-dimensional state and has a 12-dimensional action space. 
All environments are never reset during learning. Moreover, the Pendulum-v1, MountainCar, CartPole, and Reacher environments operate within a bounded domain and thus inherently satisfy \cref{assumption: Stability}. The swimmer, racecar, and soft arm can operate in an unbounded domain
but have a cost function that penalizes the distance between the system's state $\vx_t$ and a target state $\vx^*$. Therefore, the cost encourages the system to move towards the target and remain within a bounded domain. 
\paragraph{Baselines} 
\looseness=-1
In the episodic setting, resets can be used to control the exploration space for the agent. However, in the absence of resets, the agent can explore arbitrarily and end up in states that are irrelevant to the task at hand. Moreover, the agent has to follow an uninterrupted chain of experience, which makes the nonepisodic setting
the most challenging one in RL~\citep{kakade2003sample}. 
Accordingly, 
there are only a few algorithms that consider this setting (c.f.,~\cref{sec:related_work}). 
In this work, we focus on model-based RL (MBRL) algorithms due to their sample efficiency.
In particular, we adopt common MBRL methods for our setting.
MBRL algorithms typically differentiate in three ways; (\emph{i}) propagating dynamics for planning~\citep{chua2018pets, osband2017posterior, kakade2020information, curi2020efficient}, (\emph{ii}) representation of the dynamics model~\citep{ha2018recurrent, hafner2019learning, kipf2019contrastive}, and (\emph{iii}) types of planners~\citep{williams2017information, hafner2019dream, iCem}. \neorl is independent to the choice of representation or planners.
Therefore, we focus on (\emph{i}) and use probabilistic ensembles~\citep{lakshminarayanan2017simple} and GPs for modeling our dynamics and MPC with iCEM~\citep{iCem} as the planner. Common techniques to propagate the dynamics for planning are using the mean, trajectory sampling~\citep{chua2018pets}, and Thompson sampling~\citep{osband2017posterior}. We adapt these three for our setting similar to as discussed in \cref{sec:practical modification}.
For all experiments with probabilistic ensembles, we consider TS1 from~\citet{chua2018pets} for trajectory sampling, and for the GP experiment, we use
distribution sampling from \citet{chua2018pets}. We call the three baselines
\textsc{NeMean} (nonepisodic mean), \textsc{NePETS} (nonepisodic PETS), and \textsc{NeTS} (nonepisodic Thompson sampling). 
\textsc{NeMean} and \textsc{NePETS} are greedy \wrt the current estimate of the dynamics, i.e., do not explicitly encourage exploration. In our experiments, we show that being greedy does not suffice to converge to the optimal average cost, that is, obtain sublinear regret. The code for our experiments is available online.\footnote{\url{https://github.com/lasgroup/opax/tree/neorl}}
\paragraph{Convergence to the optimal average cost}
In \cref{fig:average_performance_and_regret} we report the normalized average cost and cumulative regret of \neorl, \textsc{NeMean}, \textsc{NePETS}, and \textsc{NeTS}. The normalized average cost is defined such that $A(\vpi^*) = 0$ for all environments. 
We observe that \textsc{NeMean} fails to converge to the optimal average cost for the Pendulum-v1 environment for both probabilistic ensembles and a GP model. It also fails to solve the MountainCar environment and is unstable for the Reacher and CartPole. In general, \textsc{NeMean} performs the worst among all methods. This is similar to the episodic case, where using the mean model often leads to the policy ``overfitting'' to the model inaccuracies~\citep{chua2018pets}. \textsc{NePETS} performs better than the mean, however still significantly worse than \neorl. Even in the episodic setting, PETS tends to underexplore~\citep{curi2020efficient}. 
We observe the same for the nonepisodic case, especially for the MountainCar task, which is a challenging RL environment with a sparse cost.
Here \textsc{NePETS} is also not able to achieve the optimal average cost and thus does not have sublinear cumulative regret. \textsc{NeTS} performs similarly to \textsc{NePETS} and is also not able to solve the MountainCar task.

\neorl performs the best among the baselines for all experiments and converges to the optimal average cost achieving sublinear cumulative regret using only $\sim 10^3$ environment interactions. Moreover, this observation is consistent between different dynamics models (GPs and probabilistic ensembles) and environments. Even in environments that are unbounded, i.e., Swimmer, SoftArm, and RaceCar, we observe that \neorl converges to the optimal average cost the fastest. We believe this is due to the feedback control from MPC, which has a stabilizing effect.

\paragraph{Calling reset when needed}
\looseness=-1
All the experiments in \cref{fig:average_performance_and_regret} considered the nonepisodic setting where the system was never reset during learning. A special case of our theoretical analysis is the class of policies $\Pi$ that may call for a reset / ``ask for help'' whenever they end up in an undesirable part of the state space. 
In this setting, the system is typically restricted to a compact subset of the state space $\setX$, and the policy class satisfies~\cref{assumption: Stability}. For many real-world applications, such a policy class can be derived. To simulate this experiment, we consider the CartPoleBalance task in \cref{fig:cartpole_balance_with_reset}, where the goal is to balance the pole in the upright position. A reset is triggered whenever the pole drops. We again observe that \neorl achieves the best performance, i.e., lowest cumulative regret and thus learns to solve the task the fastest. Moreover, it also requires fewer resets than \textsc{NeMean}, \textsc{NePETS}, and \textsc{NeTS}. 
\begin{figure}[ht]
    \centering
    \includegraphics[width=0.75\textwidth]{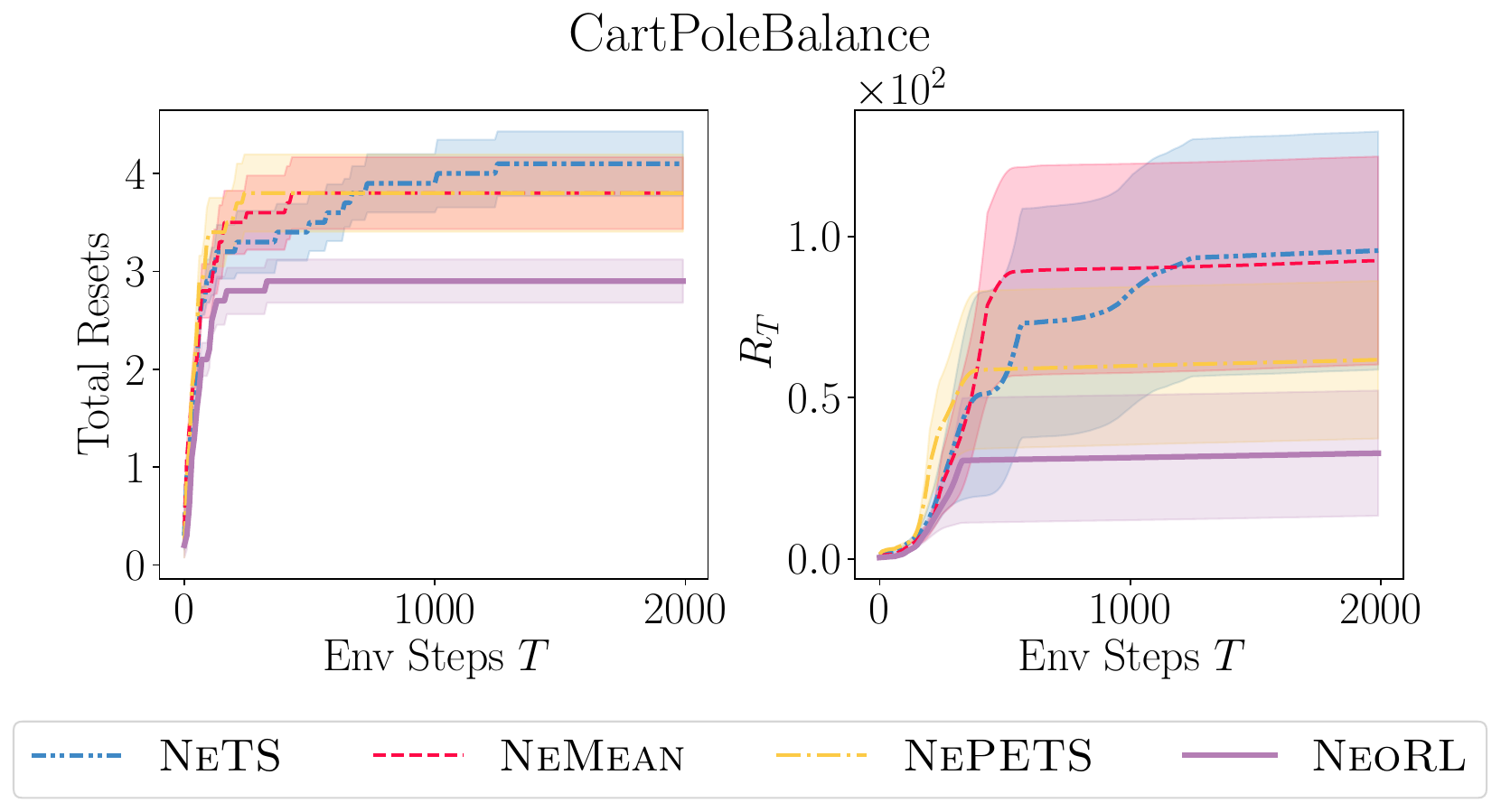}
    \caption{Total number of resets and cumulative regret $R_T$ for the cart pole balancing task over ten different seeds. We report the mean performance with one standard errors as the shaded region. The environment is automatically reset whenever the agent drops the pole. All baselines solve the task, but \neorl converges the fastest requiring fewer resets and suffering smaller regret.}
    \label{fig:cartpole_balance_with_reset}
\end{figure}
\section{Related Work}\label{sec:related_work}
\looseness=-1
\paragraph{Average cost RL for finite state-action spaces}
A significant amount of work studies the average cost/reward RL setting for finite-state action spaces. Moreover, seminal algorithms such as $\text{E}^3$~\citep{kearns2002near} and $\text{R-}\max$~\citep{brafman2002r} have established PAC bounds for the nonepisodic setting. These bounds are further improved for communicating MDPs by the UCRL2~\citep{jaksch10a} algorithm, which, similar to \neorl, is based on the optimism in the face of uncertainty paradigm and picks policies that are optimistic w.r.t.~to the estimated dynamics. 
Their result is extended for weakly-communicating MDPs by REGAL~\citep{bartlett2012regal}, similar results are derived for Thompson sampling based exploration~\citep{ouyang2017learning}, and for factored-MDP~\citep{xu2020reinforcement}. 
Albeit the significant amount of work for the finite case,
progress for continuous state-action spaces has mostly been limited to linear dynamical systems.

\paragraph{Nonepisodic RL for linear systems}
There is a large body of work for nonepisodic learning with linear systems~\citep{abbasi2011regret, cohen2019learning, simchowitz2020naive, dean2020sample, lale2020logarithmic, faradonbeh2020optimism, abeille2020efficient,
treven2021learning}. For linear systems with quadratic costs, the average reward problem, also known as the linear quadratic-Gaussian (LQG), has a closed-form solution which is obtained via the Riccati equations~\citep{anderson2007optimal}.
Moreover, for LQG, stability and optimality are intertwined, making studying linear systems much easier than their nonlinear counterpart. For studying nonlinear systems, additional assumptions on their stability are usually made. 

\paragraph{Episodic RL for nonlinear systems}
\looseness=-1
In the case of nonlinear systems, guarantees have mostly been established for the episodic setting~\citep{mania2020active, kakade2020information, curi2020efficient, wagenmaker2023optimal, sukhija2024optimistic, treven2024ocorl}. In this setting, the agent begins each episode from an initial state $\vx_0$ (or initial state distribution) and
interacts with the environment for a fixed horizon $H$. It uses the data collected from the interactions to update its model. After each episode, the agent is reset back to $\vx_0$. The works mentioned above theoretically study this setting for finite-horizon MDPs and establish regret bounds for general nonlinear systems. Particularly~\citet{kakade2020information, curi2020efficient, sukhija2024optimistic, treven2024ocorl} also use an optimism-based approach similar to ours. Compared to the nonepisodic case, the analysis of episodic RL methods is simpler as resets restrict the agent's exploration around the initial state $\vx_0$ and prevent the system from blowing up or visiting states from which the agent cannot recover. 
However, as discussed in \cref{sec: introduction}, resets are often prohibitive and RL agents that learn non-episodically are preferred for many real-world applications.  

\paragraph{Nonepisodic RL beyond linear systems}
\looseness=-1
Only a few works consider the nonepisodic/single-trajectory case. For instance, a line of work studies data-driven MPC approaches focusing mostly on establishing system-theoretic guarantees such as closed-loop stability and robustness~\citep{ berberich2024overviewsystemstheoreticguaranteesdatadriven}.
From the learning side,~\cite{foster2020learning, sattar2022non} study the problem of system identification of a closed-loop globally exponentially stable dynamical system from a single trajectory. \cite{lale2021model} study the nonepisodic setting for nonlinear systems with MPC. Moreover, they consider finite-order or exponentially fading NARX systems that lie in the RKHS of infinitely smooth functions, which they further approximate with random Fourier features~\citep{rahimi2007random} $\vphi$ with feature size $D$. Further, they assume access to bounded persistently exciting inputs w.r.t.~
the feature matrix $\bm{\Phi}_t \bm{\Phi}_t^{\intercal}$. This assumption is generally tough to verify and common excitation strategies such as random exploration often don't perform well for nonlinear systems~\citep{sukhija2024optimistic}. 
The algorithm also operates in two stages, where in the first stage it performs pure exploration for system identification and in the second stage exploitation, i.e., acting greedily w.r.t.~the estimated dynamics, akin to \textsc{NeMean}. Additionally, the algorithm
requires the feature size $D$ to increase with the horizon $T$. They give a regret bound of $\setO\left(T^{\sfrac{2}{3}}\right)$ where the regret is measured w.r.t.~to the oracle MPC with access to the true dynamics. \cite{lale2021model} also assume exponential input-to-output stability of the system to avoid blow-up during exploration. Our work considers more general RKHS, naturally trades-off exploration and exploitation, does not require apriori knowledge of persistently exciting inputs and gives a regret bound of $\setO(\Gamma_T \sqrt{T })$ w.r.t.~the optimal average cost criterion. Moreover, our regret bound is similar to the ones obtained for nonlinear systems in the episodic case and Gaussian process bandits~\citep{srinivas, chowdhury2017kernelized, scarlett17a}. To the best of our knowledge, we are the first to give such a regret bound for nonlinear systems.

\paragraph{Nonepisodic Deep RL}
\looseness=-1
Standard deep RL approaches often fail in the nonepisodic setting~\citep{sharma2021autonomous}. To this end, deep RL algorithms have also been developed for the nonepisodic case. Mostly, these works focus on learning to reset and formulate it from the perspective of safety~\citep{eysenbach2017leave} (avoiding undesirable states), chaining multiple controllers~\citep{han2015learning}, skill discovery/intrinsic exploration~\citep{zhu2020ingredients, xu2020continual}, curriculum learning~\citep{sharma2021curiculum}, and learning initial state distributions from demonstrations~\citep{sharma2022state}. 
However, in contrast to us, none of the works above provide any theoretical guarantees. 

There are several extensions of model-free deep RL algorithms to the average reward setting (TRPO~\citep{atrpo}, PPO~\citep{ma2021average}, and DDPG~\citep{saxena2023off}). However, they mostly focus on maximizing the long-term behavior of the RL agent and allow for resets during learning. 
Overall, extending RL algorithms for the discounted case to the average one is still an open problem~\citep{dewanto2020average}. However, future work in this direction will benefit \neorl. Since average-reward optimizers can be used in combination with \neorl to directly minimize the average cost in a model-based policy optimization~\citep{janner2019trust} manner.




\section{Conclusion}\label{sec:limitations}
We propose, \neorl,  a novel model-based RL algorithm for the nonepisodic setting with nonlinear dynamics and continuous state and action spaces. \neorl seeks for average-cost optimal policies and leverages the model's epistemic uncertainty to perform optimistic exploration. Similar to the episodic case~\citep{kakade2020information, curi2020efficient},  we provide a regret bound for \neorl of $\setO(\Gamma_T \sqrt{T})$ for Gaussian process dynamics. To our knowledge, we are the first to obtain this result in the nonepisodic setting. We compare \neorl to other model-based RL methods on standard deep RL benchmarks. Our experiments demonstrate that \neorl, converges to the optimal average cost of $A(\vpi^*) = 0$ across all environments, suffering sublinear regret even when Bayesian neural networks are used to model the dynamics. Moreover, \neorl outperforms all our baselines across all environments requiring only $\sim 10^3$ samples for learning. 

Future work may consider deriving lower bounds on the regret of \neorl, studying different assumptions on $\vf^*$ and $\Pi$, 
and investigating different notions of optimality such as bias optimality in the nonepisodic setting~\citep{mahadevan1996average}.

\begin{ack}
We would like to thank Mohammad Reza Karimi, Scott Sussex, and Armin Lederer for the insightful discussions and feedback on this work.
This project has received funding from the Swiss National Science Foundation under NCCR Automation, grant agreement 51NF40 180545, and the Microsoft Swiss Joint Research Center.
\end{ack}


{\small\bibliography{sources}}
\bibliographystyle{icml2024} 


\newpage
\appendix
\section*{\LARGE Appendices}

\section{Bounding the epistemic uncertainties with maximum information gain}
\label{section: Doubling Determinant Proof}

In this section, we prove the following lemma
\begin{lemma}
\label{lemma: sum of uncertainties bound}
    \begin{equation*}
    \sqrt{\sum^{N-1}_{n=0} \sum^{H_n-1}_{k=0} \E_{\vx^{n}_{k}, \dots \vx^{0}_{1}|  \vx_0}\left[\norm{\vsigma_n(\vx^{n}_k, \vpi_n(\vx^n_k))}^2\right]}
    \leq C'\sqrt{\Gamma_T(k)}
\end{equation*}
\end{lemma}

We update the statistical model after $H_0, H_1, \ldots$ number of environment steps. We define $H_0 = \ceil{\frac{\log\left(\sfrac{C_u}{C_l}\right)}{\log\left(\sfrac{1}{\gamma}\right)}}$, and for $n \ge 1$, we define $H_n$ as follows:
\begin{align*}
    H_n &= \max\left(\widehat{H_n}, H_0\right), \\
    \widehat{H_n} &= \argmax_{H \ge 1} H+1 \\
    &\text{s.t.} \sum^{H}_{k=1} \sum^{d_x}_{j=1}\log\left(1 + \sigma^{-2} \sigma^2_{n-1, j}(\vz_{k, n})\right) \le \log(2).
\end{align*}

For the ease of notation we denote $\vz_{k, n} = (\vx^{n}_k, \vpi_n(\vx^n_k)$). For $\vz$ we define the kernel embedding $k_{\vz} = k(\vz, \cdot)$. The covariance matrix $\mV_t: \setH \to \setH$ in the feature form is:
\begin{align}
    \mV_{t} = \mI + \frac{1}{\sigma^2}\sum_{i=1}^tk_{\vz_i}k_{\vz_i}^\top.
\end{align}
Note that we have $\vx_{t+1} = \inner{k_{\vz_t}}{\vf^*}_\setH + \vw_t$. With the design matrix $\mM_t: \setH \to \R^{t}$
\begin{align}
    \mM_t = 
    \begin{pmatrix}
     k_{\vz_1} & k_{\vz_2} & \cdots & k_{\vz_t}   
    \end{pmatrix}
\end{align}
we have $\mV_t = \mI + \frac{1}{\sigma^2}\mM_t\mM_t^\top$ and since $\mK_t = \mM_t^\top\mM_t$ we have
\begin{align}
    \determinant(\mV_t) = \determinant\left(\mI + \frac{1}{\sigma^2}\mK_{t}\right)
\end{align}
We first show that for our choice of $H_n$ the ratio between $\frac{\determinant(\mV_{n})}{\determinant(\mV_{n-1})}$ is bounded.

\begin{corollary}[Lower bound on the posterior log determinant]
\begin{equation}
   \log\left(\determinant(\mV_{n})\right) \geq \log\left(\determinant(\mV_{n-1})\right) + \log\left(1 + \sigma^{-2}\sum^{H_n}_{k=1} \norm{\vsigma_{n-1}(\vz_{k, n})}^2\right)
\end{equation}
In particular, we have
\begin{equation}
   \log\left(\frac{\determinant(\mV_{N})}{\determinant(\mV_{0})}\right) \geq \sum^N_{n=1}\log\left(1 + \sigma^{-2}\sum^{H_n}_{k=1} \norm{\vsigma_{n-1}(\vz_{k, n})}^2\right)
\end{equation}
\label{cor: recursive info gain bound}
\end{corollary}
\begin{proof}
    \begin{align*}
        &\log\left(\determinant(\mV_{n})\right) =\log\left(\determinant(\mV_{n-1})\right)  \\
        &+ \log\left(\det{\mI + \sigma^{-2}\mV^{-\sfrac{1}{2}}_{n-1}\sum^{H_n}_{k=1} \vk_{\vz_{k, n}} \vk^{\top}_{\vz_{k, n}} \mV^{-\sfrac{1}{2}}_{n-1}}\right) \\
        &\geq \log\left(\determinant(\mV_{n-1})\right) \\ &+ \log\left(1 + \tr\left(\sigma^{-2}\mV^{-\sfrac{1}{2}}_{n-1}\sum^{H_n}_{k=1} \vk_{\vz_{k, n}} \vk^{\top}_{\vz_{k, n}} \mV^{-\sfrac{1}{2}}_{n-1}\right)\right) \tag{see (*) below} \\
        &= \log\left(\determinant(\mV_{n-1})\right) +  \log\left(1 + \sigma^{-2}\sum^{H_n}_{k=1} \norm{\vk_{\vz_{k, n}}}_{\mV^{-1}_{n-1}}^2\right) \\
        &= \log\left(\determinant(\mV_{n-1})\right) +  \log\left(1 + \sigma^{-2}\sum^{H_n}_{k=1} \norm{\vsigma_{n-1}(\vz_{k, n})}^2\right)
    \end{align*}

    We prove (*) in the following, 
    first let $\vm_{k} = \sigma^{-1}\mV^{-\sfrac{1}{2}}_{n-1}\vk_{\vz_{k, n}}$, then we have 
    \begin{equation*}
        \log\left(\det{\mI + \sigma^{-2}\mV^{-\sfrac{1}{2}}_{n-1}\sum^{H_n}_{k=1} \vk_{\vz_{k, n}} \vk^{\top}_{\vz_{k, n}} \mV^{-\sfrac{1}{2}}_{n-1}}\right) = \log\left(\det{\mI + \sum^{H_n}_{k=1}\vm_t \vm^{T}_t}\right).
    \end{equation*}
    The matrix $\mM = \sum^{H_n}_{k=1}\vm_k \vm^{\top}_k$ by definition is positive semi-definite. Moreover, $\det{\mI + \mM} = \prod_{i\geq 1} (1 + \alpha_i)$, where $\alpha_i \geq 0$ are the eigenvalues of $\mM$. Furthermore, since $\alpha_i \geq 0$ and $\prod_{i\geq 1} (1 + \alpha_i) = 1 + \sum_{i\geq 1} \alpha_i + \cdots + \prod_{i\geq 1} \alpha_i$, we get $\prod_{i\geq 1} (1 + \alpha_i) \geq  1 + \sum_{i\geq 1} \alpha_i$.  Finally, since $\sum_{i\geq 1} \alpha_i = \tr(\mM)$, we get $\det{\mI + \mM} \geq 1 + \tr(\mM)$.
\end{proof}

\begin{corollary}[Upper bound on the posterior log determinant]
    \begin{equation*}
        \log\left(\determinant(\mV_{n})\right) \leq \log\left(\determinant(\mV_{n-1})\right) + \sum^{H_n}_{k=1}\sum^{d_x}_{j=1}\log\left(1 + \sigma^{-2} \sigma^2_{n-1, j}(\vz_{k, n})\right)
    \end{equation*}
    \label{cor: upper bound on the posterior log det}
\end{corollary}

\begin{proof}
    \begin{align*}
        &\log\left(\determinant(\mV_{n})\right) =\log\left(\determinant(\mV_{n-1})\right) + \log\left(\det{\mI + \mM}\right) \\
        &\leq \log\left(\determinant(\mV_{n-1})\right) + \log\left(\det{\diag\left(\mI + \mM\right)}\right) \tag{Hadamard's inequality for PSD matrices} \\
        &= \log\left(\determinant(\mV_{n-1})\right) +  \sum^{H_n}_{k=1}\sum^{d_x}_{j=1}\log\left(1 + \sigma^{-2} \sigma^2_{n-1, j}(\vz_{k, n})\right)
    \end{align*}
\end{proof}

\begin{lemma}
   Let $\tau_n = \sum_{i=1}^n H_i$, 
    then we have
    \begin{equation}
        \log\left(\frac{\determinant(\mV_{\tau_n})}{\determinant(\mV_{\tau_{n-1}})}\right) > \log(1 + \log(2))
    \end{equation}
    \label{lem: lower bound on info gain ratio neorl}
\end{lemma}
\begin{proof}
    Since $H_n = \max\{H_0, \widehat{H}_n\}$ such that 
    \begin{align*}
   \sum^{H_n}_{k=1} \sum^{d_x}_{j=1}\log\left(1 + \sigma^{-2} \sigma^2_{n-1, j}(\vz_{k, n})\right)
   &\geq \sum^{\widehat{H}_n}_{k=1} \sum^{d_x}_{j=1}\log\left(1 + \sigma^{-2} \sigma^2_{n-1, j}(\vz_{k, n})\right)\\
   &> \log(2).    
    \end{align*}
    This implies that 
    \begin{align*}
       \sum^{H_n}_{k=1}\sigma^{-2} \norm{\vsigma_{n-1, j}(\vz_{k, n})}^2 &= \sum^{H_n}_{k=1} \sum^{d_x}_{j=1}\sigma^{-2} \sigma^2_{n-1, j}(\vz_{k, n}) \\
      &\geq \sum^{H_n}_{k=1} \sum^{d_x}_{j=1}\log\left(1 + \sigma^{-2} \sigma^2_{n-1, j}(\vz_{k, n})\right) \tag{$x \geq \log(1 + x)$}
      \\&> \log(2).
    \end{align*}
    Therefore, 
    \begin{align*}
           \log\left(\determinant(\mV_{\tau_n})\right) &\geq \log\left(\determinant(\mV_{\tau_{n-1}})\right) + \log\left(1 + \sigma^{-2}\sum^{H_n}_{k=1} \norm{\vsigma_{n-1}(\vz_{k, n})}^2\right) \\
           &> \log\left(\determinant(\mV_{\tau_{n-1}})\right) + \log(1 + \log(2))
    \end{align*}
\end{proof}

Next, we prove two lemmas that will help us upper bound the sum of variances.
\begin{lemma}[Adapted Lemma 11 from \citet{abbasi2011regret}]
\label{lemma: max gain matrices}
Let $t, k \in \mathbb{N}$. Then we have:
\begin{align*}
    \sup_{\mX \ne 0}\frac{\norm{\mX^\top \mV_{t+k}\mX}}{\norm{\mX^\top \mV_{t}\mX}} \le \frac{\determinant{\mV_{t+k}}}{\determinant{\mV_t}}
\end{align*}
\end{lemma}

\begin{proof}
    Let us start with the case when $k=1$ and $\mV_{t+1} = \mV_{t} + k_{\vz_{t+1}}k_{\vz_{t+1}}^\top$.
    First notice:
    \begin{align*}
    \determinant(\mV_{t+1}) &= \determinant\left( \mV_{t} + k_{\vz_{t+1}}k_{\vz_{t+1}}^\top\right) \\
    &= \determinant(\mV_t)\determinant\left(\mI + \mV_t^{-\frac{1}{2}}k_{\vz_{t+1}}\left(\mV_t^{-\frac{1}{2}}k_{\vz_{t+1}}\right)^\top\right) \\
    &= \determinant(\mV_t)\left(1 + \norm{k_{\vz_{t+1}}^\top \mV_t^{-\frac{1}{2}}}_\setH^2\right)
\end{align*}
    
    Further we have:
    \begin{align*}
        \norm{\mX^\top k_{\vz_{t+1}}k_{\vz_{t+1}}^\top\mX}_\setH &= \norm{k_{\vz_{t+1}}^\top\mX}^2_\setH \\
        &= \norm{k_{\vz_{t+1}}^\top \mV_t^{-\frac{1}{2}} \mV_t^{\frac{1}{2}}\mX}^2_\setH  \\
        &\le \norm{k_{\vz_{t+1}}^\top \mV_t^{-\frac{1}{2}}}_\setH^2\norm{\mV_t^{\frac{1}{2}}\mX}^2_\setH \tag{Submultiplicativity}
    \end{align*}
    Hence:
    \begin{align*}
        \norm{\mX^\top \mV_{t+1} \mX}_\setH &= \norm{\mX^\top \left(\mV_{t} + k_{\vz_{t+1}}k_{\vz_{t+1}}^\top\right)\mX}_\setH \\
        &\le \norm{\mX^\top \mV_{t} \mX}_\setH + \norm{\mX^\top k_{\vz_{t+1}}k_{\vz_{t+1}}^\top\mX}_\setH \tag{Triangle inequality} \\
        &\le \norm{\mX^\top \mV_{t} \mX}_\setH + \norm{k_{\vz_{t+1}}^\top \mV_t^{-\frac{1}{2}}}_\setH^2\norm{\mV_t^{\frac{1}{2}}\mX}^2_\setH \tag{Previous step} \\
        &= \left(1 + \norm{k_{\vz_{t+1}}^\top \mV_t^{-\frac{1}{2}}}_\setH^2\right)\norm{\mX^\top \mV_{t} \mX}_\setH \\
        &= \frac{\determinant(\mV_{t+1})}{\determinant(\mV_{t})}\norm{\mX^\top \mV_{t} \mX}_\setH
    \end{align*}
Now observe:
\begin{align*}
    \frac{\norm{\mX^\top \mV_{t+k}\mX}}{\norm{\mX^\top \mV_{t}\mX}} &= \frac{\norm{\mX^\top \mV_{t+k}\mX}}{\norm{\mX^\top \mV_{t+k-1}\mX}}\frac{\norm{\mX^\top \mV_{t+k-1}\mX}}{\norm{\mX^\top \mV_{t+k-2}\mX}}\cdots\frac{\norm{\mX^\top \mV_{t+1}\mX}}{\norm{\mX^\top \mV_{t}\mX}} \\
    &\le \frac{\determinant(\mV_{t+k})}{\determinant(\mV_{t + k-1})}\frac{\determinant(\mV_{t+k-1})}{\determinant(\mV_{t+k-2})}\cdots \frac{\determinant(\mV_{t+1})}{\determinant(\mV_{t})} \\
    &= \frac{\determinant{\mV_{t+k}}}{\determinant{\mV_t}}
\end{align*}
\end{proof}

\begin{lemma}
\label{lemma: info gain lemma}
   Let $\tau_n = \sum_{i=1}^n H_i$, we have for all $h > 0$:
    \begin{align*}
        \norm{\mV_{\tau_{n-1}}^{-\frac{1}{2}}\mV_{\tau_{n-1}+h-1}^{\frac{1}{2}}}^2_{\setH} \le \bar{C}_I,
    \end{align*}
    where $\bar{C}_I=\max\left(\log(2), (1 + \sigma^{-2} \sigma_{\max})^{d_x H_0}\right)$.
\end{lemma}

\begin{proof}
    We apply \Cref{lemma: max gain matrices} with $t = \tau_{n-1}, k = h-1$ and $\mX = \mV_{\tau_{n-1}}^{-\frac{1}{2}}$
    We have:
    \begin{align*}
        \norm{\mV_{\tau_{n-1}}^{-\frac{1}{2}}\mV_{\tau_{n-1}+h-1}^{\frac{1}{2}}}^2_{\setH} &=  \norm{\mV_{\tau_{n-1}}^{-\frac{1}{2}}\mV_{\tau_{n-1}+h-1}^{\frac{1}{2}}\left(\mV_{\tau_{n-1}}^{-\frac{1}{2}}\mV_{\tau_{n-1}+h-1}^{\frac{1}{2}}\right)^\top}_{\setH} \\
        &= \norm{\mV_{\tau_{n-1}}^{-\frac{1}{2}}\mV_{\tau_{n-1}+h-1}\mV_{\tau_{n-1}}^{-\frac{1}{2}}}^2_{\setH} \\
        &= \frac{\norm{\mV_{\tau_{n-1}}^{-\frac{1}{2}}\mV_{\tau_{n-1}+h-1}\mV_{\tau_{n-1}}^{-\frac{1}{2}}}^2_{\setH}}{\norm{\mI}_\setH} \\
        &= \frac{\norm{\mV_{\tau_{n-1}}^{-\frac{1}{2}}\mV_{\tau_{n-1}+h-1}\mV_{\tau_{n-1}}^{-\frac{1}{2}}}^2_{\setH}}{\norm{\mV_{\tau_{n-1}}^{-\frac{1}{2}}\mV_{\tau_{n-1}}\mV_{\tau_{n-1}}^{-\frac{1}{2}}}_\setH} \\
        &\le \frac{\determinant(\mV_{\tau_{n-1}+h-1})}{\determinant(\mV_{\tau_{n-1}})} \tag{\Cref{lemma: max gain matrices}} \\
        &= \frac{\determinant\left(\mI + \frac{1}{\sigma^2}\mK_{\tau_{n-1}+h-1}\right)}{\determinant\left(\mI + \frac{1}{\sigma^2}\mK_{\tau_{n-1}}\right)} \\
        &\le \max\left(\log(2), (1 + \sigma^{-2} \sigma_{\max})^{d_x H_0}\right) := \bar{C}_I \tag{Definition of $H_n$}
    \end{align*}
    where in the last inequality we used \cref{cor: upper bound on the posterior log det}.
    Moreover, for the case where $H_n = \widehat{H}_n$, we have $h < \widehat{H}_n$ and therefore
    \begin{align*}
    \log\left(\determinant(\mV_{\tau_{n-1}+h-1})\right) &\leq
    \log\left(\determinant(\mV_{\tau_{n-1}})\right) + \sum^{h}_{k=1}\sum^{d_x}_{j=1}\log\left(1 + \sigma^{-2} \sigma^2_{n-1, j}(\vz_{k, n})\right)
    \\ &\leq \log\left(\determinant(\mV_{\tau_{n-1}})\right) + \sum^{\widehat{H}_n}_{k=1}\sum^{d_x}_{j=1}\log\left(1 + \sigma^{-2} \sigma^2_{n-1, j}(\vz_{k, n})\right) \\
    &\leq \log\left(\determinant(\mV_{\tau_{n-1}})\right) + \log(2).
    \end{align*}
    Whereas for the case where $H_n = H_0$, $h < H_0$ and accordingly
     \begin{align*}
    \log\left(\determinant(\mV_{\tau_{n-1}+h-1})\right) &\leq
    \log\left(\determinant(\mV_{\tau_{n-1}})\right) + \sum^{h}_{k=1}\sum^{d_x}_{j=1}\log\left(1 + \sigma^{-2} \sigma^2_{n-1, j}(\vz_{k, n})\right) \\
    &\leq \log\left(\determinant(\mV_{\tau_{n-1}})\right) + \sum^{H_0}_{k=1}\sum^{d_x}_{j=1}\log\left(1 + \sigma^{-2} \sigma^2_{n-1, j}(\vz_{k, n})\right) \\
    &\leq \log\left(\determinant(\mV_{\tau_{n-1}})\right) + d_x H_0 \log(1 + \sigma^{-2} \sigma_{\max})
    \end{align*}
\end{proof}

\begin{proof}[Proof of \Cref{lemma: sum of uncertainties bound}]
    We have:
    \begin{align*}
    &\sum^{N-1}_{n=0} \sum^{H_n-1}_{k=0} \E_{\vx^{n}_{k}, \dots \vx^{0}_{1}|  \vx_0}\left[\norm{\vsigma_n(\vx^{n}_k, \vpi_n(\vx^n_k))}^2\right] = \\
    &= \E_{\vx_T, \dots \vx_{1}|  \vx_0}\left[\sum^{N-1}_{n=0} \sum^{H_n-1}_{k=0}\norm{\vsigma_n(\vz_{k, n})}^2\right] \\
    &= \E_{\vx_T, \dots \vx_{1}|  \vx_0}\left[\sum_{j=1}^{d_{\vx}}\sum^{N-1}_{n=0} \sum^{H_n-1}_{k=0}\sigma_{n, j}(\vz_{k, n})^2\right] \\
    &= d_{\vx} \E_{\vx_T, \dots \vx_{1}|  \vx_0}\left[\sum^{N}_{n=1} \sum^{H_n}_{k=1}k_{\vz_{k, n}}^\top \mV_{\tau_{n-1}}^{-1}k_{\vz_{k, n}}\right] \tag{Same kernel in every dimension}\\
    &= d_{\vx}\E_{\vx_T, \dots \vx_{1}|  \vx_0}\left[\sum_{n=1}^N\sum_{k=1}^{H_n}\norm{\mV_{\tau_{n-1}}^{-\frac{1}{2}}k_{\vz_{k, n}}}^2_{\setH}\right]\\
    &= d_{\vx}\E_{\vx_T, \dots \vx_{1}|  \vx_0}\left[\sum_{n=1}^N\sum_{k=1}^{H_n}\norm{\mV_{\tau_{n-1}}^{-\frac{1}{2}}\mV_{\tau_{n-1}+k-1}^{\frac{1}{2}}\mV_{\tau_{n-1}+k-1}^{-\frac{1}{2}}k_{\vz_{k, n}}}^2_{\setH} \right]\\
    &\le d_{\vx}\E_{\vx_T, \dots \vx_{1}|  \vx_0}\left[ \sum_{n=1}^N\sum_{k=1}^{H_n}\norm{\mV_{\tau_{n-1}}^{-\frac{1}{2}}\mV_{\tau_{n-1}+k-1}^{\frac{1}{2}}}^2_{\setH}\norm{\mV_{\tau_{n-1}+k-1}^{-\frac{1}{2}}k_{\vz_{k, n}}}^2_{\setH}\right] \tag{Submultiplicativity} \\
    &\le \bar{C}_Id_{\vx}\E_{\vx_T, \dots \vx_{1}|  \vx_0}\left[\sum_{n=1}^N\sum_{k=1}^{H_n}\norm{\mV_{\tau_{n-1}+k-1}^{-\frac{1}{2}}k_{\vz_{k, n}}}^2_{\setH} \right]\tag{\cref{lemma: info gain lemma}}\\
    &= \bar{C}_I \E_{\vx_T, \dots \vx_{1}|  \vx_0}\left[\sum_{n=1}^N\sum_{k=1}^{H_n}\norm{\vsigma_{\tau_{n-1}+k-1}(\vz_{k, n})}^2\right] \\
    &= \bar{C}_I d_{\vx}\E_{\vx_T, \dots \vx_{1}|  \vx_0}\left[\sum_{t=1}^T\norm{\vsigma_{t-1}(\vz_{t})}^2\right] \\
    &\le \bar{C}_I \max_{\setD_{1:T}}\sum_{t=1}^T\norm{\vsigma_{t-1}(\vz_{t})}^2 \tag{Maximum is greater than expecation}\\
    &\le \frac{\bar{C}_Id_{\vx}\sigma_{\max}}{\log(1 + \sigma^{-2}\sigma_{\max})} \max_{\setD_{1:T}}\sum_{t=1}^T\log\left(1 + \frac{\sigma_{t-1}^2(\vz_t)}{\sigma^2}\right) \tag{Lemma 15 of \citet{curi2020efficient}}\\
    &= \frac{\bar{C}_Id_{\vx}\sigma_{\max}}{\log(1 + \sigma^{-2}\sigma_{\max})}\max_{\setD_{1:T}}I(\vy_{\setD_{1:T}}, f_1) \\
    &= \frac{\bar{C}_Id_{\vx}\sigma_{\max}}{\log(1 + \sigma^{-2}\sigma_{\max})} \Gamma_T(k)
    \end{align*}
    Therefore it follows:
    \begin{align*}
            \sqrt{\sum^{N-1}_{n=0} \sum^{H_n-1}_{k=0} \E_{\vx^{n}_{k}, \dots \vx^{0}_{1}|  \vx_0}\left[\norm{\vsigma_n(\vx^{n}_k, \vpi_n(\vx^n_k))}^2\right]}
    \le \sqrt{\frac{\bar{C}_Id_{\vx}\sigma_{\max}}{\log(1 + \sigma^{-2}\sigma_{\max})}} \sqrt{\Gamma_T(k)}
    \end{align*}
\end{proof}

Finally, we show that the number of episodes $N$ is bounded by $\Gamma_T$.

\begin{lemma}
\label{lemma: number of statistical model computations}
    We have $N \le \frac{1}{\bar{K}_I} \Gamma_T(k)$, where $\bar{K}_I = \log(1 + \log(2))$.
\end{lemma}

\begin{proof}
From \cref{lem: lower bound on info gain ratio neorl}, we have $ \determinant(\mV_{\tau_n}) > \exp(\bar{K}_I) \determinant(\mV_{\tau_{n-1}})$. 
     From the definition, it follows that $T = \tau_N$. 
\begin{align*}
 \determinant(\mV_{\tau_n}) >\exp(\bar{K}_I) \determinant(\mV_{\tau_{n-1}}) > \exp(\bar{K}_I)^2 \determinant(\mV_{\tau_{n-2}}) > \cdots > \exp(\bar{K}_I)^N \determinant(\mV_{0}),
\end{align*}
we have that $\exp(\bar{K}_I)^N < \frac{\determinant(\mV_{T})}{\determinant(\mV_0)}$, and hence 
\begin{align*}
N < \frac{1}{\bar{K}_I}\log\left(\frac{\determinant(\mV_{T})}{\determinant(\mV_0)}\right) \le \frac{1}{\bar{K}_I}\max_{\setD_{1:T}}\log\left(\frac{\determinant(\mV_{T})}{\determinant(\mV_0)}\right) = \frac{1}{\bar{K}_I}\Gamma_T(k).    
\end{align*} 
\end{proof}
\clearpage
\section{Proofs}\label{sec:proofs}

In this section, we prove \cref{thm: existence of average cost problem} and \cref{thm: Regret bound neo rl}. First, we start with the proof of \cref{cor: bounded energy for bounded actions}.

\begin{proof}[Proof of \cref{cor: bounded energy for bounded actions}]
We first analyze the following term $\E_{\vw}[V(\vf^*(\vx, \vpi(\vx)) + \vw) - V(\vf^*(\vx, \vpi_s(\vx)) + \vw)]$ for any $\vpi \in \Pi$.
\begin{align*}
    \E_{\vw}&[V(\vf^*(\vx, \vpi(\vx)) + \vw) - V(\vf^*(\vx, \vpi_s(\vx)) + \vw)] \\ 
    &\leq \E_{\vw}[\kappa(\norm{\vf^*(\vx, \vpi(\vx)) + \vw - (\vf^*(\vx, \vpi_s(\vx)) + \vw)})] \tag{Uniform continuity of $V$} \\
    &= \kappa(\norm{\vf^*(\vx, \vpi(\vx)) - \vf^*(\vx, \vpi_s(\vx))}) \\
    &\leq \kappa(\kappa_{\vf^*}(\norm{\vpi(\vx) - \vpi_s(\vx)})) \tag{Uniform continuity of $\vf^*$} \\
    &\leq \kappa(\kappa_{\vf^*}(2u_{\max})) \tag{Bounded inputs}.
\end{align*}
Therefore, 
\begin{align*}
    \E_{\vx'|\vpi, \vx}[V(\vx')] &= \E_{\vw}[V(\vf^*(\vx, \vpi(\vx)) + \vw)] \\
    &\leq \E_{\vw}[V(\vf^*(\vx, \vpi_s(\vx)) + \vw)] + \kappa(\kappa_{\vf^*}(2u_{\max})) \\
    &= \E_{\vx'|\vpi_s, \vx}[V(\vx')] + \kappa(\kappa_{\vf^*}(2u_{\max})) \\
    &\leq \gamma V(\vx) + K + \kappa(\kappa_{\vf^*}(2u_{\max})) \\
    &= \gamma V(\vx) + \tilde{K} \tag{$\tilde{K} = K + \kappa(\kappa_{\vf^*}(2u_{\max}))$}
\end{align*}
Hence, $V$ satisfies the drift condition for $\vpi$. Furthermore, since $V$ also satisfies positive definiteness by assumption, the bounded energy condition holds for all $\vpi \in \Pi$.
\end{proof}
\subsection{Proof of \cref{thm: existence of average cost problem}}
For proving \cref{thm: existence of average cost problem}, we invoke the results from \cite[Theorem 1.2 -- 1.3]{hairer2011yet}. For this we require that the Markov chain induced by a policy $\vpi$ satisfies the drift condition. In our setting, this corresponds to \cref{assumption: Stability}. Next, we show that the chain satisfies the following minorisation condition.
\begin{lemma}[Minorisation condition]
\label{lem: minorisation condition}
Consider the system in \cref{eq:dynamics} and let \cref{ass:lipschitz_continuity} -- \ref{assumption: Stability} hold. Let $P^{\vpi}$ denote the transition kernel for the policy $\vpi \in \Pi$, i.e., 
$P^{\vpi}(\vx, \setA) = \Pr{\vx_+ \in \setA |\vx, \vpi(\vx)}$
. Then, for all $\vpi \in \Pi$,  exists a constant $\alpha \in (0, 1)$ and a probability measure $\zeta(\cdot)$ s.t., 
\begin{equation}
    \inf_{\vx \in \setC} P^{\vpi}(\vx, \cdot) \geq \alpha \zeta(\cdot) \label{eq:minorisation condition}
\end{equation}
with $\setC \defeq \{\vx \in \setX; V^{\vpi}(\vx) \leq R\}$ for some $R >  \nicefrac{2 K}{1 - \gamma}$
\end{lemma}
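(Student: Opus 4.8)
The plan is to exploit the one ingredient that makes the chain irreducible: because the process noise is Gaussian (\cref{ass:noise_properties}), the transition kernel $P^{\vpi}(\vx,\cdot)$ is absolutely continuous with respect to Lebesgue measure, with density
$p^{\vpi}(\vx,\vy) = (2\pi\sigma^2)^{-d_{\vx}/2}\exp\!\big(-\norm{\vy-\vf^*(\vx,\vpi(\vx))}^2/(2\sigma^2)\big)$.
On the sublevel set $\setC$ the ``mean map'' $\vx\mapsto\vf^*(\vx,\vpi(\vx))$ is uniformly bounded, so this density is bounded below by a positive constant on any fixed ball $B$; the minorisation then follows with $\zeta$ equal to the normalized Lebesgue measure on $B$, i.e. a standard Doeblin/Harris-type ``small set'' argument. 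Crucially, the constant in the lower bound can be chosen uniformly over $\vx \in \setC$ and, after a short check, uniformly over $\vpi \in \Pi$.

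Concretely, I would proceed as follows. \textbf{Step 1 (the level set is bounded, hence precompact).} From the uniform-continuity estimate $\abs{V^{\vpi}(\vx)-V^{\vpi}(\vx')}\le\kappa(\norm{\vx-\vx'})$ the function $V^{\vpi}$ is continuous, so $\setC=(V^{\vpi})^{-1}([0,R])$ is closed in $\setX$. By positive definiteness, $\vx\in\setC$ forces $C_l\,\xi(\norm{\vx})\le V^{\vpi}(\vx)\le R$; since $\xi\in\setK_{\infty}$ is a continuous strictly increasing bijection of $[0,\infty)$, this yields $\norm{\vx}\le\xi^{-1}(R/C_l)=:\rho<\infty$. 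Thus $\setC$ is a closed, bounded subset of $\R^{d_{\vx}}$, hence compact. \textbf{Step 2 (bounded mean on $\setC$).} By \cref{ass:lipschitz_continuity}, $\vx\mapsto\vf^*(\vx,\vpi(\vx))$ is continuous, so $M:=\sup_{\vx\in\setC}\norm{\vf^*(\vx,\vpi(\vx))}<\infty$.

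\textbf{Step 3 (uniform lower bound on the transition density).} Fix any $r>0$ and set $B:=\{\vy\in\R^{d_{\vx}}:\norm{\vy}\le r\}$. For every $\vx\in\setC$ and $\vy\in B$ we have $\norm{\vy-\vf^*(\vx,\vpi(\vx))}\le r+M$, so $p^{\vpi}(\vx,\vy)\ge\underline p:=(2\pi\sigma^2)^{-d_{\vx}/2}\exp\!\big(-(r+M)^2/(2\sigma^2)\big)>0$. \textbf{Step 4 (assemble the minorisation).} Let $\zeta(\cdot):=\mathrm{Leb}(\,\cdot\cap B)/\mathrm{Leb}(B)$, a probability measure, and $\alpha:=\underline p\,\mathrm{Leb}(B)$. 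Then for every measurable $\setA$ and every $\vx\in\setC$, $P^{\vpi}(\vx,\setA)\ge\int_{\setA\cap B}p^{\vpi}(\vx,\vy)\,d\vy\ge\underline p\,\mathrm{Leb}(\setA\cap B)=\alpha\,\zeta(\setA)$, which is exactly \cref{eq:minorisation condition}. Finally $\alpha\in(0,1)$: positivity is immediate, and $\alpha=\int_B\underline p\,d\vy\le\int_B p^{\vpi}(\vx,\vy)\,d\vy=P^{\vpi}(\vx,B)<1$ since a nondegenerate Gaussian assigns strictly positive mass to the complement of the bounded set $B$.

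I expect the only delicate point to be the precompactness of $\setC$ in Step~1 when $\setX\subsetneq\R^{d_{\vx}}$ is not closed: one then argues on $\overline{\setC}\cap\setX$, observing that the bound $V^{\vpi}\ge C_l\,\xi(\norm{\cdot})$ already confines $\setC$ to the ball of radius $\rho$, so no mass escapes ``to infinity''. I would also remark that the specific threshold $R>2K/(1-\gamma)$ plays no role in the minorisation itself --- any finite $R$ works --- and is imposed purely so that, when $\setC$ is later combined with the drift condition of \cref{assumption: Stability} inside the framework of \cite{hairer2011yet}, the small set is a large enough sublevel set to be recurrent and to couple trajectories.
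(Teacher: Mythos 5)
Your proposal is correct and follows essentially the same route as the paper's proof: bound the sublevel set $\setC$ via positive definiteness of $V^{\vpi}$, conclude that $\vx\mapsto\vf^*(\vx,\vpi(\vx))$ is bounded on $\setC$ by continuity, and then exhibit an explicit Doeblin minorisation from a uniform lower bound on the Gaussian transition density. The only (immaterial) difference is the choice of minorising measure --- you take normalized Lebesgue measure on a fixed ball, while the paper takes a centered Gaussian with halved variance --- and your closing remarks (that any finite $R$ suffices for the minorisation itself, the threshold being needed only for the drift/coupling argument of Hairer--Mattingly) are accurate.
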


\begin{proof}
    We prove it in 3 steps. First, we show that $\setC$ is contained in a compact domain. From the \Cref{assumption: Stability} we pick the function $\xi \in \setK_{\infty}$. Since $C_l\xi(0) = 0, \lim_{s \to \infty} \xi(s) = +\infty$ and $C_l \xi$ is continuous, there exists $M$ such that $C_l \xi(M) = R$. Then for $\norm{\vx} > M$ we have:
    \begin{align*}
        V^{\vpi}(\vx) \ge C_l\xi(\norm{\vx}) > \xi(M) = R.
    \end{align*}
    Therefore we have: $\setC \subseteq \setB(\vzero, M) \defeq \{\vx \mid \norm{\vx - \vzero} \le M\}$. 
    In the second step we show that $\vf(\setC, \vpi(\setC))$ is bounded, in particular we show that there exists $B > 0$ such that: $\vf(\setC, \vpi(\setC)) \subseteq \setB(\vzero, B)$. This is true since continuous image of compact set is compact and the observation:
    \begin{align*}
        \setC \subseteq \setB(\vzero, M) \implies \vf(\setC, \vpi(\setC)) \subseteq \vf(\setB(\vzero, M), \vpi(\setB(\vzero, M))).
    \end{align*}
    Since $\vf(\setB(\vzero, M), \vpi(\setB(\vzero, M)))$ is compact there exists $B$ such that $\vf(\setC, \vpi(\setC)) \subseteq \setB(\vzero, B)$.
    In the last step we prove that $\alpha \defeq 2^{-d_{\vx}}e^{-B^2/\sigma^2}$ and $\zeta$ with law of $\setN\left(0, \frac{\sigma^2}{2}\right)$ satisfy condition of \Cref{lem: minorisation condition}.
    It is enough to show that $\forall \vmu \in \setB(\vzero, B), \forall \vx \in \R^{d_\vx}$ we have:
    \begin{align*}
        \alpha \frac{1}{(2\pi)^{\frac{d_x}{2}}\left(\frac{\sigma^2}{2}\right)^{\frac{d_\vx}{2}}}e^{-\frac{\norm{\vx}^2}{\sigma^2}} \le \frac{1}{(2\pi)^{\frac{d_x}{2}}(\sigma^2)^{\frac{d_\vx}{2}}}e^{-\frac{\norm{\vx - \vmu}^2}{2\sigma^2}}
    \end{align*}
    which can be proven with simple algebraic manipulations.
\end{proof}
Through the minorisation condition and \cref{assumption: Stability}, we can prove the ergodicity of the closed-loop system for a given policy $\vpi \in \Pi$.
\begin{theorem}[Ergodicity of closed-loop system]
Let \cref{ass:lipschitz_continuity} -- \ref{assumption: Stability}, consider any probability measures $\zeta_1$, $\zeta_2$, and $\theta > 0$, define $P^{\vpi} \zeta$, $\norm{\varphi}_{1 + \theta V^{\vpi}}$,
$\rho^{\vpi}_{\theta}$ as
\begin{align*}
    \left(P^{\vpi} \zeta\right) (\setA) &= \int_{\setX} P^{\vpi}(\vx, \setA) \zeta(d\vx) \\
     \norm{\varphi}_{1 + \theta V^{\vpi}} &= \sup_{\vx \in \setX} \frac{|\varphi(\vx)|}{1 + \theta V^{\vpi}(\vx)} \\
    \rho^{\vpi}_{\theta}(\zeta_1, \zeta_2) &= \sup_{\varphi: \norm{\varphi}_{1 + \theta V^{\vpi}} \leq 1} \int_{\setX} \varphi(\vx) (\zeta_1 - \zeta_2)(d\vx) = \int_{\setX} (1 + \theta V^{\vpi}(\vx))|\zeta_1 - \zeta_2|(d\vx).
\end{align*}
We have for all $\vpi \in \Pi$, that $P^{\vpi}$ admits a unique invariant measure $\bar{P}^{\vpi}$. 
Furthermore, there exist constants $C_1 >0$, $\theta > 0$, $\lambda \in (0,1)$ such that
\begin{align}
    \rho^{\vpi}_{\theta}(P^{\vpi}\zeta_1, P^{\vpi}\zeta_2)
    &\leq \lambda \rho^{\vpi}_{\theta}(\zeta_1, \zeta_2)\tag{1} \\
    \norm{\E_{\vx \sim (P^{\vpi})^t }\left[\varphi(\vx)\right] -\E_{\vx\sim \bar{P}^{\vpi}} \left[\varphi(\vx)\right]}_{1 + V^{\vpi}} &\leq C_1 \lambda^{t}\norm{\varphi-\E_{\vx\sim \bar{P}^{\vpi}} \left[\varphi(\vx)\right]}_{1 + V^{\vpi}} \tag{2}.
\end{align}

holds for every measurable function $\varphi: \setX \rightarrow \setR$ with $\norm{\varphi}_{1 + V^{\vpi}}<\infty$. Here $(P^{\vpi})^t$ denotes the $t$-step transition kernel under the policy $\vpi$. 

Moreover, $\theta = \nicefrac{\alpha_0}{K}$, and
\begin{equation}
    \lambda = \max \left\{1 - (\alpha - \alpha_0), \frac{2 +  \nicefrac{R}{K}\alpha_0 \gamma_0}{2 +  \nicefrac{R}{K}\alpha_0}\right\}
\end{equation}
for any $\alpha_0 \in (0, \alpha)$ and $\gamma_0 \in (\gamma + 2\nicefrac{K}{R} ,1)$.

\label{thm:geometric_ergodicity}
\end{theorem}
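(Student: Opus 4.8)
The plan is to recognize the statement as an instance of the quantitative Harris theorem and to apply \cite[Theorems 1.2--1.3]{hairer2011yet} to the Markov kernel $P^{\vpi}$ for an arbitrary fixed $\vpi \in \Pi$. That theorem needs exactly two inputs, both already at hand. The first is a geometric drift inequality $\E_{\vx_+ \mid \vx, \vpi}[V^{\vpi}(\vx_+)] \le \gamma V^{\vpi}(\vx) + K$ with $\gamma \in (0,1)$, which is precisely the drift condition of \cref{assumption: Stability}. The second is a minorisation condition on the sublevel set $\setC = \{\vx \in \setX : V^{\vpi}(\vx) \le R\}$ for some $R > \nicefrac{2K}{1-\gamma}$, which is exactly \cref{lem: minorisation condition}, supplying $\alpha \in (0,1)$ and a probability measure $\zeta$ with $\inf_{\vx \in \setC} P^{\vpi}(\vx,\cdot) \ge \alpha \zeta(\cdot)$. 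I would fix one such $R$ and then any $\alpha_0 \in (0,\alpha)$ and $\gamma_0 \in (\gamma + \nicefrac{2K}{R}, 1)$.

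Given these inputs, the quantitative Harris theorem, instantiated with weight parameter $\theta = \nicefrac{\alpha_0}{K}$, delivers the contraction (1) of $P^{\vpi}$ in the metric $\rho^{\vpi}_{\theta}$ with the stated factor $\lambda = \max\{1 - (\alpha - \alpha_0),\ (2 + \nicefrac{R}{K}\alpha_0\gamma_0)/(2 + \nicefrac{R}{K}\alpha_0)\} \in (0,1)$. For existence and uniqueness of $\bar P^{\vpi}$, I would note that the drift inequality gives $\int V^{\vpi}\, d(P^{\vpi}\mu) = \int P^{\vpi} V^{\vpi}\, d\mu \le \gamma \int V^{\vpi}\, d\mu + K$, so $P^{\vpi}$ maps the set of probability measures with finite $V^{\vpi}$-moment into itself; this set is complete for $\rho^{\vpi}_{\theta}$, and (1) makes $P^{\vpi}$ a contraction there, so Banach's fixed-point theorem produces a unique fixed point $\bar P^{\vpi}$ in it. A routine truncation argument ($V^{\vpi} \wedge M$ followed by $M \to \infty$ via monotone convergence) shows every invariant probability measure satisfies $\int V^{\vpi}\, d\mu \le \nicefrac{K}{1-\gamma} < \infty$ and hence lies in this set, so $\bar P^{\vpi}$ is \emph{the} invariant measure.

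To obtain the marginal bound (2) from (1), I would take $\zeta_1 = \delta_{\vx_0}$, $\zeta_2 = \bar P^{\vpi}$, iterate the contraction $t$ times, and use invariance $(P^{\vpi})^t \bar P^{\vpi} = \bar P^{\vpi}$ to get $\rho^{\vpi}_{\theta}((P^{\vpi})^t(\vx_0,\cdot), \bar P^{\vpi}) \le \lambda^t \rho^{\vpi}_{\theta}(\delta_{\vx_0}, \bar P^{\vpi})$. Since both measures are probabilities, integrating any $\varphi$ against their difference equals integrating $\varphi - \E_{\bar P^{\vpi}}[\varphi]$ against it, so by the dual definition of $\rho^{\vpi}_{\theta}$ we get $|\E_{(P^{\vpi})^t}[\varphi] - \E_{\bar P^{\vpi}}[\varphi]| \le \norm{\varphi - \E_{\bar P^{\vpi}}[\varphi]}_{1 + \theta V^{\vpi}}\, \rho^{\vpi}_{\theta}((P^{\vpi})^t(\vx_0,\cdot), \bar P^{\vpi})$. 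Bounding $\rho^{\vpi}_{\theta}(\delta_{\vx_0}, \bar P^{\vpi}) \le 2 + \theta V^{\vpi}(\vx_0) + \theta\, \E_{\bar P^{\vpi}}[V^{\vpi}]$ with $\E_{\bar P^{\vpi}}[V^{\vpi}] \le \nicefrac{K}{1-\gamma}$, together with the elementary comparison $\norm{\cdot}_{1 + \theta V^{\vpi}} \le \max\{1, \nicefrac{1}{\theta}\}\, \norm{\cdot}_{1 + V^{\vpi}}$, then dividing by $1 + V^{\vpi}(\vx_0)$ and taking the supremum over $\vx_0$, yields (2) with an explicit $C_1$ depending only on $\theta$, $K$, $\gamma$.

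I do not expect a deep obstacle, since the statement is close to a direct invocation of \cite{hairer2011yet}. The parts needing care are (a) setting up the fixed-point argument for existence on the correct space of measures and verifying that invariant measures automatically have finite $V^{\vpi}$-moment, and (b) the bookkeeping needed to pass between the metric $\rho^{\vpi}_{\theta}$ (weight $1 + \theta V^{\vpi}$) and the weighted supremum norm $\norm{\cdot}_{1 + V^{\vpi}}$ appearing in (2), while tracking the constant $C_1$. It is also worth flagging that the minorisation constant $\alpha$ from \cref{lem: minorisation condition} — and therefore $\lambda$ and $C_1$ — depends on $\vpi$ through the size of the compact set containing $\{V^{\vpi} \le R\}$, so a priori these constants are policy-dependent; obtaining them uniformly over $\Pi$ would require a uniform bound on the range of $\vf^*(\cdot, \vpi(\cdot))$ over the relevant sublevel sets.
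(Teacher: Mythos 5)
Your proposal is correct and follows essentially the same route as the paper: the paper's proof is a direct invocation of \cite[Theorems 1.2--1.3]{hairer2011yet}, with the drift condition supplied by \cref{assumption: Stability} and the minorisation condition by \cref{lem: minorisation condition}, exactly as you describe. The additional detail you supply (the fixed-point argument for existence of $\bar{P}^{\vpi}$, the passage between the $1+\theta V^{\vpi}$ and $1+V^{\vpi}$ weighted norms, and the observation that the constants are a priori policy-dependent) is a correct elaboration of what the paper leaves implicit in citing Hairer's result.
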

\begin{proof}
From \cref{ass:Policy class}, we have a value function for each policy that satisfies the drift condition. Furthermore, in \cref{lem: minorisation condition} we show that our system also satisfies the minorisation condition for all policies. Under these conditions, we can use the results from~\citet[Theorem 1.2. -- 1.3.]{hairer2011yet}. 
\end{proof}
Note that $\norm{\cdot}_{1 + \theta V^{\vpi}}$ represents a family of equivalent norms for any $\theta > 0$. Now we prove \cref{thm: existence of average cost problem}.
\begin{proof}[Proof of \cref{thm: existence of average cost problem}]
From~\cref{thm:geometric_ergodicity}, we have
\begin{equation*}
    \rho^{\vpi}_{\theta}((P^{\vpi})^{t+1}, (P^{\vpi})^{t}) = \rho^{\vpi}_{\theta}(P^{\vpi} (P^{\vpi})^t, P^{\vpi}(P^{\vpi})^{t-1})
    \leq \lambda^{t} \rho^{\vpi}_{\theta}(P^{\vpi}\delta_{\vx_0}, \delta_{\vx_0}),
\end{equation*}
where $\delta_{\vx_0}$ is the dirac measure. Therefore, $(P^{\vpi})^{t}$ is a Cauchy sequence. Furthermore, $\rho^{\vpi}_{\theta}$ is complete for the set of probability measures integrating $V$, thus
$\rho^{\vpi}_{\theta}((P^{\vpi})^{t}, \bar{P}^{\vpi}) \to 0$ for $t \to \infty$ (c.f.,~\cite{hairer2011yet} for more details). In particular, we have for $\varphi$ such that $\norm{\varphi}_{1 + \theta V^{\vpi}} \leq 1$, 
\begin{equation*}
    \lim_{t \to \infty} \int_{\setX} \varphi(\vx) (P^{\vpi})^{t}(d\vx) = \int_{\setX} \varphi(\vx)  \bar{P}^{\vpi}(d\vx). 
\end{equation*}
Note that since all $\norm{\cdot}_{1 + \theta V^{\vpi}}$ norms are equivalent for $\theta > 0$, if $\norm{c}_{1 + V^{\vpi}} \leq C$ (\cref{assumption: Stability}), then 
$\norm{c}_{1 + \theta V^{\vpi}} \leq C'$ for some $C' \in (0, \infty)$. Furthermore, note that $ c(\cdot) \geq 0$. 
 Therefore, 
 \begin{align*}
     \int_{\setX} c(\vx)  \bar{P}^{\vpi}(d\vx)
     &= \lim_{t\to\infty} \int_{\setX} c(\vx)  (P^{\vpi})^{t}(d\vx) \\
     &\leq C \lim_{t\to\infty} \int_{\setX} (1 + V^{\vpi}(\vx))  (P^{\vpi})^{t}(d\vx) \\
     &= C + C \lim_{t\to\infty}  \E_{\vx \sim (P^{\vpi})^{t}}[V^{\vpi}(\vx)] \\
     &= C + C \lim_{t\to\infty}  \E_{\vx \sim (P^{\vpi})^{t-1}}[\E_{\vx' \sim (P^{\vpi})}[V^{\vpi}(\vx')|\vx]] \\
     &\leq C + C \left(\lim_{t\to\infty} \gamma \E_{\vx \sim (P^{\vpi})^{t-1}}[V^{\vpi}(\vx)] + K\right) \tag{\cref{assumption: Stability}}\\
     &\leq C + C \lim_{t\to\infty} \gamma^t V^{\vpi}(\vx_0) + K \frac{1-\gamma^t}{1-\gamma} \\
     &= C  \left(1 + K \frac{1}{1-\gamma}\right)
 \end{align*}
 In summary, we have $\E_{\vx\sim \bar{P}^{\vpi}} \left[c(\vx)\right] \leq C \left(1 + K \frac{1}{1-\gamma}\right)$

Consider any $t > 0$,  and note that from \cref{thm:geometric_ergodicity} we have 
\begin{align*}
    \norm{\E_{\vx \sim (P^{\vpi})^t }\left[c(\vx)\right] -\E_{\vx\sim \bar{P}^{\vpi}} \left[c(\vx)\right]}_{1 + V^{\vpi}} &= \sup_{\vx_0 \in \setX} \frac{| \E_{\vx \sim (P^{\vpi})^t }\left[c(\vx)\right] -\E_{\vx\sim \bar{P}^{\vpi}} \left[c(\vx)\right]|}{1 + V^{\vpi}(\vx_0)} \\
    &\leq C_1 \lambda^{t}\norm{c-\E_{\vx\sim \bar{P}^{\vpi}} \left[c(\vx)\right]}_{1 + V^{\vpi}} \tag{\cref{thm:geometric_ergodicity}}\\
    &\leq C_1 \lambda^{t}\norm{c}_{1 + V^{\vpi}} + C_1 \lambda^{t}\E_{\vx\sim \bar{P}^{\vpi}} \left[c(\vx)\right] \\
    &= C_2 \lambda^{t},
\end{align*}
where $C_2 = C_1 (\norm{c}_{1 + V^{\vpi}} + C K \frac{1}{1-\gamma})$. 

Moreover, since the inequality holds for all $\vx_0$, we have 
\begin{equation*}
    \frac{| \E_{\vx \sim (P^{\vpi})^t }\left[c(\vx)\right] -\E_{\vx\sim \bar{P}^{\vpi}} \left[c(\vx)\right]|}{1 + V^{\vpi}(\vx_0)} \leq C_2 \lambda^{t}.
\end{equation*}
In summary, 
\begin{equation*}
   | \E_{\vx \sim (P^{\vpi})^t }\left[c(\vx)\right] -\E_{\vx\sim \bar{P}^{\vpi}} \left[c(\vx)\right]| \leq C_2 (1 + V^{\vpi}(\vx_0))  \lambda^{t}.
\end{equation*}

Consider any $T \geq 0$, and define with $\Bar{c} = \E_{\vx\sim \bar{P}^{\vpi}} \left[c(\vx, \vpi(x))\right]$. 
\begin{align*}
\E_{\vpi} \left[ \sum^{T-1}_{t=0} c(\vx_t, \vu_t) - \Bar{c}\right] &= \sum^{T-1}_{t=0} \E_{(P^{\vpi})^t} \left[ c(\vx_t, \vu_t) \right]- \Bar{c} \\
    &\leq \sum^{T-1}_{t=0} \left|\E_{(P^{\vpi})^t} \left[ c(\vx_t, \vu_t) \right]- \Bar{c}\right| \\
    &\leq C_2 (1 + V^{\vpi}(\vx_0))\sum^{T-1}_{t=0} \lambda^t \\
    &=  C_2 (1 + V^{\vpi}(\vx_0)) \frac{1 - \lambda^T}{1 - \lambda} 
\end{align*}
Hence, we have 
\begin{equation*}
    \lim_{T \to \infty} \left|\E_{\vpi} \left[ \sum^{T-1}_{t=0} c(\vx_t, \vu_t) - \Bar{c}\right]\right|  \leq C_2 (1 + V^{\vpi}(\vx_0)) \frac{1}{1 - \lambda}, 
\end{equation*}
and for any $\vx_0$ in a compact subset of $\setX$
\begin{equation*}
    \lim_{T \to \infty} \frac{1}{T} \E_{\vpi} \left[ \sum^{T-1}_{t=0} c(\vx_t, \vu_t) - \Bar{c}\right]  = 0.
\end{equation*}
Moreover,
\begin{equation*}
    |B(\vpi, \vx_0)| \leq C_2(1 + V^{\vpi}(\vx_0)) \frac{1}{1 - \lambda}.
\end{equation*}
\end{proof}
Another interesting, inequality that follows from the proof above is the difference in bias inequality. 
 \begin{equation*}
            |\E_{\vx_0 \sim \zeta_1} [B(\vpi, \vx_0)] -  \E_{\vx_0 \sim \zeta_2} [B(\vpi, \vx_0)]| \leq  \frac{C_3}{1-\lambda} \int_{\setX} (1 + V^{\vpi}(\vx)) \left|\zeta_1 - \zeta_2 \right|(d\vx) 
         \end{equation*}
         for all probability measures $\zeta_1, \zeta_2$. 
To show this holds, define $C' = \max_{\vpi \in \Pi} \norm{c(\vx, \vpi(\vx))}_{1 + \theta V^{\vpi}}$. Furthermore, note that $C' < \infty$ from \cref{assumption: Stability} and  $\norm{\nicefrac{c(\vx, \vpi(\vx))}{C'}}_{1 + \theta V^{\vpi}} \leq 1$.
\begin{align*}
   &\left|\E_{\vx \sim (P^{\vpi})^t \zeta_1} c(\vx, \vpi(\vx))-\E_{\vx \sim (P^{\vpi})^t \zeta_2} c(\vx, \vpi(\vx))\right| = \left| \int_{\setX} c(\vx, \vpi(\vx)) ((P^{\vpi})^t \zeta_1 - (P^{\vpi})^t \zeta_2)(d\vx)\right| \\
    &= C' \left| \int_{\setX} \frac{1}{C'}c(\vx, \vpi(\vx)) ((P^{\vpi})^t \zeta_1 - (P^{\vpi})^t \zeta_2)(d\vx)\right| \\ 
    &\leq C' \sup_{\varphi: \norm{\varphi}_{1+\theta V^{\vpi}} \leq 1} \int_{\setX} \varphi(\vx) ((P^{\vpi})^t \zeta_1 - (P^{\vpi})^t \zeta_2)(d\vx) = C' \rho^{\vpi}_{\theta}((P^{\vpi})^t \zeta_1, (P^{\vpi})^t \zeta_2) \\
    &\leq C' \lambda \rho^{\vpi}_{\theta}((P^{\vpi})^{t-1} \zeta_1, (P^{\vpi})^{t-1} \zeta_2) \tag{\cref{thm:geometric_ergodicity}}\\
    &\leq C'\lambda^t \rho^{\vpi}_{\theta}(\zeta_1, \zeta_2).
\end{align*}
Also, note that there exists $C_{\theta} \in (0, \infty)$ such that
$C_{\theta}\norm{\varphi}_{1 + \theta V^{\vpi}} \geq \norm{\varphi}_{1 + V^{\vpi}}$ due to the equivalence of the two norms.
\begin{align*}
    \rho^{\vpi}_{\theta}(\zeta_1, \zeta_2) &= \sup_{\varphi: \norm{\varphi}_{1 + \theta V^{\vpi}} \leq 1} \int_{\setX} \varphi(\vx) (\zeta_1 - \zeta_2)(d\vx) \\
    &\leq \sup_{\varphi: \norm{\varphi}_{1 + V^{\vpi}} \leq C_{\theta}} \int_{\setX} \varphi(\vx) (\zeta_1 - \zeta_2)(d\vx) \\
    &= C_{\theta} \sup_{\varphi: \norm{\varphi}_{1 + V^{\vpi}} \leq 1} \int_{\setX} \varphi(\vx) (\zeta_1 - \zeta_2)(d\vx) \\
    &= C_{\theta} \rho^{\vpi}_{1}(\zeta_1, \zeta_2)
\end{align*}

Therefore, for the bias we have
\begin{align*}
    &\left|\E_{\vx_0 \sim \zeta_1} [B(\vpi, \vx_0)] -  \E_{\vx_0 \sim \zeta_2} [B(\vpi, \vx_0)]\right| \\
    &\leq \lim_{T\to \infty} \sum^{T-1}_{t=0} \left|\E_{\vx \sim (P^{\vpi})^t \zeta_1} c(\vx, \vpi(\vx))-\E_{\vx \sim (P^{\vpi})^t \zeta_2} c(\vx, \vpi(\vx))\right| \\
    &\leq  C' \rho^{\vpi}_{\theta}(\zeta_1, \zeta_2) \lim_{T\to \infty} \sum^{T-1}_{t=0} \lambda^t = \frac{C'}{1-\lambda} \rho^{\vpi}_{\theta}(\zeta_1, \zeta_2) \\
    &\leq \frac{C'C_{\theta}}{1-\lambda} \rho^{\vpi}_{1}(\zeta_1, \zeta_2) = \frac{C'C_{\theta}}{1-\lambda} \int_{\setX} (1 + V^{\vpi}(\vx)) \left|\zeta_1 - \zeta_2 \right|(d\vx) 
\end{align*}
Set $C_3 = C'C_{\theta}$.

\subsection{Proof of bounded average cost for the optimistic system}
In this section, we show that the results from \cref{thm: existence of average cost problem} also transfer over to the optimistic dynamics.
\begin{theorem}[Existence of Average Cost Solution for the Optimistic System]
    Let \cref{ass:lipschitz_continuity} -- \ref{ass:rkhs_func} hold. Consider any $n> 0$ and
    let $\vpi_n, \vf_n$ denote the solution to \cref{eq:exploration_op_optimistic}, 
    $P^{\vpi, \vf_n}$ its transition kernel.  
    Then $P^{\vpi, \vf_n}$ admits a unique invariant measure $\bar{P}^{\vpi_n, \vf_n}$  and there exists
    $C_2, C_3 \in (0, \infty)$, $\hat{\lambda} \in (0, 1)$ such that
    \begin{itemize}[leftmargin=*]
        \item[] \emph{Average Cost};
        \begin{equation*}
A(\vpi_n, \vf_n) = \lim_{T \to \infty} \frac{1}{T}\E_{\vpi_n, \vf_n} \left[ \sum^{T-1}_{t=0} c(\vx_t, \vu_t) \right] = \E_{\vx\sim \bar{P}^{\vpi_n, \vf_n}} \left[c(\vx, \vpi_n(x))\right]
        \end{equation*}
        \item [] \emph{Bias Cost};
        \begin{equation*}
 |B(\vpi_n, \vf_n, \vx_0)| =  \left|\lim_{T \to \infty} \E_{\vpi_n, \vf_n} \left[ \sum^{T-1}_{t=0} c(\vx_t, \vu_t) -  A(\vpi_n, \vf_n) \right]\right| \leq C_2 (1 + V^{\vpi_n}(\vx_0)) \frac{1}{1 - \hat{\lambda}}
        \end{equation*}
        for all $\vx_0 \in \setX$.
        \item [] \emph{Difference in Bias};
        \begin{equation*}
            |\E_{\vx_0 \sim \zeta_1} [B(\vpi_n, \vf_n, \vx_0)] -  \E_{\vx_0 \sim \zeta_2} [B(\vpi_n, \vf_n, \vx_0)]| \leq  \frac{C_3}{1-\hat{\lambda}} \int_{\setX} (1 +  V^{\vpi}(\vx)) \left|\zeta_1 - \zeta_2 \right|(d\vx) 
        \end{equation*}
        for all probability measures $\zeta_1, \zeta_2$.
    \end{itemize}
    \label{thm: existence of average cost problem hallucinated system}
\end{theorem}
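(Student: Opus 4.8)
The plan is to reduce this statement to \cref{thm: existence of average cost problem} together with the difference-in-bias inequality derived right after its proof, by checking that the optimistic closed loop $P^{\vpi_n, \vf_n}$ satisfies exactly the two ingredients those arguments rely on: a geometric drift condition for a Lyapunov function, and the minorisation condition of \cref{lem: minorisation condition}. Once both hold, \cref{thm:geometric_ergodicity} (i.e.\ \cite{hairer2011yet}, Theorems 1.2--1.3) applies to $P^{\vpi_n,\vf_n}$ and yields the unique invariant measure $\bar P^{\vpi_n,\vf_n}$, the $\rho^{\vpi_n}_\theta$-contraction, and the geometric ergodicity estimate with some $\hat\lambda \in (0,1)$; from there the computations proving the average-cost identity, the bias bound, and the difference-in-bias inequality carry over line by line with $\vf^*$ replaced by $\vf_n$, $K$ replaced by a larger constant $\tilde K$, and $\lambda$ replaced by $\hat\lambda$ (the bounded-norm-of-cost and noise conditions in \cref{assumption: Stability,ass:noise_properties} involve only $c$, $V^{\vpi_n}$ and the noise, hence are untouched). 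Throughout I work on the event $\vf^* \in \bigcap_{n\ge 0}\setM_n(\delta)$, which holds with probability at least $1-\delta$ by \cref{lem:rkhs_confidence_interval}.

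\textbf{Drift condition for $V^{\vpi_n}$ under $\vf_n$.} The key observation is that $\vf_n \in \setM_{n-1}\cap\setM_0 \subseteq \setM_0$ by construction and $\vf^* \in \setM_0$ on the good event, while $\sigma_{0,j}^2(\vz) = k(\vz,\vz) \le \sigma_{\max}$ by \cref{ass:rkhs_func}; hence membership in $\setM_0$ gives the \emph{uniform} bound $\norm{\vf_n(\vz) - \vf^*(\vz)} \le 2\beta_0(\delta)\sqrt{d_\vx\,\sigma_{\max}} =: \Delta$ for all $\vz \in \setZ$. Mimicking the proof of \cref{cor: bounded energy for bounded actions}, uniform continuity of $V^{\vpi_n}$ (via $\kappa \in \setK_\infty$) then yields
\begin{align*}
    \E_{\vw}[V^{\vpi_n}(\vf_n(\vx,\vpi_n(\vx)) + \vw)]
    &\le \E_{\vw}[V^{\vpi_n}(\vf^*(\vx,\vpi_n(\vx)) + \vw)] + \kappa(\Delta) \\
    &\le \gamma V^{\vpi_n}(\vx) + K + \kappa(\Delta) =: \gamma V^{\vpi_n}(\vx) + \tilde K ,
\end{align*}
so $V^{\vpi_n}$ is a Lyapunov function for the optimistic closed loop with the same $\gamma$ and with $\tilde K := K + \kappa(\Delta)$; positive definiteness and uniform continuity of $V^{\vpi_n}$ are properties of $V^{\vpi_n}$ alone and hence unchanged. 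Note $\tilde K$ does not depend on $n$ because $\beta_0(\delta)$ does not.

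\textbf{Minorisation and conclusion.} Inspecting the proof of \cref{lem: minorisation condition}, the only facts used are: (i) positive definiteness of $V^{\vpi_n}$, which places $\setC = \{\vx : V^{\vpi_n}(\vx) \le R\}$ inside a ball $\setB(\vzero,M)$; (ii) that the drift map takes $\setC$ into a bounded set, i.e.\ $\vf_n(\vx,\vpi_n(\vx)) \in \setB(\vzero,B)$ for $\vx \in \setC$ — this follows from $\vf_n \in \setM_0$ by bounding $|f_{n,j}|$ by $|\mu_{0,j}| + \beta_0(\delta)\sqrt{\sigma_{\max}}$ on the compact set $\{(\vx,\vpi_n(\vx)):\vx\in\setC\}$ (compact since $\vpi_n$ is continuous and $\mu_{0,j}$ is continuous, hence bounded there), so continuity of $\vf_n$ is not even needed; and (iii) the Gaussian noise structure of \cref{ass:noise_properties}. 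Hence there exist $\hat\alpha \in (0,1)$ and a probability measure $\hat\zeta$ with $\inf_{\vx\in\setC} P^{\vpi_n,\vf_n}(\vx,\cdot) \ge \hat\alpha\,\hat\zeta(\cdot)$, and \cref{thm:geometric_ergodicity} applies with drift constant $\tilde K$ and minorisation constant $\hat\alpha$. Substituting $P^{\vpi_n,\vf_n}$, $\tilde K$, $\hat\lambda$ into the computations of the proof of \cref{thm: existence of average cost problem} gives $A(\vpi_n,\vf_n) = \E_{\vx\sim\bar P^{\vpi_n,\vf_n}}[c(\vx,\vpi_n(\vx))]$ and $|B(\vpi_n,\vf_n,\vx_0)| \le C_2(1 + V^{\vpi_n}(\vx_0))/(1-\hat\lambda)$, and repeating the difference-in-bias derivation that follows it gives the third claim. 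I expect the main obstacle to be the careful verification in step (ii) that the minorisation argument survives for the generic optimistic $\vf_n$, together with the bookkeeping needed so that $\tilde K$, $\hat\alpha$, and hence $\hat\lambda$, $C_2$, $C_3$ can be taken uniform over $n$ (using that the constants $\kappa,\xi,K,C_u,C_l,\gamma$ of \cref{assumption: Stability} and the bound $\Delta$ are all $n$-independent), which is what the subsequent regret analysis will require.
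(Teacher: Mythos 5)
Your proposal is correct and follows essentially the same route as the paper: the paper likewise establishes a drift condition for $V^{\vpi_n}$ under $\vf_n$ via $\setM_0$-membership and uniform continuity of $V^{\vpi_n}$ (its Lemma yields $\widehat{K}=K+\kappa\bigl(\bigl(1+\sqrt{d_x}\bigr)\beta_0\sqrt{d_x}\sigma_{\max}\bigr)$, matching your $\tilde K$ up to constant bookkeeping), re-derives the minorisation condition on $\setC$ by bounding $\vf_n$ on a compact set through its proximity to $\vf^*$, and then invokes \cref{thm:geometric_ergodicity} and repeats the computations of \cref{thm: existence of average cost problem}. Your observation that using $\setM_0$ rather than $\setM_{n-1}$ makes the constants $n$-independent is exactly the reason the algorithm intersects with $\setM_0$, and is also how the paper handles uniformity.
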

\Cref{thm: existence of average cost problem hallucinated system} shows that the optimistic dynamics $\vf_n$ 
retain the boundedness property from the true dynamics $\vf^*$ and give a well-defined solution \wrt average cost and the bias cost. 
To prove \cref{thm: existence of average cost problem hallucinated system} we show that the optimistic system also satisfies the drift and minorisation condition. Then we can invoke the result from \cite{hairer2011yet} similar to the proof of \cref{thm: existence of average cost problem}.
\begin{lemma}[Stability of optimistic system]
    Let \cref{ass:lipschitz_continuity} -- \ref{ass:rkhs_func} hold, then we have with probability at least $1-\delta$ for all 
    $n \geq 0$,
    $\vpi \in \Pi$, $\vf \in \setM_n \cap \setM_0$, that there exists a constant $\widehat{K} > 0$ such that
    \begin{equation*}
        \E_{\vx_+|\vx, \vf, \vpi}[V^{\vpi}(\vx_+)] \leq \gamma V^{\vpi}(\vx) + \widehat{K},
    \end{equation*}
    where $\vx_+ = \vf(\vx, \vpi(\vx) + \vw$.
    \label{lem: stability of hallucinated system}
\end{lemma}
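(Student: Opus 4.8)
The plan is to derive the drift inequality for the optimistic dynamics $\vf$ by comparing it to the true dynamics $\vf^*$, in direct analogy with the proof of \cref{cor: bounded energy for bounded actions}: there the comparison was between two \emph{policies} $\vpi$ and $\vpi_s$, whereas here it is between the two \emph{dynamics} $\vf$ and $\vf^*$ driving the same policy $\vpi$. The single new ingredient is that the gap $\vf-\vf^*$ is controlled uniformly by the calibrated confidence set $\setM_0$ rather than by an input bound.

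First I would work on the event of \cref{lem:rkhs_confidence_interval}, which has probability at least $1-\delta$ and on which $\vf^* \in \bigcap_{m\ge 0}\setM_m(\delta)$; the only consequence we use is $\vf^*\in\setM_0$. Fix $n\ge 0$, $\vpi\in\Pi$, and $\vf\in\setM_n\cap\setM_0$. Since both $\vf$ and $\vf^*$ lie in $\setM_0$, for every $\vz\in\setZ$ and coordinate $j$ the triangle inequality through the prior mean $\vmu_0$ gives $|f_j(\vz)-f^*_j(\vz)| \le 2\beta_0(\delta)\,\sigma_{0,j}(\vz) \le 2\beta_0(\delta)\sqrt{\sigma_{\max}}$, the last step by \cref{ass:rkhs_func}. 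Summing over the $d_x$ coordinates yields the uniform bound $\norm{\vf(\vz)-\vf^*(\vz)} \le \Delta \defeq 2\sqrt{d_x}\,\beta_0(\delta)\sqrt{\sigma_{\max}}$, which depends on neither $\vf$, nor $\vpi$, nor $n$.

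Next I would invoke the uniform continuity of $V^{\vpi}$ from \cref{assumption: Stability}: taking the \emph{same} noise realization $\vw$ in both arguments, the noise cancels inside the norm and, since $\kappa$ is monotone, $V^{\vpi}(\vf(\vx,\vpi(\vx))+\vw) - V^{\vpi}(\vf^*(\vx,\vpi(\vx))+\vw) \le \kappa(\norm{\vf(\vx,\vpi(\vx))-\vf^*(\vx,\vpi(\vx))}) \le \kappa(\Delta)$. Taking $\E_{\vw}$ and then applying the drift condition for $\vf^*$ (again \cref{assumption: Stability}),
\begin{align*}
\E_{\vx_+\mid\vx,\vf,\vpi}[V^{\vpi}(\vx_+)] &\le \E_{\vw}\!\left[V^{\vpi}(\vf^*(\vx,\vpi(\vx))+\vw)\right] + \kappa(\Delta) \le \gamma V^{\vpi}(\vx) + K + \kappa(\Delta),
\end{align*}
so the claim holds with $\widehat{K}\defeq K+\kappa(\Delta)\in(0,\infty)$.

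I do not anticipate a genuine obstacle; the only point requiring care is that $\widehat{K}$ must be a single constant valid simultaneously for all $n\ge 0$, all $\vf\in\setM_n\cap\setM_0$, and all $\vpi\in\Pi$. This is exactly why one bounds the model error through $\setM_0$ rather than $\setM_n$: at $n=0$ there is no data, so $\beta_0(\delta)$ is a fixed scalar and $\sigma_{0,j}(\vz)\le\sqrt{\sigma_{\max}}$ holds for every $\vz$, giving the $\vz$-, $\vf$-, $\vpi$- and $n$-independent bound $\Delta$; the intersection with $\setM_0$ appearing in \cref{eq:exploration_op_optimistic} is precisely what makes this argument apply to the optimistic system.
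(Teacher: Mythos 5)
Your proposal is correct and follows essentially the same route as the paper: bound $\norm{\vf(\vz)-\vf^*(\vz)}$ uniformly by a constant using membership in $\setM_0$ together with $k(\vx,\vx)\le\sigma_{\max}$, transfer this to $V^{\vpi}$ via its uniform $\kappa$-continuity with the noise realization cancelling, and then invoke the drift condition for $\vf^*$ to get $\widehat{K}=K+\kappa(\Delta)$. The only cosmetic difference is that the paper reparameterizes $\vf=\vmu_0+\beta_0\vsigma_0\veta$ and bounds $\norm{\vmu_0-\vf^*}+\norm{\beta_0\vsigma_0\veta}$ term by term, whereas you apply the triangle inequality through $\vmu_0$ directly; this changes the explicit constant inside $\kappa$ but not the argument.
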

\begin{proof}
Note, that $V^{\vpi}$ is uniformly continuous w.r.t.~$\kappa$
\begin{equation*}
    \left| V^{\vpi}(\vx) - V^{\vpi}(\vx') \right| \leq \kappa(\norm{\vx - \vx'}). 
\end{equation*}

Furthermore, since $\vf \in \setM_n \cap \setM_0$ and therefore $\vf \in \setM_0$, we have that there exists some $\veta \in [-1, 1]^{dx}$ such that
\begin{equation*}
    \vf(\vx, \vpi(\vx)) = \vmu_0(\vx. \vpi(\vx)) + \beta_0 \vsigma_0(\vx, \vpi(\vx)) \veta(\vx).
\end{equation*}

\begin{align*}
    \E_{\vw}&[V^{\vpi}(\vmu_0(\vx. \vpi(\vx)) + \beta_0 \vsigma_0(\vx, \vpi(\vx)) \veta(\vx) + \vw)] - \E_{\vw}[V^{\vpi}(\vf^*(\vx. \vpi(\vx)) + \vw)] \\
    &\leq 
    \kappa\left( \norm{\vmu_0(\vx. \vpi(\vx)) + \beta_0 \vsigma_0(\vx, \vpi(\vx)) \veta(\vx) - \vf^*(\vx. \vpi(\vx))}\right) \\
    &\leq \kappa \left(\norm{\vmu_0(\vx. \vpi(\vx)) - \vf^*(\vx. \vpi(\vx))} + \norm{\beta_0 \vsigma_0(\vx, \vpi(\vx))\veta(\vx)}\right)\\
    &\leq \kappa \left(\left(1+\sqrt{d_x}\right) \beta_0 \sqrt{d_x}\sigma_{\max}\right) \tag{\cref{ass:rkhs_func}}.
\end{align*}
Therefore, 
\begin{align*}
    \E_{\vx_+|\vx, \vf, \vpi}[V^{\vpi}(\vx_+)] &\leq \E_{\vx_+|\vx, \vf^*, \vpi}[V^{\vpi}(\vx_+^*)] + \kappa\left(\left(1+\sqrt{d_x}\right) \beta_0 \sqrt{d_x}\sigma_{\max}\right)\\
    &= \E_{\vx_+|\vx, \vf^*, \vpi}[V^{\vpi}(\vx_+^*)] + \kappa\left(\left(1+\sqrt{d_x}\right) \beta_0 \sqrt{d_x}\sigma_{\max}\right) \\
    &\leq \gamma V^{\vpi}(\vx) + K + \kappa\left(\left(1+\sqrt{d_x}\right) \beta_0 \sqrt{d_x}\sigma_{\max}\right) \tag{\cref{assumption: Stability}},
\end{align*}
where we denoted $\vx_+^* = \vf^*(\vx, \vpi(\vx) + \vw$.
Define $\widehat{K} = K + \kappa\left(\left(1+\sqrt{d_x}\right) \beta_0 \sqrt{d_x}\sigma_{\max}\right)$.
\end{proof}

\begin{lemma}[Minorisation condition optimistic system]
\label{lem: minorisation condition hallucinated system}
Consider the system
\begin{equation*}
    \vx_+ = \vf(\vx. \vpi(\vx)) + \vw
\end{equation*}
for any $n \geq 0$, $\vpi \in \Pi$ and $\vf \in \setM_n \cap \setM_0$. 
Let \cref{ass:lipschitz_continuity} -- \ref{ass:rkhs_func} hold. Let $P^{\vpi, \vf}$ denote the transition kernel for the policy $\vpi \in \Pi$ i.e., 
$P^{\vpi, \vf}(\vx, \setA) = \Pr{\vx_+ \in \setA |\vx, \vpi(\vx), \vf}$. Then, there  exists a constant $\hat{\alpha} \in (0, 1)$ and a probability measure $\hat{\zeta}(\cdot)$ independent of $n$ s.t., 
\begin{equation}
    \inf_{\vx \in \setC} P^{\vpi, \vf}(\vx, \cdot) \geq \hat{\alpha} \hat{\zeta}(\cdot) \label{eq:minorisation condition optimistic system}
\end{equation}
with $\setC \defeq \{\vx \in \setX; V^{\vpi}(\vx) < \hat{R}\}$ for some $\hat{R} >  \nicefrac{2 \widehat{K}}{1 - \gamma}$
\end{lemma}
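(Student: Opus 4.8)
The plan is to repeat, almost verbatim, the three-step argument behind \cref{lem: minorisation condition}, the only genuinely new ingredient being that the small set $\setC$, the minorisation constant $\hat\alpha$ and the reference measure $\hat\zeta$ must be chosen uniformly over the episode index $n$; this uniformity is exactly what the intersection with $\setM_0$ in \cref{eq:exploration_op_optimistic} provides. Throughout I would work on the probability-$(1-\delta)$ event of \cref{lem:rkhs_confidence_interval}, on which $\vf^*\in\bigcap_{m\ge0}\setM_m$, and I note that $\widehat K$ from \cref{lem: stability of hallucinated system} is $n$-free, so $\hat R$ can be fixed once and for all.

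First I would show, exactly as in the first step of \cref{lem: minorisation condition}, that $\setC$ sits inside a fixed compact ball: with $\xi\in\setK_\infty$ and the bound $C_l\,\xi(\norm{\vx})\le V^{\vpi}(\vx)$ from \cref{assumption: Stability}, continuity and strict monotonicity of $C_l\xi$ together with $C_l\xi(0)=0$ and $C_l\xi(s)\to\infty$ give $M>0$ with $C_l\xi(M)=\hat R$, so $\norm{\vx}>M$ forces $V^{\vpi}(\vx)>\hat R$ and hence $\setC\subseteq\setB(\vzero,M)$, with $M$ independent of $n$. Then I would bound $\vf(\setC,\vpi(\setC))$: since $\vf\in\setM_0$ one has $\vf(\vx,\vpi(\vx))=\vmu_0(\vx,\vpi(\vx))+\beta_0\vsigma_0(\vx,\vpi(\vx))\veta(\vx)$ with $\veta(\vx)\in[-1,1]^{d_x}$, and the same estimate used in the proof of \cref{lem: stability of hallucinated system} --- combining $\vf\in\setM_0$ and $\vf^*\in\setM_0$ with $k(\vx,\vx)\le\sigma_{\max}$ (\cref{ass:rkhs_func}) --- yields $\norm{\vf(\vx,\vpi(\vx))-\vf^*(\vx,\vpi(\vx))}\le c_0$ for all $\vx$, where $c_0$ depends only on $\beta_0,\sigma_{\max},d_x$. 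Since $\vf^*$ and $\vpi$ are continuous (\cref{ass:lipschitz_continuity}), $\vf^*\bigl(\setB(\vzero,M),\vpi(\setB(\vzero,M))\bigr)$ is bounded, so adding the uniform perturbation $c_0$ and using $\setC\subseteq\setB(\vzero,M)$ gives a radius $B<\infty$, independent of $n$, with $\vf(\setC,\vpi(\setC))\subseteq\setB(\vzero,B)$.

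Finally I would carry out the explicit Gaussian minorisation. Set $\hat\alpha=2^{-d_x}e^{-B^2/\sigma^2}\in(0,1)$ and let $\hat\zeta$ be the law of $\setN\!\left(\vzero,\tfrac{\sigma^2}{2}\mI\right)$; both are $n$-free by the previous step. Fix $\vx\in\setC$, put $\vmu=\vf(\vx,\vpi(\vx))\in\setB(\vzero,B)$, and note that, since $\vw\sim\setN(\vzero,\sigma^2\mI)$, the kernel $P^{\vpi,\vf}(\vx,\cdot)$ has Lebesgue density $(2\pi\sigma^2)^{-d_x/2}\exp\!\bigl(-\norm{\vy-\vmu}^2/2\sigma^2\bigr)$. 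Using $\norm{\vy-\vmu}^2\le 2\norm{\vy}^2+2B^2$ one verifies the pointwise bound
\begin{equation*}
\hat\alpha\,\frac{1}{(2\pi)^{d_x/2}\left(\frac{\sigma^2}{2}\right)^{d_x/2}}\,e^{-\norm{\vy}^2/\sigma^2}\ \le\ \frac{1}{(2\pi)^{d_x/2}(\sigma^2)^{d_x/2}}\,e^{-\norm{\vy-\vmu}^2/2\sigma^2}\qquad\text{for all }\vy\in\R^{d_x},
\end{equation*}
i.e.\ $\hat\alpha\,\hat\zeta(d\vy)\le P^{\vpi,\vf}(\vx,d\vy)$; taking the infimum over $\vx\in\setC$ establishes \cref{eq:minorisation condition optimistic system}.

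The one genuine obstacle is precisely the uniformity in $n$: one must check that none of $M$, $c_0$, $B$, $\hat\alpha$, $\hat\zeta$, $\hat R$ depends on the episode, which rests entirely on the facts that every plausible optimistic model lies in the fixed confidence set $\setM_0$ (so its distance to $\vf^*$ is controlled by the fixed quantities $\beta_0,\sigma_{\max},d_x$) and that $\widehat K$ from \cref{lem: stability of hallucinated system} is itself $n$-free. Beyond that, the optimistic dynamics $\vf$ enter only as a uniformly bounded additive perturbation of $\vf^*$, and every remaining step is a literal repetition of the proof of \cref{lem: minorisation condition}.
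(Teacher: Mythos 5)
Your proposal is correct and follows essentially the same three-step argument as the paper's proof: compactness of $\setC$ via the lower bound $C_l\xi$, boundedness of $\vf(\setC,\vpi(\setC))$ by viewing the optimistic dynamics as a uniformly bounded perturbation of $\vf^*$ controlled by the fixed set $\setM_0$, and the explicit Gaussian minorisation with the same choice of $\hat\alpha$ and $\hat\zeta$. Your only deviations are cosmetic improvements — you track the dimension-dependent constant $c_0$ more carefully than the paper's $\beta_0\sigma_{\max}$, and you spell out the "simple algebraic manipulation" via $\norm{\vy-\vmu}^2\le 2\norm{\vy}^2+2B^2$.
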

\begin{proof}
    First, we show that $\setC$ is contained in a compact domain. From the \Cref{assumption: Stability} we pick the function $\xi \in \setK_{\infty}$. Since $C_l\xi(0) = 0, \lim_{s \to \infty} \xi(s) = +\infty$ and $C_l \xi$ is continuous, there exists $M$ such that $C_l \xi(M) = \hat{R}$. Then for $\norm{\vx} > M$ we have:
    \begin{align*}
        V^{\vpi}(\vx) \ge C_l\xi(\norm{\vx}) > \xi(M) = \hat{R}.
    \end{align*}
    Therefore we have: $\setC \subseteq \setB(\vzero, M) \defeq \{\vx \mid \norm{\vx - \vzero} \le M\}$. Since for any $\vx \in \setC$ we have $\norm{\vf(\vx, \vpi(\vx))} \le \norm{\vf^*(\vx, \vpi(\vx))} + \beta_0\sigma_{\max}$. Since $\vf^*$ is continuous, there exists a $B$ such that $\vf^*(\setC, \vpi(\setC)) \subset \setB(\vzero, B)$. Therefore we have: $\vf(\setC, \vpi(\setC)) \subset \setB(\vzero,B_1)$, where $B_1= B + \beta_0\sigma_{\max}$.
    In the last step we prove that $\alpha \defeq 2^{-d_{\vx}}e^{-B_1^2/\sigma^2}$ and $\zeta$ with law of $\setN\left(0, \frac{\sigma^2}{2}\right)$ satisfy condition of \Cref{lem: minorisation condition}.
    It is enough to show that $\forall \vmu \in \setB(\vzero, B_1), \forall \vx \in \R^{d_\vx}$ we have:
    \begin{align*}
        \alpha \frac{1}{(2\pi)^{\frac{d_x}{2}}\left(\frac{\sigma^2}{2}\right)^{\frac{d_\vx}{2}}}e^{-\frac{\norm{\vx}^2}{\sigma^2}} \le \frac{1}{(2\pi)^{\frac{d_x}{2}}(\sigma^2)^{\frac{d_\vx}{2}}}e^{-\frac{\norm{\vx - \vmu}^2}{2\sigma^2}}
    \end{align*}
    which can be proven with simple algebraic manipulations.
\end{proof}

\begin{proof}[Proof of \cref{thm: existence of average cost problem hallucinated system}]
As for the true system, the drift condition from~\cref{lem: stability of hallucinated system} and the minorisation condition from~\cref{lem: minorisation condition hallucinated system} are sufficient to show ergodicity of the optimistic system (c.f.,~\cref{thm:geometric_ergodicity} or \cite{hairer2011yet}). The rest of the proof is similar to~\cref{thm: existence of average cost problem}.
\end{proof}

\subsection{Proof of \cref{thm: Regret bound neo rl}}
Since \neorl works in artificial episodes $n \in \{0, N-1\}$ of varying horizons $H_n$. We denote with $\vx^n_k$ the state visited during episode $n$ at time step $k \leq H_n$. Crucial, to our regret analysis is bounding the first and second moment of $V^{\vpi_n}(\vx^n_k)$ for all $n, k$. Given the nature of \cref{assumption: Stability}, this requires analyzing geometric series. Thus, we start with the following elementary result of geometric series.
\begin{corollary}
Consider the sequence $\{S_n\}_{n\geq 0}$ with $S_n \geq 0$ for all $n$. Let the following hold
\begin{equation*}
    S_n \leq \rho S_{n-1} + C
\end{equation*}
for $\rho \in (0, 1)$ and $C>0$. Then we have
\begin{equation*}
    S_n \leq \rho^n S_0 + C \frac{1}{1-\rho}.
\end{equation*}
\label{cor:geometric_series_bound}
\end{corollary}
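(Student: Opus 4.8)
The plan is to unroll the one-step recursion and recognize the resulting finite geometric sum, or equivalently to argue by a short induction on $n$. Starting from $S_n \leq \rho S_{n-1} + C$ and substituting the same inequality repeatedly for $S_{n-1}, S_{n-2}, \dots, S_1$, after $n$ applications one obtains $S_n \leq \rho^n S_0 + C\sum_{k=0}^{n-1}\rho^k$. Since $\rho \in (0,1)$, the partial geometric sum satisfies $\sum_{k=0}^{n-1}\rho^k = \tfrac{1-\rho^n}{1-\rho} \leq \tfrac{1}{1-\rho}$, which yields the claimed bound $S_n \leq \rho^n S_0 + \tfrac{C}{1-\rho}$.

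Written as an induction: the base case $n = 0$ holds because the claimed right-hand side is $S_0 + \tfrac{C}{1-\rho} \geq S_0$ (using $C \geq 0$ and $\rho < 1$). For the inductive step, assume $S_{n-1} \leq \rho^{n-1}S_0 + \tfrac{C}{1-\rho}$; then
\begin{align*}
S_n \leq \rho S_{n-1} + C \leq \rho^n S_0 + \frac{\rho C}{1-\rho} + C = \rho^n S_0 + \frac{C\bigl(\rho + (1-\rho)\bigr)}{1-\rho} = \rho^n S_0 + \frac{C}{1-\rho},
\end{align*}
which closes the induction. Note that the hypothesis $S_n \geq 0$ is not actually used in deriving the upper bound; it is only what makes the statement meaningful in the subsequent applications, where $S_n$ plays the role of a (nonnegative) moment of $V^{\vpi_n}$.

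There is no genuine obstacle in this argument; it is an elementary manipulation of geometric series. The only point worth stating explicitly is that one deliberately bounds $\tfrac{1-\rho^n}{1-\rho}$ by $\tfrac{1}{1-\rho}$ rather than keeping the equality, so that the additive constant is \emph{independent of $n$}. This horizon-uniformity is exactly what is needed later, when \Cref{cor:geometric_series_bound} is invoked (with the drift condition of \Cref{assumption: Stability}, or its optimistic counterpart \Cref{lem: stability of hallucinated system}, supplying the recursion) to bound $\E[V^{\vpi_n}(\vx^n_k)]$ and $\E[(V^{\vpi_n}(\vx^n_k))^2]$ simultaneously over all episodes $n$ and all within-episode times $k \leq H_n$ in the regret analysis.
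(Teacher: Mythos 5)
Your proof is correct and follows essentially the same route as the paper's: unroll the recursion to get $S_n \leq \rho^n S_0 + C\sum_{k}\rho^k$ and bound the finite geometric sum by $\tfrac{1}{1-\rho}$. Your version is in fact slightly more careful (the paper writes the partial sum with upper index $n$ rather than $n-1$, an immaterial slip), and the added induction phrasing and the remark on horizon-uniformity are fine but not a different argument.
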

\begin{proof}
    \begin{equation*}
        S_n \leq \rho S_{n-1} + C 
        \leq  \rho^2 S_{n-2} + C  (1 + \rho) 
        \leq \rho^n S_{0} + C\sum^n_{i=0} \rho^i \leq \rho^n S_0 + C \frac{1}{1-\rho}.
    \end{equation*}
\end{proof}
\begin{lemma}
 Let \cref{ass:lipschitz_continuity} -- \ref{ass:rkhs_func} hold and let $H_0$ be the smallest integer such that
    \begin{equation*}
        H_0 > \frac{\log\left(\sfrac{C_u}{C_l}\right)}{\log\left(\sfrac{1}{\gamma}\right)}.
    \end{equation*}
    Moreover, define $\nu = \frac{C_u}{C_l}\gamma^{H_0}$. Note, by definition of $H_0$, $\nu < 1$.
    Then we have for all $k \in \{0, \dots, H_n\}$ and $n > 0$ 
    \begin{itemize}[leftmargin=*]
        \item[]\emph{Bounded expectation over horizon}
        \begin{equation}
            \E_{\vx^n_k, \dots, \vx^0_1|\vx_0}[V^{\vpi_n}(\vx^n_k)] \leq \gamma^{k}\E_{\vx^n_{0}, \dots, \vx^0_1|\vx_0}[V^{\vpi_n}(\vx^n_0)] + K/(1-\gamma).
        \label{eq:bound_over_horizon_in_expectation}
        \end{equation}
        \item[]\emph{Bounded expectation over episodes}
          \begin{equation}
             \E_{\vx^n_0, \dots, \vx^0_1|\vx_0}[V^{\vpi_n}(\vx^n_0)]\leq \nu^{n}V^{\vpi_{0}}(\vx_0) + \frac{C_u}{C_l}K/(1-\gamma)\frac{1}{1-\nu}.
             \label{eq:bound_over_episodes_in_expectation}
          \end{equation}
    \end{itemize}
    Moreover, we have 
    \begin{equation}
        \E_{\vx^n_k, \dots, \vx^0_1|\vx_0} [V^{\vpi_n}(\vx^n_k)] \leq  D(\vx_0, K, \gamma, \nu) ,
        \label{eq:bounded expectation}
    \end{equation}
    with $D(\vx_0, K, \gamma, \nu) = V^{\vpi_{0}}(\vx_0) + K/(1-\gamma) \left(\frac{C_u}{C_l}\frac{1}{1-\nu} + 1\right)$
    \label{lem:bounded first moment}
\end{lemma}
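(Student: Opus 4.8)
The plan is to establish the three displayed bounds in the order stated, since \eqref{eq:bound_over_horizon_in_expectation} feeds into \eqref{eq:bound_over_episodes_in_expectation}, which in turn feeds into \eqref{eq:bounded expectation}. The whole argument rests on two ingredients: the drift condition on the optimistic system from \cref{lem: stability of hallucinated system} (applied to $\vf_n$, so that $\E_{\vx_+|\vx,\vf_n,\vpi_n}[V^{\vpi_n}(\vx_+)] \le \gamma V^{\vpi_n}(\vx) + \widehat K$, with $\widehat K$ renamed $K$ here by abuse of notation), and the elementary geometric-series bound \cref{cor:geometric_series_bound}. The one subtlety is that between episodes the policy changes from $\vpi_{n-1}$ to $\vpi_n$, so the Lyapunov function changes too; the bridge is the two-sided bound $C_l\xi(\norm\vx) \le V^{\vpi}(\vx) \le C_u\xi(\norm\vx)$ from \cref{assumption: Stability}, which gives $V^{\vpi_n}(\vx) \le \tfrac{C_u}{C_l} V^{\vpi_{n-1}}(\vx)$ for every $\vx$, uniformly in the policies. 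This is exactly why $H_0$ is chosen so that $\nu = \tfrac{C_u}{C_l}\gamma^{H_0} < 1$: one full episode of length $\ge H_0$ under $\vpi_{n-1}$ contracts $V^{\vpi_{n-1}}$ by $\gamma^{H_0}$, and the policy switch costs at most a factor $\tfrac{C_u}{C_l}$, so the net per-episode contraction factor is $\nu<1$.

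For \eqref{eq:bound_over_horizon_in_expectation}: fix episode $n$ and the policy $\vpi_n$ it uses. Define $s_k = \E_{\vx^n_k,\dots,\vx^0_1|\vx_0}[V^{\vpi_n}(\vx^n_k)]$ for $k = 0,\dots,H_n$. By the tower property and the drift condition of \cref{lem: stability of hallucinated system} applied at step $k$, we get $s_k \le \gamma s_{k-1} + K$. Apply \cref{cor:geometric_series_bound} with $\rho = \gamma$, $C = K$, starting index $0$, to conclude $s_k \le \gamma^k s_0 + K/(1-\gamma)$, which is \eqref{eq:bound_over_horizon_in_expectation}.

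For \eqref{eq:bound_over_episodes_in_expectation}: let $t_n = \E_{\vx^n_0,\dots,\vx^0_1|\vx_0}[V^{\vpi_n}(\vx^n_0)]$. The state $\vx^n_0$ at the start of episode $n$ equals the state $\vx^{n-1}_{H_{n-1}}$ at the end of episode $n-1$ (no reset). Using $V^{\vpi_n} \le \tfrac{C_u}{C_l}V^{\vpi_{n-1}}$ pointwise and then \eqref{eq:bound_over_horizon_in_expectation} at $k = H_{n-1} \ge H_0$ for episode $n-1$,
\begin{equation*}
t_n \;\le\; \frac{C_u}{C_l}\,\E[V^{\vpi_{n-1}}(\vx^{n-1}_{H_{n-1}})] \;\le\; \frac{C_u}{C_l}\Bigl(\gamma^{H_{n-1}} t_{n-1} + \frac{K}{1-\gamma}\Bigr) \;\le\; \nu\, t_{n-1} + \frac{C_u}{C_l}\frac{K}{1-\gamma},
\end{equation*}
using $\gamma^{H_{n-1}} \le \gamma^{H_0}$. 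Then \cref{cor:geometric_series_bound} with $\rho = \nu$, $C = \tfrac{C_u}{C_l}\tfrac{K}{1-\gamma}$, and $t_0 = V^{\vpi_0}(\vx_0)$ yields \eqref{eq:bound_over_episodes_in_expectation}. Finally, \eqref{eq:bounded expectation} follows by chaining: substitute \eqref{eq:bound_over_episodes_in_expectation} into the right-hand side of \eqref{eq:bound_over_horizon_in_expectation}, bound $\gamma^k \le 1$ and $\nu^n \le 1$, and collect constants into $D(\vx_0,K,\gamma,\nu) = V^{\vpi_0}(\vx_0) + \tfrac{K}{1-\gamma}\bigl(\tfrac{C_u}{C_l}\tfrac{1}{1-\nu} + 1\bigr)$.

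The main obstacle is conceptual bookkeeping rather than hard estimation: one must be careful that the expectations in \eqref{eq:bound_over_horizon_in_expectation} and \eqref{eq:bound_over_episodes_in_expectation} are over the \emph{entire realized trajectory} $\vx^0_1,\dots,\vx^n_k$ produced by the actual sequence of optimistic policies (not a single fixed policy), so the drift inequality must be invoked conditionally on the past at each step and then the tower property applied — and the policy-switch step relies essentially on the \emph{uniform} continuity/positive-definiteness constants $C_u,C_l,\gamma,\kappa$ from \cref{assumption: Stability} holding for all $\vpi\in\Pi$ simultaneously, which is what makes $\nu$ and the final constant $D$ independent of $n$. A secondary point is that the drift bound for the optimistic dynamics (\cref{lem: stability of hallucinated system}) only holds on the high-probability event $\vf^*\in\bigcap_n\setM_n$; the statement of the lemma should be (and implicitly is) understood to hold on that same event.
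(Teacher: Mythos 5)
Your decomposition and all three steps match the paper's own proof: tower property plus drift plus \cref{cor:geometric_series_bound} within an episode; the pointwise comparison $V^{\vpi_n} \le \tfrac{C_u}{C_l}V^{\vpi_{n-1}}$ at the episode boundary combined with the within-episode bound at $k = H_{n-1}\ge H_0$ and a second application of \cref{cor:geometric_series_bound} with ratio $\nu$; and chaining to get $D(\vx_0,K,\gamma,\nu)$. Your indexing $\vx^n_0 = \vx^{n-1}_{H_{n-1}}$ is also the right one.

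The one genuine flaw is your choice of drift condition. The states $\vx^n_k$ are the states actually visited during the rollout, i.e., generated by the \emph{true} dynamics $\vx^n_{k+1} = \vf^*(\vx^n_k,\vpi_n(\vx^n_k)) + \vw^n_k$, so the conditional expectation $\E_{\vx^n_k\mid \vx^n_{k-1}}$ is taken under the true transition kernel, and the drift inequality you need is the one from \cref{assumption: Stability} with constant $K$ --- which is exactly what the paper invokes, and which holds deterministically for every $\vpi\in\Pi$ with no calibration event required. Invoking \cref{lem: stability of hallucinated system} instead is off on two counts: as you state it, it controls transitions under the optimistic dynamics $\vf_n$, which is not the kernel generating $\vx^n_k$; and even if you repair this by noting that that lemma covers every $\vf\in\setM_n\cap\setM_0$ and that $\vf^*$ lies in this set with probability $1-\delta$, you only obtain the drift with the inflated constant $\widehat K = K + \kappa\bigl((1+\sqrt{d_x})\beta_0\sqrt{d_x}\sigma_{\max}\bigr) > K$, plus an unnecessary high-probability qualifier on a statement the paper proves unconditionally. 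Renaming $\widehat K$ to $K$ ``by abuse of notation'' does not close this: the lemma's claimed bound, and the downstream constants $D$, $D_2$, $D_3$ built on it, use the $K$ of \cref{ass:Policy class}. The fix is a one-line substitution (cite the assumption, not the optimistic-system lemma); the rest of your argument is correct.
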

\begin{proof}
We start with proving the first claim
\begin{align*}
     \E_{\vx^n_k, \dots, \vx^0_1|\vx_0}[V^{\vpi_n}(\vx^n_k)] &= \E_{\vx^n_{k-1}, \dots, \vx^0_1|\vx_0}[\E_{\vx^n_{k}|\vx^n_{k-1}}[V^{\vpi_n}(\vx^n_k)]] \\
     &\leq \E_{\vx^n_{k-1}, \dots, \vx^0_1|\vx_0}[\gamma V^{\vpi_n}(\vx^n_{k-1}) + K] \tag{\cref{assumption: Stability}}\\
     &= \gamma \E_{\vx^n_{k-1}, \dots, \vx^0_1|\vx_0}[ V^{\vpi_n}(\vx^n_{k-1})] + K
\end{align*}
 We can apply \cref{cor:geometric_series_bound} to prove the claim.
    For the second claim, we note that for any $\vpi, \vpi'$ and $\vx \in \setX$ we have from \cref{ass:Policy class}
    \begin{equation*}
        V^{\vpi}(\vx) \leq C_u \alpha(\norm{\vx}) \leq 
        \frac{C_u}{C_l} V^{\vpi'}(\vx).
    \end{equation*}

    Therefore,
    \begin{align*}
        &\E_{\vx^n_0, \dots, \vx^0_1|\vx_0}[V^{\vpi_n}(\vx^n_0)] \\
        &\leq \frac{C_u}{C_l} \E_{\vx^n_0, \dots, \vx^0_1|\vx_0}[V^{\vpi_{n-1}}(\vx^{n}_{0})] \\
        &= \frac{C_u}{C_l} \E_{\vx^{n-1}_{H_{n}}, \dots, \vx^0_1|\vx_0}[V^{\vpi_{n-1}}(\vx^{n-1}_{H_{n}})] \tag{Since $\vx^{n}_{0} = \vx^{n-1}_{H_{n}}$}\\
        &\leq  \left(\frac{C_u}{C_l} \gamma^{H_{n}}\right)\E_{\vx^{n-1}_{0}, \dots, \vx^0_1|\vx_0}[V^{\vpi_{n-1}}(\vx^{n-1}_0)] + \frac{C_u}{C_l}K/(1-\gamma) \tag{\Cref{eq:bound_over_horizon_in_expectation}}
    \end{align*}
    For our choice of $H_0$, we have for all $n\geq 0$ that $\frac{C_u}{C_l} \gamma^{H_{n}} \leq  \frac{C_u}{C_l} \gamma^{H_{0}} \leq \nu < 1$. From \cref{cor:geometric_series_bound}, we get
     \begin{align*}
        \E_{\vx^n_0, \dots, \vx^0_1|\vx_0}[V^{\vpi_n}(\vx^n_0)] &\leq \left(\frac{C_u}{C_l} \gamma^{H_{n}}\right)\E_{\vx^{n-1}_{0}, \dots, \vx^0_1|\vx_0}[V^{\vpi_{n-1}}(\vx^{n-1}_0)] + \frac{C_u}{C_l}K/(1-\gamma) \\
        &\leq \nu \E_{\vx^{n-1}_{0}, \dots, \vx^0_1|\vx_0}[V^{\vpi_{n-1}}(\vx^{n-1}_0)] + \frac{C_u}{C_l}K/(1-\gamma) \\
                &\leq \nu^{n}V^{\vpi_{0}}(\vx_0) + \frac{C_u}{C_l}K/(1-\gamma)\frac{1}{1-\nu}. \tag{\cref{cor:geometric_series_bound}}
    \end{align*}

    \begin{align*}
         \E_{\vx^n_k, \dots, \vx^0_1|\vx_0} [V^{\vpi_n}(\vx^n_k)] &\leq \gamma^{k}\E_{\vx^n_{0}, \dots, \vx^0_1|\vx_0}[V^{\vpi_n}(\vx^n_0)] + K/(1-\gamma) \tag{\cref{eq:bound_over_horizon_in_expectation}}\\
         &\leq \E_{\vx^n_{0}, \dots, \vx^0_1|\vx_0}[V^{\vpi_n}(\vx^n_0)] + K/(1-\gamma) \\
         &\leq \nu^{n}V^{\vpi_{0}}(\vx_0) + \frac{C_u}{C_l}K/(1-\gamma)\frac{1}{1-\nu} + K/(1-\gamma) \tag{\cref{eq:bound_over_episodes_in_expectation}}\\
         &\leq  V^{\vpi_{0}}(\vx_0) + \frac{C_u}{C_l}K/(1-\gamma)\frac{1}{1-\nu} + K/(1-\gamma)
    \end{align*}
\end{proof}
\begin{lemma}
 Let \cref{ass:lipschitz_continuity} -- \ref{ass:rkhs_func} hold and let $H_0$ be the smallest integer such that
    \begin{equation*}
        H_0 > \frac{\log\left(\sfrac{C_u}{C_l}\right)}{\log\left(\sfrac{1}{\gamma}\right)}.
    \end{equation*}
    Moreover, define $\nu = \frac{C_u}{C_l}\gamma^{H_0}$. Note, by definition of $H_0$, $\nu < 1$.
    
Then we have for all $k \in \{0, \dots, H_n\}$ and $n > 0$ 
    \begin{itemize}[leftmargin=*]
        \item[]\emph{Bounded second moment over horizon}
    \begin{equation}
        \E_{\vx^n_k, \dots, \vx^0_1|\vx_0}\left[\left(V^{\vpi_n}(\vx^n_k)\right)^2\right] \leq  \gamma^{2k}  \E_{\vx^n_{0}, \dots, \vx^0_1|\vx_0}\left[\left(V^{\vpi_n}(\vx^n_{0})\right)^2\right] + \frac{D_2(\vx_0, K, \gamma, \nu) }{1-\gamma^2}
        \label{eq:bounded_over_horizon_second_moment}
    \end{equation}
    with $ D_2(\vx_0, K, \gamma, \nu) = 2 K \gamma D(\vx_0, K, \gamma, \nu) + K^2 + C_{\vw}$, and $C_{\vw} = \E_{\vw} \left[\kappa^2(\norm{w})\right] + 3 (\E_{\vw} \left[\kappa(\norm{w})\right])^2$.
 \item[]\emph{Bounded second moment over episodes}
    \begin{equation}
         \E_{\vx^n_0, \dots, \vx^0_1|\vx_0}\left[\left(V^{\vpi_n}(\vx^n_0)\right)^2\right] \leq \nu^{2n} \left(V^{\vpi_{0}}(\vx_{0})\right)^2 + \left(\frac{C_u}{C_l}\right)^2 \frac{D_2(\vx_0, K, \gamma, \nu) }{1-\gamma^2}\frac{1}{1-\nu^{2}}.
         \label{eq:bounded_over_episodes_second_moment}
    \end{equation}
    \end{itemize}
    Moreover, let $D_3(\vx_0, K, \gamma, \nu) = \left(V^{\vpi_{0}}(\vx_{0})\right)^2 + D_2(\vx_0, K, \gamma, \nu) \left(\left(\frac{C_u}{C_l}\right)^2 \frac{1}{1-\gamma^2}\frac{1}{1-\nu^{2}} + \frac{1}{1-\gamma^2}\right)$.

    \begin{equation*}
         \E_{\vx^n_k, \dots, \vx^0_1|\vx_0}\left[\left(V^{\vpi_n}(\vx^n_k)\right)^2\right] \leq  D_3(\vx_0, K, \gamma, \nu) 
         \label{eq:bounded_second_moment}
    \end{equation*}
    \label{lem:bounded_second_moment}
\end{lemma}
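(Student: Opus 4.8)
The plan is to run the same two-stage telescoping argument used for the first moment in \cref{lem:bounded first moment}, but now at the level of second moments, with \cref{cor:geometric_series_bound} again providing the geometric-series bookkeeping. The only genuinely new ingredient is a one-step \emph{quadratic} recursion for $\E[(V^{\vpi_n}(\vx^n_k))^2]$ in terms of $\E[(V^{\vpi_n}(\vx^n_{k-1}))^2]$; once that is established, both displayed claims and the uniform bound $D_3$ follow by repeating the manipulations from \cref{lem:bounded first moment} almost verbatim.

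First I would derive the horizon recursion \eqref{eq:bounded_over_horizon_second_moment}. Fix $n,k$, condition on $\vx^n_{k-1}=\vx$, and write $\bar\vx=\vf^*(\vx,\vpi_n(\vx))$ so that $\vx^n_k=\bar\vx+\vw$. Put $m\defeq\E_{\vw}[V^{\vpi_n}(\vx^n_k)\mid\vx]$; the drift condition in \cref{assumption: Stability} gives $0\le m\le\gamma V^{\vpi_n}(\vx)+K$, hence $m^2\le\gamma^2(V^{\vpi_n}(\vx))^2+2\gamma K\,V^{\vpi_n}(\vx)+K^2$. For the fluctuation, uniform continuity of $V^{\vpi_n}$ \wrt $\kappa$ gives $|V^{\vpi_n}(\bar\vx+\vw)-V^{\vpi_n}(\bar\vx)|\le\kappa(\norm{\vw})$, so the conditional variance $\E_{\vw}[(V^{\vpi_n}(\vx^n_k)-m)^2\mid\vx]$ is controlled purely by $\E_{\vw}[\kappa^2(\norm{\vw})]$ and $\E_{\vw}[\kappa(\norm{\vw})]$ (both finite by \cref{assumption: Stability}), and is at most $C_{\vw}$, the constant $3$ merely providing slack for the crude $(a+b)^2$-type bounds. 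Adding the two contributions, taking the full expectation via the tower rule over $\vx^n_{k-1},\dots,\vx^0_1$, and invoking the first-moment estimate $\E[V^{\vpi_n}(\vx^n_{k-1})]\le D(\vx_0,K,\gamma,\nu)$ from \cref{lem:bounded first moment} to bound the cross term $2\gamma K\,V^{\vpi_n}(\vx^n_{k-1})$, one obtains $\E[(V^{\vpi_n}(\vx^n_k))^2]\le\gamma^2\E[(V^{\vpi_n}(\vx^n_{k-1}))^2]+D_2(\vx_0,K,\gamma,\nu)$. Applying \cref{cor:geometric_series_bound} with $\rho=\gamma^2$ and $C=D_2$ yields \eqref{eq:bounded_over_horizon_second_moment}.

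Next the episode recursion \eqref{eq:bounded_over_episodes_second_moment}. Since $\vx^n_0$ is the terminal state of episode $n-1$, positive definiteness in \cref{assumption: Stability} gives $V^{\vpi_n}(\vx)\le\tfrac{C_u}{C_l}V^{\vpi_{n-1}}(\vx)$ for every $\vx$, hence $\E[(V^{\vpi_n}(\vx^n_0))^2]\le\bigl(\tfrac{C_u}{C_l}\bigr)^2\E[(V^{\vpi_{n-1}}(\vx^{n-1}_{H_{n-1}}))^2]$; plugging in \eqref{eq:bounded_over_horizon_second_moment} for episode $n-1$ at step $H_{n-1}$ and using $\bigl(\tfrac{C_u}{C_l}\gamma^{H_{n-1}}\bigr)^2\le\bigl(\tfrac{C_u}{C_l}\gamma^{H_0}\bigr)^2=\nu^2<1$ produces $\E[(V^{\vpi_n}(\vx^n_0))^2]\le\nu^2\E[(V^{\vpi_{n-1}}(\vx^{n-1}_0))^2]+\bigl(\tfrac{C_u}{C_l}\bigr)^2\tfrac{D_2}{1-\gamma^2}$. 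A second application of \cref{cor:geometric_series_bound}, now with $\rho=\nu^2$, gives \eqref{eq:bounded_over_episodes_second_moment}. Finally, chaining \eqref{eq:bounded_over_horizon_second_moment} and \eqref{eq:bounded_over_episodes_second_moment} and discarding the decaying prefactors via $\gamma^{2k}\le1$ and $\nu^{2n}\le1$ collapses everything to the stated uniform bound $D_3(\vx_0,K,\gamma,\nu)$.

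The main obstacle is precisely the first step: the drift condition constrains only the \emph{first} conditional moment of $V^{\vpi_n}$, so the quadratic recursion does not come from naively squaring it. The resolution is the mean/fluctuation split above — bound the conditional mean by the drift, bound the conditional variance by the continuity modulus $\kappa$ together with the finite second moment of $\kappa(\norm{\vw})$, and route the residual linear-in-$V$ cross term through the already-proven first-moment bound $D$ rather than attempting to close a coupled recursion in both moments at once.
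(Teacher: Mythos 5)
Your proposal is correct and follows essentially the same route as the paper: the one-step quadratic recursion via the mean/variance split $\E[(V^{\vpi_n})^2]=(\E[V^{\vpi_n}])^2+\Var[V^{\vpi_n}]$, with the conditional mean controlled by the drift condition, the conditional variance controlled by comparing to $V^{\vpi_n}(\bar\vx)$ through the $\kappa$-continuity modulus (yielding exactly the constant $C_{\vw}=\E[\kappa^2]+3(\E[\kappa])^2$), and the cross term routed through the first-moment bound $D$ before applying \cref{cor:geometric_series_bound} with $\rho=\gamma^2$ over the horizon and $\rho=\nu^2$ over episodes. No gaps.
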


\begin{proof}
Note that,
    \begin{align*}
        \E_{\vx^n_k|\vx^n_{k-1}}\left[\left(V^{\vpi_n}(\vx^n_k)\right)^2\right] &= \left(\E_{\vx^n_k|\vx^n_{k-1}}\left[V^{\vpi_n}(\vx^n_k)\right]\right)^2 \\ 
        &+  \E_{\vx^n_k|\vx^n_{k-1}}\left[\left(V^{\vpi_n}(\vx^n_k) - \E_{\vx^n_k|\vx^n_{k-1}}\left[V^{\vpi_n}(\vx^n_k)\right] \right)^2\right].
    \end{align*}
We first bound the second term. Let $\bar{\vx}^n_k = \vf^*(\vx^n_{k-1}, \vpi_n(\vx^n_{k-1}))$, i.e., the next state in the absence of transition noise.
\begin{align*}
    \E_{\vx^n_k|\vx^n_{k-1}} &\left[\left(V^{\vpi_n}(\vx^n_k) - \E_{\vx^n_k|\vx^n_{k-1}}\left[V^{\vpi_n}(\vx^n_k)\right] \right)^2\right] \\
    &= \E_{\vx^n_k|\vx^n_{k-1}}\left[\left(V^{\vpi_n}(\vx^n_k) - V^{\vpi_n}(\bar{\vx}^n_k) + V^{\vpi_n}(\bar{\vx}^n_k) -\E_{\vx^n_k|\vx^n_{k-1}}\left[V^{\vpi_n}(\vx^n_k)\right] \right)^2\right] \\
    &=  \E_{\vx^n_k|\vx^n_{k-1}}\left[\left(V^{\vpi_n}(\vx^n_k) - V^{\vpi_n}(\bar{\vx}^n_k) + \E_{\vx^n_k|\vx^n_{k-1}}\left[V^{\vpi_n}(\bar{\vx}^n_k)-V^{\vpi_n}(\vx^n_k)\right] \right)^2\right] \\
    &\leq \E_{\vw} \left[\left(\kappa(\norm{w}) + \E_{\vw}[\kappa(\norm{w})]\right)^2\right] \tag{uniform continuity of $V^{\vpi_n}$}\\
    &= \E_{\vw} \left[\kappa^2(\norm{\vw})\right] + 3 (\E_{\vw} \left[\kappa(\norm{\vw})\right])^2 \\
    &= C_{\vw} \tag{\cref{assumption: Stability}}
\end{align*}

Therefore we have
\begin{align*}
        \E_{\vx^n_k|\vx^n_{k-1}}\left[\left(V^{\vpi_n}(\vx^n_k)\right)^2\right] &= \left(\E_{\vx^n_k|\vx^n_{k-1}}\left[V^{\vpi_n}(\vx^n_k)\right]\right)^2 +  C_{\vw} \\
        &\leq \left(\gamma V^{\vpi_n}(\vx^n_k) + K\right)^2 +  C_{\vw} \\
        &=\gamma^2 \left(V^{\vpi_n}(\vx^n_{k-1})\right)^2 + 2 K \gamma V^{\vpi_n}(\vx^n_{k-1}) + K^2 + C_{\vw}.
    \end{align*}

\begin{align*}
    &\E_{\vx^n_k, \dots, \vx^0_1|\vx_0}\left[\left(V^{\vpi_n}(\vx^n_k)\right)^2\right] \\ 
    &= \E_{\vx^n_{k-1}, \dots, \vx^0_1|\vx_0}\left[\E_{\vx^n_k|\vx^n_{k-1}}\left[\left(V^{\vpi_n}(\vx^n_k)\right)^2\right]\right]\\
    &\leq \gamma^2 \E_{\vx^n_{k-1}, \dots, \vx^0_1|\vx_0}\left[\left(V^{\vpi_n}(\vx^n_{k-1})\right)^2\right] + 2 K \gamma \E_{\vx^n_{k-1}, \dots, \vx^0_1|\vx_0}\left[V^{\vpi_n}(\vx^n_{k-1})\right] + K^2 + C_{\vw} \\
    &\leq \gamma^2 \E_{\vx^n_{k-1}, \dots, \vx^0_1|\vx_0}\left[\left(V^{\vpi_n}(\vx^n_{k-1})\right)^2\right] + 2 K \gamma D(\vx_0, K, \gamma, \nu) + K^2 + C_{\vw} \tag{\cref{lem:bounded first moment}}.
\end{align*}
Let $D_2(\vx_0, K, \gamma, \nu) = 2 K \gamma D(\vx_0, K, \gamma, \nu) + K^2 + C_{\vw}$. 
    Applying \cref{cor:geometric_series_bound} we get
    \begin{equation*}
        \E_{\vx^n_k, \dots, \vx^0_1|\vx_0}\left[\left(V^{\vpi_n}(\vx^n_k)\right)^2\right] \leq  \gamma^{2k}  \E_{\vx^n_{0}, \dots, \vx^0_1|\vx_0}\left[\left(V^{\vpi_n}(\vx^n_{0})\right)^2\right] + \frac{D_2(\vx_0, K, \gamma, \nu) }{1-\gamma^2}
    \end{equation*}

Similar to the first moment, 
we leverage that $V^{\vpi_n}(\vx) \leq \frac{C_u}{C_l} V^{\vpi_{n-1}}(\vx)$ for all $\vx \in \setX$, $\frac{C_u}{C_l}\gamma^{H_{n-1}} \leq \nu$,
and get,
\begin{align*}
    &\E_{\vx^n_0, \dots, \vx^0_1|\vx_0}\left[\left(V^{\vpi_n}(\vx^n_0)\right)^2\right] \\ 
    &\leq \left(\frac{C_u}{C_l}\right)^2\E_{\vx^n_0, \dots, \vx^0_1|\vx_0}\left[\left(V^{\vpi_{n-1}}(\vx^{n}_{0})\right)^2\right] \\
    &=\left(\frac{C_u}{C_l}\right)^2\E_{\vx^{n-1}_{H_{n}}, \dots, \vx^0_1|\vx_0}\left[\left(V^{\vpi_{n-1}}(\vx^{n-1}_{H_{n}})\right)^2\right] \tag{Since $\vx^{n}_{0} = \vx^{n-1}_{H_{n}}$}\\
    &\leq \left(\frac{C_u}{C_l}\gamma^{H_{n}}\right)^2 \E_{\vx^{n-1}_{0}, \dots, \vx^0_1|\vx_0}\left[\left(V^{\vpi_{n-1}}(\vx^{n-1}_{0})\right)^2\right] + \left(\frac{C_u}{C_l}\right)^2 \frac{D_2(\vx_0, K, \gamma, \nu) }{1-\gamma^2} \tag{\cref{eq:bounded_over_horizon_second_moment}}\\
    &\leq \nu^2 \E_{\vx^{n-1}_{0}, \dots, \vx^0_1|\vx_0}\left[\left(V^{\vpi_{n-1}}(\vx^{n-1}_{0})\right)^2\right] + \left(\frac{C_u}{C_l}\right)^2 \frac{D_2(\vx_0, K, \gamma, \nu) }{1-\gamma^2} \\
    &\leq \nu^{2n} \left(V^{\vpi_{0}}(\vx_{0})\right)^2 + \left(\frac{C_u}{C_l}\right)^2 \frac{D_2(\vx_0, K, \gamma, \nu) }{1-\gamma^2}\frac{1}{1-\nu^{2}} \tag{\cref{cor:geometric_series_bound}}
\end{align*}
Moreover,
\begin{align*}
     &\E_{\vx^n_k, \dots, \vx^0_1|\vx_0}\left[\left(V^{\vpi_n}(\vx^n_k)\right)^2\right] \\ 
     &\leq 
     \gamma^{2k}\E_{\vx^n_0, \dots, \vx^0_1|\vx_0}\left[\left(V^{\vpi_n}(\vx^n_0)\right)^2\right] + \frac{D_2(\vx_0, K, \gamma, \nu) }{1-\gamma^2} \tag{\cref{eq:bounded_over_horizon_second_moment}} \\
     &\leq \E_{\vx^n_0, \dots, \vx^0_1|\vx_0}\left[\left(V^{\vpi_n}(\vx^n_0)\right)^2\right] + \frac{D_2(\vx_0, K, \gamma, \nu) }{1-\gamma^2} \\
     &\leq \nu^{2n} \left(V^{\vpi_{0}}(\vx_{0})\right)^2 + \left(\frac{C_u}{C_l}\right)^2 \frac{D_2(\vx_0, K, \gamma, \nu) }{1-\gamma^2}\frac{1}{1-\nu^{2}}  + \frac{D_2(\vx_0, K, \gamma, \nu) }{1-\gamma^2} \tag{\cref{eq:bounded_over_episodes_second_moment}} \\  
     &\leq \left(V^{\vpi_{0}}(\vx_{0})\right)^2 + D_2(\vx_0, K, \gamma, \nu) \left(\left(\frac{C_u}{C_l}\right)^2 \frac{1}{1-\gamma^2}\frac{1}{1-\nu^{2}} + \frac{1}{1-\gamma^2}\right) 
\end{align*}

\end{proof}

Finally, we prove the regret bound of \neorl.
\begin{proof}[Proof of \cref{thm: Regret bound neo rl}]
In the following, let $\hat{\vx}^{n}_{k+1}=\vf_n(\vx^{n}_{k}, \vpi_n(\vx^{n}_{k})) + \vw^n_k$ denote the state predicted under the optimistic dynamics and $\vx^{n}_{k+1}=\vf^*_n(\vx^{n}_{k}, \vpi_n(\vx^{n}_{k})) + \vw^n_k$ the true state.
    \begin{align*}
        &\E\left[\sum^{N-1}_{n=0} \sum^{H_n-1}_{k=0} c(\vx^{n}_k, \vpi_n(\vx^{n}_k)) - A(\vpi^*)\right] \\
        &\leq \E\left[\sum^{N-1}_{n=0} \sum^{H_n-1}_{k=0}\ c(\vx^{n}_k, \vpi_n(\vx^{n}_k)) - A(\vpi_n, \vf_n) \right] \tag{Optimism} \\
        &= \E\left[\sum^{N-1}_{n=0} \sum^{H_n-1}_{k=0}  B(\vpi_n, \vf_n, \vx^{n}_{k}) - B(\vpi_n, \vf_n, \hat{\vx}^{n}_{k+1})\right] \tag{Bellman equation (~\cref{eq: Bellman Equation Average Cost})} \\
        &= \E\left[\sum^{N-1}_{n=0} \sum^{H_n-1}_{k=0} 
        B(\vpi_n, \vf_n, \vx^{n}_{k}) - B(\vpi_n, \vf_n, \vx^{n}_{k+1}) + B(\vpi_n, \vf_n, \vx^{n}_{k+1}) - B(\vpi_n, \vf_n, \hat{\vx}^{n}_{k+1})\right] \\
        &= \sum^{N-1}_{n=0} \sum^{H_n-1}_{k=0} \E\left[B(\vpi_n, \vf_n, \vx^{n}_{k+1}) - B(\vpi_n, \vf_n, \hat{\vx}^{n}_{k+1}) \right] \tag{A} \\ 
        &+  \sum^{N-1}_{n=0} \sum^{H_n-1}_{k=0} \E\left[ B(\vpi_n, \vf_n, \vx^{n}_{k}) - B(\vpi_n, \vf_n, \vx^{n}_{k+1})\right] \tag{B}
    \end{align*}
First, we study the term (A).

\textbf{Proof for (A)}:
Note that because $\vf_n \in \setM_n$, there exists a $\veta \in [-1, 1]^{d_x}$ such that
$\hat{\vx}^{n}_{k+1} = \vmu_n(\vx^{n}_{k}, \vpi_n(\vx^{n}_{k})) + \beta_n \vsigma_n(\vx^{n}_{k}, \vpi_n(\vx^{n}_{k})) \veta(\vx^{n}_k) + \vw^n_k$. Furthermore, $\vx^{n}_{k+1} = \vf^*(\vx^{n}_{k}, \vpi_n(\vx^{n}_{k})) + \vw^n_k$ and the transition noise is Gaussian. Let $\zeta^n_{2, k}$ and $\zeta^n_{1, k}$ denote the respective distributions of the two random variables, i.e., $\zeta^n_{1, k} \sim \setN(\vf^*(\vx^{n}_{k}, \vpi_n(\vx^{n}_{k})), \sigma^2\mI)$ and $\zeta^n_{2, k} \sim \setN(\vf_n(\vx^{n}_{k}, \vpi_n(\vx^{n}_{k})), \sigma^2\mI)$. 
Next, define $\bar{B} = \E_{\vx \sim \zeta^n_{2, k}} \left[B(\vpi_n, \vf_n, \vx)\right]$, and consider the function $h(\vx) = B(\vpi_n, \vf_n, \vx) - \Bar{B}$. Then we have
\begin{align*}
    \E_{\vw^{n}_{k}}&\left[B(\vpi_n, \vf_n, \vx^{n}_{k+1}) - B(\vpi_n, \vf_n, \hat{\vx}^{n}_{k+1}) \right] \\ 
    &= \E_{\vx \sim \zeta^n_{1, k}} \left[B(\vpi_n, \vf_n, \vx)\right] - \E_{\vx \sim \zeta^n_{2, k}} \left[B(\vpi_n, \vf_n, \vx)\right] \\
    &= \E_{\vx \sim \zeta^n_{1, k}} \left[B(\vpi_n, \vf_n, \vx) - \Bar{B}\right] - \E_{\vx \sim \zeta^n_{2, k}} \left[B(\vpi_n, \vf_n, \vx) - \Bar{B}\right] \\
    &= \E_{\vx \sim \zeta^n_{1, k}}[h(\vx)] - \E_{\vx \sim \zeta^n_{2, k}}[h(\vx)].
\end{align*}
Note that $\E_{\vx \sim \zeta^n_{2, k}}[h(\vx)] = 0$ by the definition of $h$ and thus, 
\begin{equation}
\label{eq: kakade bound part 1}
    \E_{\vx \sim \zeta^n_{1, k}}[h(\vx)] - \E_{\vx \sim \zeta^n_{2, k}}[h(\vx)] = \E_{\vx \sim \zeta^n_{1, k}}[h(\vx)] \leq \sqrt{\E_{\vx \sim \zeta^n_{1, k}}[h^2(\vx)]}.
\end{equation}
In the following, we bound the term above \wrt the Chi-squared distance
\begin{align*}
    \E_{\vw^{n}_{k}}&\left[B(\vpi_n, \vf_n, \vx^{n}_{k+1}) - B(\vpi_n, \vf_n, \hat{\vx}^{n}_{k+1}) \right] = \E_{\vx \sim \zeta^n_{1, k}}[h(\vx)] - \E_{\vx \sim \zeta^n_{2, k}}[h(\vx)] \\
   &=  \int_{\setX} h(\vx) \left(1 - \frac{\zeta^n_{2, k}}{\zeta^n_{1, k}}\right) \zeta^n_{1, k}(d\vx)
   \leq \sqrt{\E_{\vx \sim \zeta^n_{1, k}}\left[h^2(\vx)\right]} \sqrt{d_{\chi}(\zeta^n_{2, k}, \zeta^n_{1, k})} \tag{\cite[Lemma C.2.,]{kakade2020information}}
\end{align*}
    With $d_{\chi}(\zeta^n_{2, k}, \zeta^n_{1, k})$ being the Chi-squared distance.
    \begin{equation*}
        d_{\chi}(\zeta^n_{2, k}, \zeta^n_{1, k}) = \int_{\setX}\frac{ \left( \zeta^n_{1, k} - \zeta^n_{2, k}\right)^2}{\zeta^n_{1, k}} (d\vx)
    \end{equation*}
Since both bounds from \Cref{eq: kakade bound part 1} and bound we got by applying \cite[Lemma C.2.,]{kakade2020information}, we can apply minimum and have:
\begin{equation*}
    \E_{\vw^{n}_{k}}\left[B(\vpi_n, \vf_n, \vx^{n}_{k+1}) - B(\vpi_n, \vf_n, \hat{\vx}^{n}_{k+1}) \right] \leq \sqrt{\E_{\vx \sim \zeta^n_{1, k}}\left[h^2(\vx)\right]} \sqrt{\min\left\{d_{\chi}(\zeta^n_{2, k}, \zeta^n_{1, k}), 1\right\}}
\end{equation*}
Therefore, following \citet[Lemma C.2.,]{kakade2020information} we get
\begin{align*}
    \E_{\vw^{n}_{k}}&\left[B(\vpi_n, \vf_n, \vx^{n}_{k+1}) - B(\vpi_n, \vf_n, \hat{\vx}^{n}_{k+1}) \right] \\ 
    &\leq \sqrt{\E_{\vx \sim \zeta^n_{1, k}}\left[h^2(\vx)\right]} 
    \min\left\{\nicefrac{1}{\sigma}\norm{\vf^{*}(\vx^{n}_{k}, \vpi_n(\vx^{n}_{k})) - \vf_n(\vx^{n}_{k}, \vpi_n(\vx^{n}_{k}))}, 1\right\} \\
    &\leq \sqrt{\E_{\vx \sim \zeta^n_{1, k}}\left[h^2(\vx)\right]} (1 + \sqrt{d}_x)\nicefrac{\beta_n}{\sigma} \norm{\vsigma_n(\vx^{n}_k, \vpi_n(\vx^n_k))}. \tag{\cite[Cor. 3]{sukhija2024optimistic}}
\end{align*}
Therefore, we have
\begin{align*}
  &\sum^{N-1}_{n=0} \sum^{H_n-1}_{k=0} \E_{\vx^{n}_{k}, \dots \vx^{0}_{1}|  \vx_0}\left[  \E_{\vw^{n}_{k}}\left[B(\vpi_n, \vf_n, \vx^{n}_{k+1}) - B(\vpi_n, \vf_n, \hat{\vx}^{n}_{k+1}) \right] \right] \\
&\leq \sum^{N-1}_{n=0} \sum^{H_n-1}_{k=0} \E_{\vx^{n}_{k}, \dots \vx^{0}_{1}|  \vx_0}\left[
  \sqrt{\E_{\vx \sim \zeta^n_{1, k}}\left[h^2(\vx)\right]} (1 + \sqrt{d}_x)\nicefrac{\beta_n}{\sigma} \norm{\vsigma_n(\vx^{n}_k, \vpi_n(\vx^n_k))} \right] \\
    &\leq \sum^{N-1}_{n=0} \sum^{H_n-1}_{k=0} (1 + \sqrt{d}_x)\nicefrac{\beta_n}{\sigma} \sqrt{\E_{\vx^{n}_{k}, \dots \vx^{0}_{1}|  \vx_0}\left[ \E_{\vx \sim \zeta^n_{1, k}}\left[h^2(\vx)\right]\right] \E_{\vx^{n}_{k}, \dots \vx^{0}_{1}|  \vx_0}\left[\norm{\vsigma_n(\vx^{n}_k, \vpi_n(\vx^n_k))}^2\right]} \\
    &\leq (1 + \sqrt{d}_x)\nicefrac{\beta_T}{\sigma} \sqrt{\sum^{N-1}_{n=0} \sum^{H_n-1}_{k=0}\E_{\vx^{n}_{k}, \dots \vx^{0}_{1}|  \vx_0}\left[ \E_{\vx \sim \zeta^n_{1, k}}\left[h^2(\vx)\right]\right]} \\ 
    &\times \sqrt{\sum^{N-1}_{n=0} \sum^{H_n-1}_{k=0}\E_{\vx^{n}_{k}, \dots \vx^{0}_{1}|  \vx_0}\left[\norm{\vsigma_n(\vx^{n}_k, \vpi_n(\vx^n_k))}^2\right]}
\end{align*}
Here, for the second and third inequality, we use Cauchy-Schwarz. 
Now we bound the two terms above individually.

First we bound $\E_{\vx \sim \zeta^n_{1, k}}\left[h^2(\vx)\right]$.
\begin{align*}
    \E_{\vx \sim \zeta^n_{1, k}}\left[h^2(\vx)\right] &=  \E_{\vx \sim \zeta^n_{1, k}}\left[(B(\vpi_n, \vf_n, \vx) - \Bar{B})^2\right] \\
    &= \E_{\vx \sim \zeta^n_{1, k}}\left[(B(\vpi_n, \vf_n, \vx) - \E_{\vx \sim \zeta^n_{2, k}} \left[B(\vpi_n, \vf_n, \vx)\right])^2\right] \\
    &\leq \left(\frac{C_2}{1-\hat{\lambda}}\right)^2 \E_{\vx \sim \zeta^n_{1, k}}\left[(2 + V^{\vpi_n}(\vx) + \E_{\vx \sim \zeta^n_{2, k}} \left[V^{\vpi_n}(\vx)\right])^2\right] \tag{\cref{thm: existence of average cost problem hallucinated system}} \\
    &\leq \left(\frac{C_2}{1-\hat{\lambda}}\right)^2 \E_{\vx \sim \zeta^n_{1, k}}\left[(2 + V^{\vpi_n}(\vx) + \gamma V^{\vpi_n}(\vx^n_{k}) + \hat{K})^2\right] \tag{\cref{lem: stability of hallucinated system}} \\
    &\leq \left(\frac{\sqrt{2}C_2}{1-\hat{\lambda}}\right)^2 \E_{\vx \sim \zeta^n_{1, k}}\left[(V^{\vpi_n}(\vx))^2 + (2 + \gamma V^{\vpi_n}(\vx^n_{k}) + \hat{K})^2\right] \\
    &\leq \left(\frac{\sqrt{2}C_2}{1-\hat{\lambda}}\right)^2 \left( \E_{\vx^n_{k+1}| \vx^n_{k}}\left[(V^{\vpi_n}(\vx_{k+1}))^2\right] + 2\gamma^2 (V^{\vpi_n}(\vx^n_{k}))^2 + 2 (2 + \hat{K})^2 \right) \\
\end{align*}
Furthermore, we have from \cref{lem:bounded_second_moment}.
\begin{align*}
   &\E_{\vx^{n}_{k}, \dots \vx^{0}_{1}|  \vx_0} \left[ \E_{\vx^n_{k+1}| \vx^n_{k}}\left[(V^{\vpi_n}(\vx_{k+1}))^2\right] + 2\gamma^2 (V^{\vpi_n}(\vx^n_{k}))^2 \right] \\
   &= \E_{\vx^{n}_{k+1}, \dots \vx^{0}_{1}|  \vx_0} \left[(V^{\vpi_n}(\vx_{k+1}))^2\right] + 2\gamma^2 \E_{\vx^{n}_{k}, \dots \vx^{0}_{1}|  \vx_0} \left[(V^{\vpi_n}(\vx^{n}_{k}))^2\right]
   \leq (1 + 2\gamma^2)D_3(\vx_0, K, \gamma, \nu).
\end{align*}
In the end, we get 
\begin{align*}
    &\sqrt{\sum^{N-1}_{n=0} \sum^{H_n-1}_{k=0}\E_{\vx^{n}_{k}, \dots \vx^{0}_{1}|  \vx_0}\left[ \E_{\vx \sim \zeta^n_{1, k}}\left[h^2(\vx)\right]\right]} \\
    &\leq \left(\frac{\sqrt{2}C_2}{1-\hat{\lambda}}\right) \sqrt{\sum^{N-1}_{n=0} \sum^{H_n-1}_{k=0} (1 + 2\gamma^2)D_3(\vx_0, K, \gamma, \nu) +  2 (2 + \hat{K})^2} \\
    &= \left(\frac{\sqrt{2}C_2}{1-\hat{\lambda}}\right) \sqrt{(1 + 2\gamma^2)D_3(\vx_0, K, \gamma, \nu) +  2 (2 + \hat{K})^2} \sqrt{\sum^{N-1}_{n=0} H_n} \\
    &= \left(\frac{\sqrt{2}C_2}{1-\hat{\lambda}}\right) \sqrt{(1 + 2\gamma^2)D_3(\vx_0, K, \gamma, \nu) +  2 (2 + \hat{K})^2} \sqrt{T}.
\end{align*}

Next, we apply \Cref{lemma: sum of uncertainties bound} for the second term.
\begin{equation*}
    \sqrt{\sum^{N-1}_{n=0} \sum^{H_n-1}_{k=0} \E_{\vx^{n}_{k}, \dots \vx^{0}_{1}|  \vx_0}\left[\norm{\vsigma_n(\vx^{n}_k, \vpi_n(\vx^n_k))}^2\right]} 
    \leq C'\sqrt{\Gamma_T} 
\end{equation*}
Here $\Gamma_T$ is the maximum information gain.

If we set $D_4(\vx_0, K, \gamma) = \frac{C'(1 + \sqrt{d_x})}{\sigma}\left(\frac{\sqrt{2}C_2}{1-\hat{\lambda}}\right) \sqrt{(1 + 2\gamma^2)D_3(\vx_0, K, \gamma, \nu) +  2 (2 + \hat{K})^2}$, we have
\begin{align*}
    &\sum^{N-1}_{n=0} \sum^{H_n-1}_{k=0} \E_{\vx^{n}_{k}, \dots \vx^{0}_{1}|  \vx_0}\left[  \E_{\vw^{n}_{k}}\left[B(\vpi_n, \vf_n, \vx^{n}_{k+1}) - B(\vpi_n, \vf_n, \hat{\vx}^{n}_{k+1}) \right] \right] \\ 
    &\leq (1 + \sqrt{d}_x)\nicefrac{\beta_T}{\sigma} \sqrt{\sum^{N-1}_{n=0} \sum^{H_n-1}_{k=0}\E_{\vx^{n}_{k}, \dots \vx^{0}_{1}|  \vx_0}\left[ \E_{\vx \sim \zeta^n_{1, k}}\left[h^2(\vx)\right]\right]} \\
    &\times \sqrt{\sum^{N-1}_{n=0} \sum^{H_n-1}_{k=0}\E_{\vx^{n}_{k}, \dots \vx^{0}_{1}|  \vx_0}\left[\norm{\vsigma_n(\vx^{n}_k, \vpi_n(\vx^n_k))}^2\right]} \\
    &\leq (1 + \sqrt{d}_x)\nicefrac{\beta_T}{\sigma}\left(\frac{\sqrt{2}C_2}{1-\hat{\lambda}}\right) \sqrt{(1 + 2\gamma^2)D_3(\vx_0, K, \gamma, \nu) +  2 (2 + \hat{K})^2} \sqrt{T} C'\sqrt{\Gamma_T} \\
    &\leq D_4(\vx_0, K, \gamma) \beta_T \sqrt{T \Gamma_T}
\end{align*}

\textbf{Proof for (B)}:
\begin{align*}
    \sum^{N-1}_{n=0} &\sum^{H_n-1}_{k=0} \E\left[ B(\vpi, \vf_n, \vx^{n}_{k}) - B(\vpi, \vf_n, \vx^{n}_{k+1})\right] =  \sum^{N-1}_{n=0} \E\left[B(\vpi, \vf_n, \vx^{n}_{0}) - B(\vpi, \vf_n, \vx^{n}_{H_n})\right] \\
    &\leq \frac{C_2}{1-\hat{\lambda}} \sum^{N-1}_{n=0} \left(2 + \E\left[V^{\vpi}(\vx^{n}_{0})  + V^{\vpi}(\vx^{n}_{H_n})\right]\right) \tag{\cref{thm: existence of average cost problem hallucinated system}}\\
    &\leq \frac{2C_2}{1-\hat{\lambda}} \sum^{N-1}_{n=0} \left(1 + D(\vx_0, K, \gamma) \right) \tag{\cref{lem:bounded first moment}}\\
    &= \frac{2C_2}{1-\hat{\lambda}} (1 + D(\vx_0, K, \gamma) ) N \\
    &= D_5(\vx_0, K, \gamma) N.
        \end{align*}
        Here $D_5(\vx_0, K, \gamma) = \frac{2C_2}{1-\hat{\lambda}} (1 + D(\vx_0, K, \gamma) )$.
From \Cref{lemma: number of statistical model computations} follows that $N < \frac{1}{\bar{K}_I}\Gamma_T$
To this end, we get for our regret
\begin{align*}
    R_T &= 
        \E\left[\sum^{N-1}_{n=0} \sum^{H_n-1}_{k=0} c(\vx^{n}_k, \vpi_n(\vx^{n}_k)) - A(\vpi^*)\right] \\
        &\leq D_4(\vx_0, K, \gamma) \beta_T \sqrt{T \Gamma_T} + D_5(\vx_0, K, \gamma) N \\
    & \leq D_4(\vx_0, K, \gamma) \beta_T \sqrt{T \Gamma_T} + D_5(\vx_0, K, \gamma) \frac{1}{\bar{K}_I}\Gamma_T
\end{align*}
    \end{proof}
This regret is sublinear for a very rich class of functions. 
We summarize bounds on $\Gamma_T$ from \cite{vakili2021information} in \Cref{table: gamma magnitude bounds for different kernels}.
Furthermore, note that $D_4(\vx_0, K, \gamma) \in (0, \infty)$ for all $\vx_0 \in \setX$ with $\norm{\vx_0} < \infty$, $K < \infty$, $\gamma \in (0, 1)$. The same holds for $D_5(\vx_0, K, \gamma)$. Moreover, since $V^{\vpi}(\vx)$ is $\Theta(\zeta(\norm{\vx}))$, both $D_4$ and $D_5$ are $\Theta(\zeta(\norm{\vx_0}))$. 

   \begin{table}[ht!]
\begin{center}
\caption{Maximum information gain bounds for common choice of kernels.}
\label{table: gamma magnitude bounds for different kernels}
\begin{tabular}{@{}lll@{}}
\toprule
Kernel&$k(\vx, \vx')$ & $\Gamma_T$ \\ \midrule
    Linear &$\vx^\top \vx'$   & $\mathcal{O}\left(d \log(T)\right)$                    \\
    RBF &$e^{-\frac{\norm{\vx - \vx'}^2}{2l^2}}$& $\mathcal{O}\left( \log^{d+1}(T)\right)$                    \\
    Matèrn &$\frac{1}{\Gamma(\nu)2^{\nu - 1}}\left(\frac{\sqrt{2\nu}\norm{\vx-\vx'}}{l}\right)^{\nu}B_{\nu}\left(\frac{\sqrt{2\nu}\norm{\vx-\vx'}}{l}\right)$  & $\mathcal{O}\left(T^{\frac{d}{2\nu + d}}\log^{\frac{2\nu}{2\nu+d}}(T)\right)$                     \\ \bottomrule
\end{tabular}
\end{center}
\end{table}

\clearpage
\section{Practical algorithm and Experimental Details}\label{sec: Experimental details}
\begin{algorithm}[t]
    \caption{Practical \textbf{\neorl:}}
    \begin{algorithmic}[]
        \STATE {\textbf{Init:}}{ Aleatoric uncertainty $\sigma$, Probability $\delta$, Statistical model $(\vmu_0, \vsigma_0, \beta_0(\delta))$}
        \FOR{$n=1, \ldots, N$}{
            \FOR{$h=1, \ldots, H$}{
            \vspace{-0.5cm}
            \STATE {
            \begin{align*}
                &\min_{\vu_{0:H_{\text{MPC}}-1}, \veta_{0;H_{\text{MPC}}-1}} \E \left[\sum_{h=0}^{H_{\text{MPC}}-1} c(\vhx_h, \vu_h) \right]; \vx_0 = \vx^n_h &&\text{\ding{228} Solve MPC problem}  \\
                  &(\vx^{h}_n, \vu^*_0, \vx^{h+1}_n) \leftarrow \textsc{Rollout}(\vu^*_0) \quad &&\text{\ding{228} Collect transition} 
                \end{align*}
                }
              }
                      \ENDFOR
                           \STATE{
                Update $(\vmu_n, \vsigma_n, \beta_n) \leftarrow \setD_n$
              }
              }
        \ENDFOR
    \end{algorithmic}
    \label{alg:neorl_pratical}
\end{algorithm}

In this section, we provide the practical algorithm~\cref{alg:neorl_pratical}, provide all hyperparameters used in our experiments in \cref{tab:environment_hyperparams}, and the cost function for the environments. All our experiments within 1-8 hours\footnote{based on the environment} on a GPU (NVIDIA GeForce RTX 2080 Ti). For \neorl, we use $\beta_n=2$ for all the experiments, except for the Swimmer and the SoftArm environment where we use $\beta_n=1$.
\begin{table}[ht]
\centering
    \caption{Hyperparameters for results in~\Cref{sec:experiments}.}
    \label{tab:environment_hyperparams}
\begin{adjustbox}{max width=\linewidth}\begin{threeparttable}
\begin{tabular}{c|ccccc|ccccc|cc}
\midrule
Environment & \multicolumn{5}{c|}{iCEM parameters}                                                                                                                                                                                                                                        & \multicolumn{5}{c|}{Model training parameters}                                                                                                                                         & \multicolumn{1}{l}{} & \multicolumn{1}{l}{}                                    \\
            & 
            \begin{tabular}[c]{@{}c@{}}
            
            Number of \\ samples\end{tabular} & \begin{tabular}[c]{@{}c@{}}Number of \\ elites\end{tabular} & \begin{tabular}[c]{@{}c@{}}Optimizer\\ steps\end{tabular} & $H_{\text{MPC}}$ & Particles & \begin{tabular}[c]{@{}c@{}}Number of \\ ensembles\end{tabular} & \begin{tabular}[c]{@{}c@{}}Network\\ architecture\end{tabular} & Learning rate & Batch size & \begin{tabular}[c]{@{}c@{}}Number of\\ epochs\end{tabular} & H                    & \begin{tabular}[c]{@{}c@{}}Action\\ Repeat\end{tabular} \\ \midrule
Pendulum-GP & 500                                                          & 50                                                          & 10                                                        & 20      & 5                                                             & -                                                              & -                                                              & 0.01          & 64         & -                                                          & 10                   & 1                                                       \\
Pendulum    & 500                                                          & 50                                                          & 10                                                        & 20      & 5                                                              & 10                                                             & $256 \times 2$                                                     & 0.001         & 64         & 50                                                         & 10                   & 1                                                       \\
MountainCar & 1000                                                         & 100                                                         & 5                                                         & 50      & 5                                                              & 10                                                             & $256 \times 2$                                                     & 0.001         & 64         & 50                                                         & 10                   & 2                                                       \\
Reacher     & 1000                                                         & 100                                                         & 10                                                        & 50      & 5                                                                & 10                                                             & $256 \times 2$                                                    & 0.001         & 64         & 50                                                         & 10                   & 2                                                       \\
CartPole    & 1000                                                         & 100                                                         & 10                                                        & 50      & 5                                                                & 10                                                             & $256 \times 2$                                                     & 0.001         & 64         & 50                                                         & 10                   & 2                                                       \\
Swimmer     & 500                                                          & 50                                                          & 10                                                        & 30      & 5                                                              & 10                                                             & $256 \times 4$  & 0.00005       & 64         & 100                                                        & 200                  & 4                                                       \\
SoftArm     & 500                                                          & 50                                                          & 10                                                        & 20      & 5                                                             & 10                                                             & $256 \times 4$  & 0.00005       & 64         & 50                                                         & 20                   & 1                                                       \\
RaceCar     & 1000                                                         & 100                                                         & 10                                                        & 50      & 5                                                            & 10                                                             & $256 \times 2$                                                      & 0.001         & 64         & 50                                                         & 10                   & 1                                                      
\end{tabular}
\end{threeparttable}\end{adjustbox}
\end{table}

\begin{table}[ht]
    \centering
    \caption{Cost function for the environments presented in \Cref{sec:experiments}.}
    \label{tab:environment_rewards}
\begin{tabular}{cc}
\midrule
\textbf{Environment}                   & \textbf{Cost} $c(\vx_t, \vu_t)$                                                                                                      \\
\midrule
Pendulum       & $\theta_t^2 + 0.1 \Dot{\theta}_t + 0.1 u_t^2$                      \\
MountainCar                   & $0.1 u_t^2  + 100 (1\{\vx_t \not\in \vx_{\text{goal}}\})$                            \\
Reacher       & 
$\norm{\vx_t - \vx_{\text{target}}} + 0.1 \norm{u_t} $ \\
CartPole      & $\norm{\vx^{\text{pos}}_t - \vx^{\text{pos}}_{\text{target}}}^2 + 10 (\cos(\theta_t) - 1)^2 + 0.2 \norm{u_t}^2$                                                                                     \\
Swimmer & $\norm{\vx_t - \vx_{\text{target}}}$                                                                                      \\
SoftArm & $\norm{\vx_t - \vx_{\text{target}}}$                                                                                      \\
RaceCar         & $\norm{\vx_t - \vx_{\text{target}}}$                                \\
\hline \\
\end{tabular}
\end{table}

\end{document}